\newcolumntype{H}{>{\setbox0=\hbox\bgroup}c<{\egroup}@{}}
\newcommand{\tIcal}{{\tilde \Ical}}
\newcommand{\bdelta}{{\bar \delta}}
\newcommand{\bD}{{\Dcal}}
\newcommand{\hA}{{\hat A}}
\newcommand{\ha}{{\hat a}}
\newcommand{\hT}{{\hat T}}
\newcommand{\hIcal}{{\hat \Ical}}
\newcommand{\hXi}{{\varpi}}
\newcommand{\tbeta}{{\tilde \beta}}
\newcommand{\tYcal}{{\tilde \Ycal}}
\newcommand{\Arm}{{\mathrm{A}}}
\newcommand{\Brm}{{\mathrm{B}}}
\newcommand{\Crm}{{\mathrm{C}}}
\newcommand{\Drm}{{\mathrm{D}}}
\newcommand{\Grm}{{\mathrm{G}}}
\newcommand{\Lrm}{{\mathrm{L}}}
\newcommand{\cX}{{\mathcal{X}}}
\newcommand{\cY}{{\mathcal{Y}}}
\newcommand{\cZ}{{\mathcal{Z}}}
\newcommand{\tZ}{{\tilde Z}}
\newcommand{\tZcal}{{\tilde \Zcal}}
\newcommand{\tf}{{\tilde f}}
\newcommand{\tphi}{\tilde \phi}
\newcommand{\hL}{{\hat L}}
\newcommand{\hQ}{{\hat Q}}
\newcommand{\tOcal}{{\tilde \Ocal}}
\newcommand{\reve}{\color{black}}
\icmltitlerunning{How Does Information Bottleneck Help Deep Learning?}
\begin{document}

\twocolumn[

\icmltitle{How Does Information Bottleneck Help Deep Learning?}



\icmlsetsymbol{equal}{*}

\begin{icmlauthorlist}
\icmlauthor{Kenji Kawaguchi}{equal,xxx}
\icmlauthor{Zhun Deng}{equal,yyy}
\icmlauthor{Xu Ji}{equal,comp}
\icmlauthor{Jiaoyang Huang}{sch}
\end{icmlauthorlist}

\icmlaffiliation{xxx}{NUS}
\icmlaffiliation{yyy}{Columbia University}
\icmlaffiliation{comp}{Mila}
\icmlaffiliation{sch}{University of Pennsylvania}

\icmlcorrespondingauthor{Kenji Kawaguchi}{kenji@nus.edu.sg}

\icmlkeywords{Machine Learning, ICML}

\vskip 0.3in
]



\printAffiliationsAndNotice{\icmlEqualContribution} 

\begin{abstract}
Numerous deep learning algorithms have been inspired by and understood via the notion of information bottleneck, where unnecessary information is (often implicitly) minimized while task-relevant information is maximized. However, a rigorous argument for justifying why it is desirable to control information bottlenecks has been elusive. In this paper, we provide the first rigorous learning theory for justifying the benefit of information bottleneck in deep learning by mathematically relating information bottleneck to generalization errors. Our theory proves that controlling information bottleneck is one way to control generalization errors in deep learning, although it is not the only or necessary way. We investigate the merit of our new mathematical findings with experiments across a range of architectures and learning settings. In many cases, generalization errors are shown to correlate with the degree of information bottleneck: i.e., the amount of the unnecessary information at hidden layers. This paper provides a theoretical foundation for current and future methods through the lens of information bottleneck. Our new generalization bounds scale with the degree of information bottleneck, unlike the previous bounds that scale with the number of parameters, VC dimension, Rademacher complexity, stability or robustness. Our code is publicly available at: \href{https://github.com/xu-ji/information-bottleneck}{https://github.com/xu-ji/information-bottleneck}
\end{abstract}

\section{Introduction}\label{sec:intro}

The information bottleneck principle \citep{tishby1999information,slonim2000document} has been a great concept in balancing the trade-off between the complexity of representation and the power of predicting. It is based on the notion of minimal sufficient statistics for extracting information about target $Y\in \cY$ into representation $Z=\phi(X)\in\cZ$ from input $X\in\cX$. An information bottleneck imposes regularization at representation $Z$ by minimizing the mutual information between $X$ and $Z$, $I(X;Z)$, while  maximizing  the mutual information between   $Y$ and   $Z$, $I(Y;Z)$. 

In practice $I(X; Z)$ is often minimized implicitly, e.g. as a result of stochastic gradient descent (SGD) or an architecture choice \citep{shwartz2017opening}. An explicit minimization of $I(X; Z)$ has been also adopted in the machine learning literature as a regularization technique  \citep{alemi2016deep,alemi2018fixing}, where the mutual information is either estimated by averaging log probabilities of latent representations over empirical samples  or replaced by a tractable upper bound \citep{kirsch2020unpacking,kolchinsky2017estimating,alemi2016deep}. More generally, the notion of bottlenecks on representation expressivity has been used in work on structural inductive biases~\citep{goyal2022inductive}.

\begin{figure}[t!]
    \centering
    \includegraphics[width=0.4\textwidth]{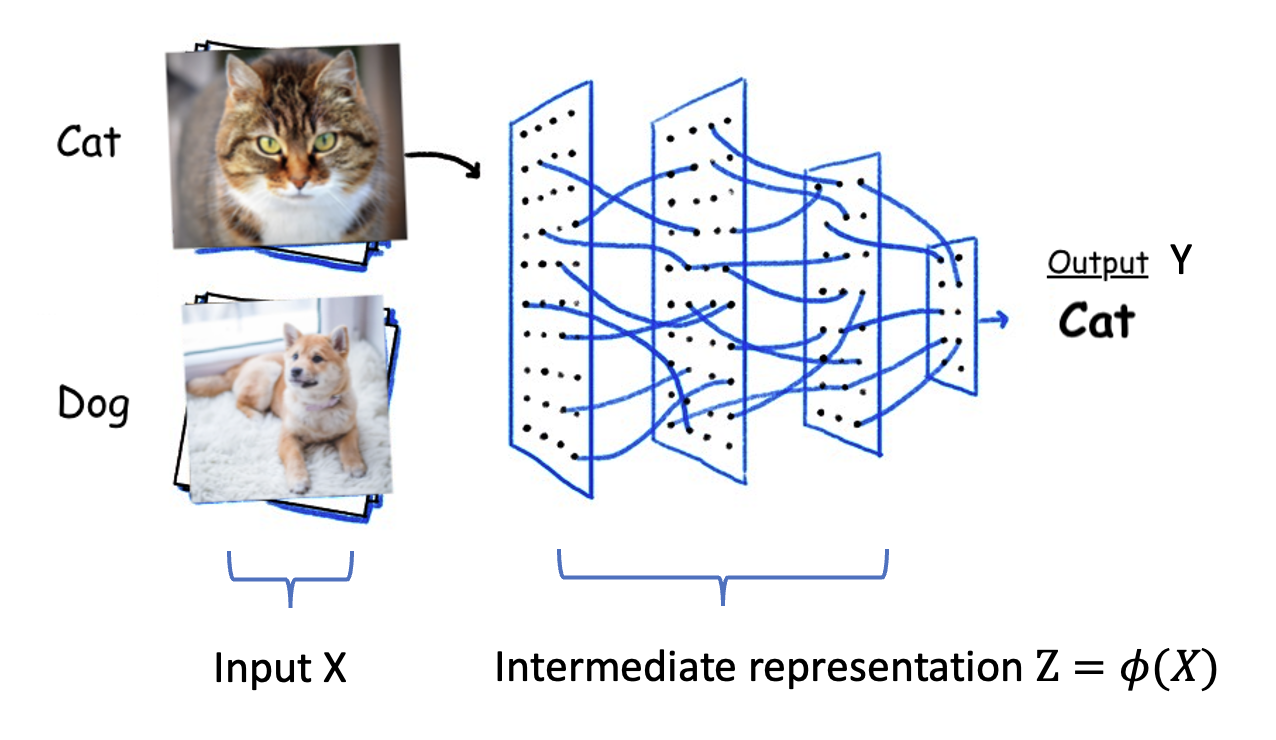}%
    \caption{Illustration of $X$, $Y$ and $Z$. This paper studies the relationship between performances of deep neural networks and the mutual information between $X$ and $Z$. Our theory proves that controlling this mutual information is one way to control performances in deep learning, although it is not the necessary way.}
    \label{fig:ill}
\end{figure}

Consequently, understanding the connection between the information bottleneck regularizer $I(X;Z)$ and the generalization ability of machine learning models has become an active area of research. Given its importance, \citet{shwartz2019representation} provided the following conjecture:
\begin{conjecture} \label{thm:shwartz}
(\textbf{Informal version} (\citealp{shwartz2019representation}))
With probability at least $1-\delta$ over the training data $s=\{(x_i,y_i)\}_{i=1}^n$ drawn from the same distribution as a random variable pair $(X,Y)$, for the generalization error $\Delta(s)=\EE_{X,Y}[\ell(f^s(X),Y)] - \frac{1}{n} \sum_{i=1}^n \ell(f^s(x_{i}), y_i)$, there is a bound obeys the following form:
 \begin{align} \label{eq:new:5}
\Delta(s)\leq  \sqrt{\frac{2^{I(X;Z_l^s)} + \log \frac{2}{\delta}}{2n}},
\end{align}
where $f^s$ is the full model obtained by training and $Z^s_l=\phi^s_l(X)$ is the output of the an intermediate $l$-layer encoder $\phi^s_l$ of the model, i.e. representation obtained after passing through the first $l$ layers.
\end{conjecture}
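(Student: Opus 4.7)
The plan is to derive the bound via a classical finite-hypothesis-class argument in which the effective size of the hypothesis class is controlled by $2^{I(X; Z_l^s)}$. First I would condition on the encoder $\phi^s_l$ and factor the learned predictor as $f^s = g^s \circ \phi^s_l$, where $g^s: \cZ \to \cY$ is the part of the network after layer $l$. The generalization gap $\Delta(s)$ then becomes the generalization gap of $g^s$ acting on the representation variable $Z_l^s = \phi^s_l(X)$.

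Second, I would quantize $\cZ$ via a typical-set / asymptotic equipartition property (AEP) argument. Because $\phi^s_l$ is deterministic, $H(Z_l^s \mid X) = 0$ and hence $I(X; Z_l^s) = H(Z_l^s)$; consequently there is a subset $T \subseteq \cZ$ with $|T| \leq 2^{I(X; Z_l^s) + o(1)}$ that carries almost all the mass of $Z_l^s$. Replacing $g^s$ by its quantized restriction to $T$ changes the loss only negligibly under standard boundedness / Lipschitz assumptions. For bounded (e.g., binary) $\cY$, the space of functions $T \to \cY$ has cardinality $|\cY|^{|T|}$, so the resulting effective hypothesis class $\mathcal{H}_{\mathrm{eff}}$ satisfies $\log|\mathcal{H}_{\mathrm{eff}}| \leq 2^{I(X; Z_l^s)} \log|\cY|$.

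Third, the finish is the standard Hoeffding-plus-union-bound step. For a fixed $h$, Hoeffding gives $|\EE[\ell(h(X),Y)] - \tfrac{1}{n}\sum_i \ell(h(x_i),y_i)| \leq \sqrt{\log(2/\delta)/(2n)}$ with probability at least $1-\delta$. Union-bounding over $\mathcal{H}_{\mathrm{eff}}$ replaces $\log(2/\delta)$ by $\log(2|\mathcal{H}_{\mathrm{eff}}|/\delta)$, producing exactly the form $\sqrt{(2^{I(X;Z_l^s)} + \log(2/\delta))/(2n)}$ claimed in \eqref{eq:new:5}.

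The principal obstacle, and the reason the statement is only a conjecture in its current form, is that $I(X; Z_l^s)$ is a data-dependent quantity: one cannot fix $\mathcal{H}_{\mathrm{eff}}$ in advance. A rigorous proof therefore needs a peeling / stratified union bound over dyadic bins of $I(X; Z_l^s)$, absorbing the resulting $\log\log$ overhead into the $\log(2/\delta)$ term. A second subtlety is that the AEP only controls a high-probability subset of $\cZ$; for continuous $Z$ or an unbounded loss, additional tail or Lipschitz arguments are needed to control atypical representations, and the naive single-sample version of the typical set used above must be replaced by a more careful covering of the marginal law of $Z_l^s$. These two issues, rather than the Hoeffding/union-bound skeleton itself, are where the real work of a formal proof lies.
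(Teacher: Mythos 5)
There is a genuine gap, and it is not the one you flag. The statement you are proving is a \emph{conjecture} that this paper does not prove; it explicitly argues that the conjecture is invalid as stated when the encoder $\phi^s_l$ is learned from $s$, citing the counterexample of \citet[Example 3.1]{hafez2020sample}: one can make $I(X;Z_l^s)$ small while the encoder's parameters overfit arbitrarily, so no bound of the form \eqref{eq:new:5} can hold. Your proposed fix---a peeling/stratified union bound over dyadic bins of the data-dependent quantity $I(X;Z_l^s)$---cannot repair this, because the obstruction is not that the scalar $I(X;Z_l^s)$ is random. The typical set $T$ and hence your effective class $\mathcal{H}_{\mathrm{eff}}=\{h:T\to\cY\}$ are determined by $\phi^s_l$ itself, which is chosen using $s$; exponentially many distinct encoders (with disjoint typical sets and decoders) share the same value of $I(X;Z_l^s)$, so conditioning on a bin does not yield a hypothesis class fixed before the draw of $s$, and Hoeffding plus a union bound does not apply. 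Moreover, once you ``condition on the encoder,'' the samples $z_i=\phi^s_l(x_i)$ are no longer an i.i.d.\ draw independent of the class. This is exactly why the paper's Theorem~\ref{thm:2} must introduce the additional term $I(\phi_{l}^{S};S)$ measuring the encoder--data dependence; without such a term the statement is false, so the ``real work'' is not where you locate it.

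Even in the setting where your skeleton is sound---$\phi^s_l$ fixed independently of $s$, which is the hypothesis of Theorem~\ref{thm:1}---your route is the standard argument the paper attributes to prior work (PAC bound for a finite class whose cardinality is bounded via $H(Z_l^s)$), and it necessarily produces the exponential factor: union-bounding over all maps $T\to\cY$ gives $\log|\mathcal{H}_{\mathrm{eff}}|\sim|T|\sim 2^{I(X;Z_l^s)}$. The paper's actual proof avoids this entirely: it decomposes the gap using the class-conditional typical sets of $Z_l^s$ given $Y$, bounds the atypical contribution via McDiarmid's inequality, and bounds the typical contribution by a concentration inequality for a multinomial distribution supported on the at most $2^{H(Z_{l,y}^s)+\epsilon}$ typical atoms, yielding $\tilde\Ocal\bigl(\sqrt{(I(X;Z_l^s|Y)+1)/n}\bigr)$---linear rather than exponential in the mutual information, and with the smaller conditional quantity $I(X;Z_l^s|Y)$. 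So your proposal, if completed, would recover only the (weaker, and in the learned-encoder case unprovable) conjectured form, not the results the paper establishes.
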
 
However, this appealing conjecture cannot be applied to explain the success of information bottleneck principle in practice. First, the proof of the bound in this conjecture is \textit{incomplete}. More importantly, as pointed out by \citet{hafez2020sample}, there is a critical drawback in the formulation of this conjecture: \citet{shwartz2019representation} implicitly assumes the independence of $Z_l^s$ and $s$ in the arguments of this conjecture, which means that they treated the encoder $\phi_l^s$ as  fixed and independent of training data $s$.
Indeed, \citet{hafez2020sample} constructed a counterexample to show that the conjecture is  invalid  
when the encoder $\phi_l^s$ is  also learned with the training data $s$. This is because minimizing $I(X; Z^s_l)$ only does not sufficiently constrain the complexity of $\phi_l^s$, allowing it  to  arbitrarily overfit to the training data with a large generalization gap, in contradiction to the inequality \eqref{eq:new:5}.
In other words, when selecting the encoder's parameters is part of the learning problem, measuring compression via $I(X; Z^s_l)$ does not capture the degree of overfitting of the encoder's parameters.

Accordingly, as a first step towards proving a sample complexity bound via information bottleneck,  \Citet{hafez2020sample} focused on the \textit{input} layer and proved the following input compression bound for binary classification: if  $\Ycal=\{0,1\}$ and $\ell$ is the 0--1 loss, then
for any  $\delta > 0$, with probability at least $1 - \delta$ over the training dataset $s$, they roughly prove
\begin{align} \label{thm:hafez}
\Delta(s)=\tilde \Ocal\left(\sqrt{\frac{2^{H(X)}}{n}}\right),
\end{align} 
up to a factor $2^{1/\epsilon}$ for some constant $\epsilon$ satisfying  $
\epsilon =\Omega \big(\sqrt{(2^{6 H(X)/\epsilon} + \log (1/\delta) + 2)/n}\big).$
However, despite the popularity of the of information bottleneck principle and active usage in practice, this is still far from being a valid sample complexity bound, as noted by \citet{hafez2020sample}. 

To the best of our knowledge, in the current literature, much of the work on information bottleneck assumes its benefits, but no rigorous and valid sample complexity bounds have been proposed to justify why it is desirable to control information bottlenecks. In this paper, we make the first step to fill in this gap and provide an answer to the following open problem:
\begin{center}
``\textit{How does information bottleneck help deep learning from the perspective of statistical learning theory?}"
\end{center}

As our \textbf{\textit{first contribution}}, we resolve this open question by providing  novel and complete  proofs for end-to-end learning of  intermediate representations (Theorem~\ref{thm:2}). To the best of our knowledge, we provide the first rigorous generalization bound for information bottleneck in the case of learning representations, showing that simplicity in  both the representation and representation function are factors that support generalization.

As our \textbf{\textit{second contribution}}, an intermediate step and byproduct of our novel proof for Theorem~\ref{thm:2} not only completes the proof of Conjecture~\ref{thm:shwartz}, where $Z$ is treated as fixed random variable and independent of the training data $s$, it also significantly improve the previous bound in the conjecture. We show the  generalization error  roughly (with high probability) as 
 \begin{align*} 
\tilde \Ocal\left(\sqrt{\frac{I(X;Z^s_l| Y)+1}{n}}\right) \quad \text{as $n\rightarrow \infty$}.
\end{align*} 
This not only improves the numerator of the bound from an exponential dependence to linear dependence on mutual information, but also improves $I(X;Z^s_l)$ to a smaller quantity $I(X;Z^s_l| Y)$. More importantly, it is of independent interest and applicable to cases in transfer learning and unsupervised learning.

Finally, in Section~\ref{sec:experiments}, we consolidate our theoretical findings by comprehensive experiments on our bounds and related generalization prediction metrics, finding that empirical estimates of the main factors in our bounds are strong predictors of the generalization gap.

\section{Preliminaries}\label{sec:pre}
In this section, we describe the notations we use and settings we mainly consider.

\paragraph{Notation.}

We are given a training dataset $s=((x_{i},y_{i}))_{i=1}^n \sim \Pcal^{\otimes n}$ of  $n$ samples where $x_{i} \in \Xcal$ and $y_i \in \Ycal$ are i.i.d. drawn from a joint distribution $\Pcal$ over $\cX\times\cY$. We want to analyze the generalization  gap, i.e., the gap between the expected loss and the training loss, which is defined as
 \begin{align*}
\Delta(s):=\EE_{(X,Y) \sim \Pcal}[\ell(f^{s}(X),Y)] - \frac{1}{n} \sum_{i=1}^n \ell(f^{s}(x_{i}), y_i), 
\end{align*} 
where $\ell:\RR^{m_y} \times\Ycal \rightarrow \RR_{\ge 0}$ is a bounded per-sample loss, and $f^s:\cX\mapsto \RR^{m_y}$ represents a deep neural network learned with a given training dataset $s$. Here, $X$ and $Y$ are the corresponding random variables for $x_i$ and $y_i$ with $(X,Y) \sim \Pcal$.
 We use symbol $\circ$ to represent the composition of functions and the notation of $[D+1]=\{1,2,\dots,D+1\}$. 
We define the random variable of the output of the $l$-th layer by 
\begin{align} \label{eq:new:7}
Z_l^s=\phi_{l}^{s}(X), 
\end{align} 
where $\phi_l^s$ is the map for the first $l$ layer with with $\phi_{l}^{s}(x) \in\Zcal_{l}^s$. 
That is, for any layer index $l \in [D+1]$, we can decompose the neural network  $f^s$ by
 \begin{align}
f^{s}= g_{l} ^{s}\circ \phi_{l}^{s},
\end{align} 
where  
$g_{l}^s$ is the map for the rest of the layers after $l$ layers. For convenience, we refer to $\phi_l^s$  as the encoder and to $g_{l}^s$ as the decoder, though it is unnecessary to have an explicit structure of an encoder and a decoder.  Here, the case of $l=1$ corresponds to the input layer where $\phi_{1}^{s}(x)=x$ and $g_1^{s}(x)=f^{s}(x)$. The case of $l=D+1$ corresponds to the output layer  where $\phi_{D+1}^{s}(x)=f^s(x)$ and $g_{D+1}^{s}(q)=q$. 
$f^s$ can also be decomposed as 
$$f^s = h_{D+1} ^{s}\circ h_{D} ^{s}\circ h_{D-1} ^{s}\circ \cdots \circ h_1^s,$$ 
where $h_l^s$ represents the computation of the $l$-th layer; i.e., $\phi_{l}^s=h_{l} ^{s}\circ h_{l-1} ^{s}\circ \cdots \circ h_1^s$ and $g_{l} ^s=h_{D+1} ^{s}\circ h_{D} ^{s}\circ \cdots \circ h_{l+1}^s$. 

We use $\Acal$ to denote the learning algorithm that returns the output functions of each layer; i.e.,  $\Acal(s)=\{h_{l}^{s}\}_{l=1}^{D+1} $. Then, by taking a subset of the output coordinates, we define $\tilde  \Acal_l(s)=\{h_{k}^{s}\}_{k=1}^l$. Finally, by composing the outputs of $\tilde \Acal_l$, we define $\Acal_l(s)=h_{l} ^{s}\circ h_{l-1} ^{s}\circ \cdots \circ h^s_1=\phi_{l}^s  \in \Mcal_l$ (where $\{\Mcal_l\}_l$'s are families of functions). Define the maximum loss  
$$\Rcal(f^{s})= \sup_{(x,y) \in \Xcal \times \Ycal} \ell(f^{s}(x), y.)$$
We then define the random variable of the encoder of the $l$-th layer by
\vspace{-0.2cm}
 \begin{align} \label{eq:new:8}
\phi_{l} ^{s}=\Acal_l (s).
\end{align} 
\noindent{\textbf{Presumption.}} Following the previous work   \citep{shwartz2019representation}, we consider  the setting of $|\Xcal| < \infty$ and  $|\Mcal_l | < \infty$. This  is the natural setting with digital computers (e.g., using floating point). In this setting,  the mutual information that we consider are all finite and thus all bounds are nontrivial.  Similar restrictions are commonly considered in theory work involving mutual information to avoid the issue of infinite mutual information; for example, in \citet{xu2017information}, they consider countable hypothesis space. We follow the above setting in our main results, but we also show that those requirements can be relaxed: see more details in Section~\ref{sec:ext} and Appendix~\ref{app:10}.

\section{Main Results} \label{sec:analysis}

In this section, we establish sample complexity bounds to connect information bottlenecks and generalization errors. We start with completing and improving the previous results in the setting where the encoder $\phi_l^s$ is treated as fixed and independent of training data $s$ \citep{shwartz2019representation,hafez2020sample} in Section \ref{s:independent_encoder}. It will serve as an important intermediate step toward our final result, where we extend the argument to deal with the main case of our interest -- learning the encoder $\phi_l^s$ with  $s$ in Section \ref{s:dependent_encoder}.

\subsection{Encoder independent with the training data} \label{s:independent_encoder}

The following theorem  shows  that we can indeed optimize the control of expected loss by minimizing the conditional mutual information $I(X;Z_{l}^s|Y)$ and the training loss if the encoder $\phi_l^s$ is fixed and independent of training data $s$.\footnote{Here, we still use the superscript $s$ for various quantities to maintain notation consistency. Theorem \ref{thm:1} considers encoders that are independent of $s$, while Theorem \ref{thm:2} and the rest of this paper consider encoders that are dependent of $s$.} Even though this simpliefied case is just an intermediate step towards our final results, Theorem \ref{thm:1} is still useful and of independent interest. For example, it is applicable when the encoder is learned with data independent of $s$, such as in certain cases in transfer learning and unsupervised learning. 
\begin{theorem} \label{thm:1}
Let $l \in \{1,\dots,D\}$. Suppose that $\phi^s_l$ is fixed independently of the training dataset $s$. Then, for any $\delta>0$, with probability at least $1-\delta$ over training data $s$, the following holds:
\begin{align}  \label{eq:new:6}
\Delta(s)
\le  G_3^l  \sqrt{ \frac{ I(X;Z_{l}^s|Y)\ln(2) +\Gcal_2^l}{n}} + \frac{G_1^l(0)}{\sqrt{n}},
\end{align}
where  $G_1^{l}(0)=\tOcal(1)$, $\Gcal_2^l=\tOcal(1)$, and $G_3^{l}=\tOcal(1)$, as $n\rightarrow \infty$. The formula of $G_1^{l}(0)$, $\Gcal_2^l$ and $G_3^{l}$  are given in Appendix~\ref{app:12}.
\end{theorem}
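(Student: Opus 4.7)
My plan is to reduce the generalization-gap analysis to the induced space $(Z_l^s, Y) = (\phi_l(X), Y)$ using the fixed encoder, then apply an information-theoretic concentration bound that is linear in the conditional mutual information rather than exponential in mutual information as in Conjecture~\ref{thm:shwartz}. Since $\phi_l^s$ does not depend on $s$, I can write $\ell(f^s(x), y) = L^s(\phi_l^s(x), y)$ with $L^s(z,y) := \ell(g_l^s(z), y)$, so that
\[
\Delta(s) = \EE_{(X,Y)\sim \Pcal}[L^s(\phi_l^s(X), Y)] - \frac{1}{n}\sum_{i=1}^n L^s(\phi_l^s(x_i), y_i)
\]
is a generalization gap for the data-dependent loss $L^s$ on the i.i.d. transformed samples $(\phi_l^s(x_i), y_i)$. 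Crucially, because $Z_l^s = \phi_l^s(X)$ is a deterministic function of $X$, the complexity term collapses to a conditional entropy, $I(X; Z_l^s \mid Y) = H(Z_l^s \mid Y)$, which is what I actually need to control.

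The core step is to stratify by $Y$ and control per-stratum deviations in a way whose aggregate cost is linear in $H(Z_l^s \mid Y)$. Splitting $\EE_{(X,Y)}[L^s(\phi_l^s(X), Y)] = \EE_Y[\EE_{Z\mid Y}[L^s(Z,Y)]]$ and the corresponding empirical average by grouping samples with equal $y$, the effective alphabet that matters per stratum $y$ has log-size $H(Z_l^s \mid Y=y)$. To recover the linear-in-MI rate inside the square root, I avoid a naive union bound over the roughly $2^{H(Z_l^s \mid Y = y)}$ typical $z$-values (which is what caused Conjecture~\ref{thm:shwartz}'s exponential rate), and instead invoke a change-of-measure / Donsker--Varadhan-type exponential-moment inequality whose KL penalty, after taking the $Y$-expectation, collapses to $H(Z_l^s \mid Y)\ln 2 = I(X; Z_l^s \mid Y)\ln 2$, giving the desired leading factor. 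The presumptions $|\Xcal|, |\Mcal_l| < \infty$ let $\log|\Mcal_l|$ and $\log(1/\delta)$ be absorbed into the $\tOcal(1)$ constants $\Gcal_2^l$ and $G_3^l$, with any residual decoder-complexity contribution surfacing as the additive $G_1^l(0)/\sqrt{n}$ lower-order term.

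The main obstacle I foresee is disentangling the data-dependence of $g_l^s$ from the information-theoretic term: without care, routing the learned decoder through $\Mcal_l$ via a uniform bound reintroduces the exponential factor and nullifies the gain over Conjecture~\ref{thm:shwartz}. The structural fact that saves the argument is that $\phi_l^s$ is independent of $s$, so $g_l^s$ only interacts with the distribution of $X$ through $Z_l^s$; this is what lets the change-of-measure step use a reference measure depending only on the marginal law of $Z_l^s \mid Y$ rather than on $X$ itself, so that the KL contracts via the data-processing inequality to a quantity bounded by $I(X; Z_l^s \mid Y)$. The final step is to pass from the exponential-moment inequality to the claimed high-probability tail bound via Markov's inequality and to verify that $G_1^l(0), \Gcal_2^l, G_3^l$ are all $\tOcal(1)$ as $n \to \infty$, which follows because each absorbs only logarithms of $|\Mcal_l|$, $|\Xcal|$, $\Rcal(f^s)$, and $1/\delta$.
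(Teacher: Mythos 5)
Your reduction to the induced space $(Z_l^s,Y)$ and the observation $I(X;Z_l^s\mid Y)=H(Z_l^s\mid Y)$ for a deterministic encoder are fine, but the core of your argument is an inequality that you never exhibit and that, as stated, does not exist: a Donsker--Varadhan / change-of-measure bound on the empirical-vs-population gap of the \emph{data-dependent} loss $L^s(z,y)=\ell(g_l^s(z),y)$ whose penalty "collapses to $H(Z_l^s\mid Y)\ln 2$." Donsker--Varadhan penalties are KL divergences between two measures; the conditional entropy $H(Z_l^s\mid Y)$ is not such a KL (relative to a uniform reference it appears as $\log|\Zcal_l|-\mathrm{KL}$, with the wrong sign), and the standard information-theoretic generalization route of this type produces $I(g_l^S;S)$-style terms over the randomness of the training set, not $I(X;Z_l^s\mid Y)$, which lives on the randomness of a fresh test input. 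Moreover, saying that $g_l^s$ "only interacts with $X$ through $Z_l^s$" does not neutralize the dependence problem: the decoder is chosen after seeing the empirical distribution of $(z_i,y_i)$, so applying an exponential-moment inequality with $h=L^s$ fixed is invalid, and restoring validity requires uniformity over decoders, which is exactly the step that (if done naively over the $\approx 2^{H(Z_l^s\mid Y)}$ typical values) re-creates the exponential blow-up of Conjecture~\ref{thm:shwartz}. Your claim that "any residual decoder-complexity contribution" goes into $G_1^l(0)/\sqrt{n}$ also does not match the theorem: in the paper $G_1^l(0)$ has nothing to do with decoder complexity; it comes from the mass of $Z_l^s$ falling outside a typical set.

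For comparison, the paper's proof gets the linear-in-$I(X;Z_l^s\mid Y)$ rate by a different mechanism: it builds, per class $y$, a typical set of $Z_{l,y}^s$ (McDiarmid's inequality with respect to the nuisance variable gives both $\PP(Z\notin\Zcal^s_{\gamma,y}\mid Y=y)\le\gamma/\sqrt n$ and $|\Zcal^s_{\gamma,y}|\le 2^{H(Z_y)+\epsilon}$), decomposes $\Delta(s)$ into in-typical and out-of-typical parts, controls the out-of-typical part by the $G_1^l(0)/\sqrt n$ term, and controls the in-typical part by concentrating the \emph{empirical frequencies} of the finitely many typical atoms of $(Z,Y)$ via a multinomial deviation bound, taking a union bound over those atoms. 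The union bound costs only $\ln T_y\approx H(Z_y)\ln 2$ inside the square root — that is where the linear dependence comes from — and the learned decoder enters only through the loss values weighting the per-atom deviations, yielding the factor $G_3^l=\max_y\sum_k \ell(g_l^s(a_k^{l,y}),y)\sqrt{2|\Ycal|\PP(Z_{l,y}^s=a_k^{l,y})}$; no complexity measure of the decoder class is needed. Unless you can actually derive an exponential-moment inequality with the properties you assert (and show how it survives the $s$-dependence of $g_l^s$), your proposal has a genuine gap at precisely the step that makes the theorem nontrivial.
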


Theorem~\ref{thm:1} rigorously completes the proof of  Conjecture~\ref{thm:shwartz},  with the significant improvements, which we will provide more detailed explanation in the following paragraph.

\noindent\textbf{Explanation of Theorem~\ref{thm:1}.} There are two significant improvements in our bound \eqref{eq:new:6} when compared with the previous bound \eqref{eq:new:5}. First, we reduce the exponential dependence to the linear dependence by replacing the exponential growth rate $2^{I(X; Z^s_l)}$ with the linear growth rate $I(X; Z^s_l)$. Second, we replace $I(X; Z^s_l)$ with $I(X; Z^s_l|Y)$, 
which is the expected mutual information between $X$ and $Z^s_l$ conditioned on $Y$.
To see why it is an improvement, notice that  $I(X; Z^s_l|Y)\le I(X; Z^s_l)$ since we can decompose $I(X; Z^s_l)$ into two components by using the chain rule as in  \citet{federici2020learning}: $I(X; Z^s_l) = I(X; Z^s_l|Y) +I(Y;Z^s_l).$
 Here, $I(X; Z^s_l|Y)\ge 0$ is the superfluous information that we want to minimize so as to maximize the predictive information $I(Y;Z^s_l)\ge 0$.
Therefore, the spirit of information bottleneck to regularize $I(X; Z^s_l)$ while maximizing $I(Y;Z^s_l)$ is an indirect way to regularize $I(X; Z^s_l|Y)$. Accordingly, instead of regularizing $I(X; Z^s_l)$, recent works have also start to consider regularizing $I(X; Z^s_l|Y)$  \citep{fischer2020conditional,federici2020learning,lee2021compressive}. In terms of theory, replacing $I(X; Z^s_l)$ with  $I(X; Z^s_l|Y)$  is \textit{\textbf{qualitatively significant}} because  $I(X; Z^s_l)$ cannot be zero while maintaining the label-relevant information $I(Y;Z^s_l)$, unlike  $I(X; Z^s_l|Y)$. 

For practical use of our bound, one can think about the case of fixed-feature learning, where the representation is learnt by other dataset independent of $s$. This is widely used in transfer learning and pre-training, where a $l$-layer representation is learnt from a large public available dataset (e.g. ImageNet) and we further use task specific and possibly private dataset $s$ to finetune extra few layers of the neural networks while fixing the representation.

\subsection{Encoder learned with the training data}\label{s:dependent_encoder}

In the previous section, we have proven an improved version of Conjecture~\ref{thm:shwartz} for the setting with a fixed encoder $\phi_l^s$. However, the typical usage of the information bottleneck principle is trying to minimizing  $I(X; Z_{l}^s)=I(X;\phi_l^s(X))$ over the parameters of the encoder $\phi_l^s$ along with a discriminative objective. Thus, to support the typical usage of the information bottleneck principle, we need to extend the results to the setting of learning encoder $\phi_l^s$ with $s$. In this setting, the bound in Conjecture \ref{thm:shwartz} is no longer valid as discussed in Section \ref{sec:intro}. However, the question about proving a sample complexity bound with the information bottleneck in the above typical setting is challenging and \textit{\textbf{remains open}}.

In this section, we present our main theorem that answers this open problem. Our result reconciles the information bottleneck regularizer $I_{}(X;Z_{l}^s|Y)$ with the mutual information of the encoder and the training dataset $I(\phi_{l}^{S}; S)$.

\begin{theorem}[\textbf{Main Theorem}] \label{thm:2}
Let $\bD\subseteq \{1,2,\dots,D+1\}$.    Then,  for any $\delta>0$, with probability at least $1-\delta$ over the training set $s$, the following generalization bound holds: 
\begin{align}  \label{eq:11} 
\Delta(s)
\le \min_{l \in \bD} Q_l,
\end{align}
where for $l \le D$,
$$
\scalebox{1.2}{%
\begin{small}
$
Q_l \hspace{-2pt} = G_3^l \sqrt{\frac{\left( I_{}(X;Z_{l}^s|Y)+I(\phi_{l}^{S}; S) \right) \ln(2)+ \widehat \Gcal_2^l}{n}} + \frac{G_1^l(\zeta)}{\sqrt{n}};
$
\end{small}}
$$ 
and for $l = D+1,$
$$Q_l=\Rcal(f^{s}) \sqrt{\frac{I_{}(\phi_{l}^{S}; S)\ln(2)  + \check \Gcal_2^l }{2n}},$$
Here,   $S \sim\Pcal^{\otimes n}$,     $G_1^l(\zeta)=\tOcal(\sqrt{I(\phi_{l}^{S}; S)+1})$,  $\widehat \Gcal_2^{l}=\tOcal(1)$, $\check \Gcal_2^l=\tOcal(1)$, and $G_3^{l}=\tOcal(1)$ as $n\rightarrow \infty$. The formulas of $G_1^l(\zeta)$, $\widehat \Gcal_2^{l}$, $\check \Gcal_2^l$, and $G_3^l$  are given in Appendix~\ref{app:12}.
\end{theorem}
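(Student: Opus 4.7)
The plan is to extend Theorem~\ref{thm:1} to the setting in which the encoder $\phi_l^s$ is itself learned from $s$. The bound I would derive splits the statistical complexity into two parts: the information-bottleneck term $I(X; Z_l^s|Y)$ inherited from Theorem~\ref{thm:1}, and an additional term $I(\phi_l^S; S)$ that quantifies the data-dependence of the encoder. For $l\le D$ these two appear additively under a single square root, while for $l=D+1$ the encoder \emph{is} the full predictor and the bottleneck term is subsumed, leaving only $I(\phi_l^S; S)$.

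I would first dispose of the $l=D+1$ case, which is the conceptually simpler piece. Here $\phi_{D+1}^s=f^s$, so the per-sample loss $\ell(f^s(X),Y)$ is bounded by $\Rcal(f^s)$ and hence $(\Rcal(f^s)/2)$-sub-Gaussian. A Xu--Raginsky-style information-theoretic generalization bound applied to the learning map $S\mapsto\phi_{D+1}^S$ then yields a gap of order $\Rcal(f^s)\sqrt{I(\phi_{D+1}^S;S)/(2n)}$, and absorbing the $\log(|\bD|/\delta)$ factor from the final union bound into $\check\Gcal_2^l$ yields the stated expression for $Q_{D+1}$.

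For $l\le D$, the substantive work is to revisit the proof of Theorem~\ref{thm:1} and isolate the single step in which data-independence of $\phi_l^s$ was used---namely, a moment-generating / concentration argument for the centered loss process that is valid for \emph{each fixed} encoder in the finite class $\Mcal_l$. When $\phi_l^S$ is coupled to $S$, I would compare expectations under the true joint law of $(\phi_l^S,S)$ to those under the product of its marginals via the Donsker--Varadhan change-of-measure inequality. The KL divergence between these two laws equals exactly $I(\phi_l^S;S)$, and this is the only extra penalty incurred. Inserting this penalty into the same concentration scheme used for Theorem~\ref{thm:1} produces the additive combination $I(X;Z_l^s|Y)+I(\phi_l^S;S)$ inside the square root of $Q_l$, together with the $\sqrt{I(\phi_l^S;S)+1}$ correction that is hidden inside $G_1^l(\zeta)$. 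A final union bound over $l\in\bD$ at confidence $\delta/|\bD|$ absorbs a $\ln|\bD|$ factor into the $\tOcal(1)$ constants, and taking a minimum over $l$ gives~\eqref{eq:11}.

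The main obstacle is the coupling step above. A direct application of Xu--Raginsky to the whole learning algorithm would return $\sqrt{I(\Acal(S);S)/n}$ and completely discard the bottleneck term $I(X;Z_l^s|Y)$ that is the central object of this paper; a naive union bound over $\Mcal_l$, on the other hand, would pay $\log|\Mcal_l|$, which can be arbitrarily larger than $I(\phi_l^S;S)$. The change-of-measure must therefore be applied to the specific moment-generating functional already appearing inside Theorem~\ref{thm:1}'s argument---not to a worst-case supremum---so that both contributions survive additively and the tilted exponent remains linear (rather than exponential, as in Conjecture~\ref{thm:shwartz}) in the mutual informations. Ensuring that this linearity propagates cleanly through the coupling, and that the $\tOcal$-hidden constants in $G_1^l(\zeta)$, $\widehat\Gcal_2^l$, and $\check\Gcal_2^l$ do not inflate, is the delicate part of the proof.
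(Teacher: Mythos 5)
Your overall decomposition (bottleneck term plus an encoder--data dependence term, Hoeffding-type treatment of the output layer, union bound over $l\in\bD$) matches the shape of the theorem, but the core tools you propose do not deliver the statement as claimed. Both the Xu--Raginsky bound you invoke for $l=D+1$ and the Donsker--Varadhan change of measure you invoke for $l\le D$ control the generalization gap \emph{in expectation} over the joint draw of $(S,\phi_{l}^{S})$, whereas Theorem~\ref{thm:2} is a high-probability statement over $s$ in which the population-level quantity $I(\phi_{l}^{S};S)$ appears with only a $\ln(1/\delta)$-type degradation. Donsker--Varadhan applied to the joint law versus the product of marginals gives an inequality after taking $\EE_{S}$, with penalty $I(\phi_{l}^{S};S)$; to convert this into a bound holding with probability $1-\delta$ you must either apply Markov's inequality to the disintegrated divergence $\mathrm{KL}\bigl(P_{\phi_{l}^{S}\mid S=s}\,\|\,P_{\phi_{l}^{S}}\bigr)$, which costs a factor $1/\delta$ rather than $\ln(1/\delta)$, or state the bound with that realization-dependent divergence in place of $I(\phi_{l}^{S};S)$; neither yields the stated form of $Q_l$. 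The same issue affects your $l=D+1$ step: the standard Xu--Raginsky argument bounds $\EE_S[\Delta(S)]$, and known single-draw variants do not give $\Rcal(f^{s})\sqrt{I(\phi_{D+1}^{S};S)/(2n)}$ with logarithmic dependence on $1/\delta$ without additional quantities (max-information, R\'enyi divergences, or similar). This is precisely the delicate point your plan flags but does not resolve.

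The paper takes a different route that sidesteps this: it constructs a high-probability \emph{typical set of encoders} $\Phi^{l}_{\epsilon}$ with $\PP(\phi_{l}^{S}\notin\Phi^{l}_{\epsilon})\le\delta$ and $\log|\Phi^{l}_{\epsilon}|\le H(\phi_{l}^{S})+\epsilon$, $\epsilon=\lambda_l^{-1}\ln(C_{\lambda_l,l}/\delta)$ (Lemma~\ref{lemma:1}), then invokes the fixed-encoder result already proved with a $\ln|\Phi|$ union-bound penalty (Lemma~\ref{lemma:3}), conditions on the event $\phi_{l}^{S}\in\Phi^{l}_{\epsilon}$, and finally decomposes $H(\phi_{l}^{S})=I(\phi_{l}^{S};S)+H(\phi_{l}^{S}\mid S)$; for $l=D+1$ it uses Hoeffding plus a union bound over $\Phi^{D+1}_{\epsilon}$. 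This is exactly why the additional terms $H(\phi_{l}^{S}\mid S)$ and $\lambda_l^{-1}\ln(C_{\lambda_l,l}|\bD|/\delta)$ appear in $G_4^l$ inside $\widehat\Gcal_2^{l}$ and $\check\Gcal_2^{l}$: they are the price of upgrading an entropic/mutual-information quantity to a per-realization, high-probability guarantee. Your proposal has no counterpart for these terms, so as written it claims a strictly stronger single-draw bound than the change-of-measure machinery can produce; to repair it you would either have to accept a $1/\delta$ factor, switch to disintegrated or conditional mutual information, or adopt a typical-set/covering argument of the kind the paper actually uses.
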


In Theorem \ref{thm:2}, we  denote by  $S \sim\Pcal^{\otimes n} $ the random variable following the same distribution as that of the training dataset $s\sim\Pcal^{\otimes n}$. This notation is required here because the bound in Theorem \ref{thm:2} is equivalent to  $\PP_{s\sim \Pcal^{\otimes n}}[\Delta(s)
 \le g(I_{}(X;Z_{l}^s|Y),I(\phi_{l}^{S}; S))] \ge 1 -\delta$ for some function $g$. Here,  the probability $\PP_{s\sim \Pcal^{\otimes n}}$ is taken with respect to $s$. The instantiations   $s$ within  this  $\PP_{s\sim \Pcal^{\otimes n}}$ are used in   $\Delta(s)$ and $I(X;Z_{l}^s|Y)$, but not in  $I(\phi_{l}^{S}; S)$ since $I(\phi_{l}^{S}; S)$ only depends the distribution $\Pcal^{\otimes n}$ instead of the instantiations. That is, the reason why we need $S$ here is similar to the same reason why we use $j$ for the expression $\sum_{i=1}^n g_1(i, \sum_{j=1}^n g_2(j))$ (for some functions $g_1,g_2$) while we have   $ \sum_{j=1}^n g_2(j)=  \sum_{i=1}^n g_2(i)$ in terms of its value, where $i$ and $j$ correspond to  $s$ and $S$, respectively. 

In Theorem \ref{thm:2}, the randomness of $I_{}(X;Z_{l}^s|Y)$ and $I(\phi_{l}^{S}; S)$ are different. The mutual information $I_{}(X;Z_{l}^s|Y)$ is calculated over the randomness of the distribution of the input $X$ conditioning on $Y$, after fixing a realization of the training data $s$ (for each fixed draw of $s$ from $\PP_{s\sim \Pcal^{\otimes n}}$).  In contrast, the mutual information  $I(\phi_{l}^{S}; S)$ is computed over the randomness of the distribution of the training dataset  $S \sim\Pcal^{\otimes n} $.

\begin{remark} \label{remark:1}
One can consider the parameterization of the encoder as $\phi_l^{S}=\phi_{l,\theta_l^{S}}$ where $\theta^{S}_l$ is the parameter vector that is learned with $S$ and contains  all parameters of the layers up to $l$-th layer. In that case, Theorem \ref{thm:2} holds when replacing $\phi_{l}^{S}$ with $\theta^{S}_l$. 
\end{remark} 
\noindent\textbf{Explanation of Theorem~\ref{thm:2}.} Theorem \ref{thm:2}  provides the \textbf{\textit{first}} rigorous sample complexity bound for the information bottleneck in the setting of training the encoder $\phi^{s}_{l}$ with the same  training data $s$. Here, $Z_{l}^s$ is the random variable of the $l$-th layer's representation with dependence on the given  training dataset $s$, and   $D$ is the number of  all  layers, including the input layer and excluding the output layer; i.e.,   $Z_{1}^s$  is the input layer,  $Z_{D}^s$ is  the last hidden layer, and  $Z_{D+1}^s$ is  the output layer. Here,   $I(\phi_{l}^{S}; S)$ is measuring the effect of overfitting the encoder, which is necessary to avoid the counter-example  \citep[Example 3.1]{hafez2020sample}.

The main factor in the above theorem is $I_{}(X;Z_{l}^s|Y)+I(\phi_{l}^{S}; S)$. This term captures the novel relationship that has not been studied in any previous sample complexity bounds. Specifically, this captures the relationship between  ``how much information from the input $X$ the trained encoder $\phi_{l}^s$ retains, i.e., $I(X;Z_{l}^s|Y)$'' and ``how much information from the training dataset $S$ is used to train the encoder  $\phi_{l}^{S}$, i.e., $I(\phi_{l}^{S}; S)$''. 

Theorem \ref{thm:2} is applicable when the encoder is trained with $s$ and potentially additional data independent of $s$: e.g., supervised learning, semi-supervised learning, unsupervised learning, and transfer learning. For example, Theorem \ref{thm:2} captures the benefit of transfer learning in both terms of $I(X;Z_{l}^s|Y)$ and $I(\phi_{l}^{S}; S)$ since the encoder $\phi_{l}^{S}$ is expected to have less dependence on $S$ (target data) (for some $l\le D$) in transfer learning, which tends to decrease $I(\phi_{l}^{S}; S)$.

Finally, we note that in the formula of $\widehat \Gcal_2^{l}$,  we have a linear dependence on $H(Z_{l}^s|X_{},Y) \ln(2)$ (see Appendix~\ref{app:12}). However, we have $H(Z_{l}^s|X_{},Y)=0$ if the function $\phi^s_l$ is deterministic, which is the typical case for deep neural networks, because $\phi^s_l$ is the function used at inference or test time as opposed to training time (when dropout for example can be used). When the function $\phi^s_l$ is stochastic at test time, we have $H(Z_{l}^s|X,Y)\approx 0$ when the injected noise is small, and more generally  $H(Z_{l}^s|X,Y) = \Ocal(1)$ as $n\rightarrow \infty$.

\section{Extensions}\label{sec:ext}
Our results thus far focus on the case of $|\Xcal| < \infty$, which is already general enough to cover the realistic implementation on a computer and commonly considered in previous theory work \citep{shwartz2019representation}. Indeed, our presumption makes sure that the mutual information is finite and thus the bounds provided are non-trivial.  In the case of $|\Xcal|= \infty$, the mutual information can be infinite, and thus requires a separate treatment.  In this section, we show how to generalize our arguments to the case of $|\Xcal|= \infty$.

\subsection{Neural networks with ReLU activation functions} \label{subsec:ReLU}
First, we show that finite mutual information can be obtained in some cases even for the case of $|\cX|=\infty$. Specifically, the following proposition shows that  a  (deterministic) neural network can have finite mutual information with ReLU activations with continuous distributions.
\begin{proposition} \label{prop:3}
For a given neural network with ReLU activation functions, there are infinitely many continuous distributions over $\Xcal$ such that the corresponding $I(X,Z|Y)$ is finite.  
\end{proposition}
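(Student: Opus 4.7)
The plan is to exploit the piecewise-linear structure of ReLU networks to exhibit a non-empty open subset of $\RR^d$ on which the encoder $\phi_l$ is constant; any continuous density supported there will make $Z=\phi_l(X)$ almost surely equal to a single point, so that $I(X,Z|Y)$ collapses to zero (and is in particular finite) for every joint with $Y$.

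Concretely, a ReLU network is continuous and piecewise affine, so it induces a finite polyhedral partition of $\RR^d$ on each cell of which $\phi_l(x)=A_R x+b_R$. The first step is to isolate a cell $R^\star$ on which $A_{R^\star}=0$. A natural candidate is the polyhedron $R^\star:=\{x\in\RR^d:W_1 x+b_1\le 0\text{ componentwise}\}$: on $R^\star$ every first-layer pre-activation is non-positive, so $h_1\equiv 0$, and by induction every subsequent $h_k$ is a constant (independent of $x$); therefore $\phi_l\equiv c$ on $R^\star$ for some $c\in\RR^{m_l}$. I would then verify that $R^\star$ has non-empty interior via a Gordan/Farkas-type feasibility argument: it suffices that $\{v\in\RR^d:W_1 v<0\}$ be non-empty, a mild non-degeneracy condition on $W_1$. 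If layer $1$ happens to violate this, one iterates to a deeper layer, since within any cell where some hidden layer is identically zero the subsequent pre-activations are fixed constants, and one can locate an entirely dead layer at some depth $k\le l$.

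Given such a constant region, the family of distributions is now obtained trivially. Fix an open Euclidean ball $B(x_0,\rho)\subset\mathrm{int}(R^\star)$ and consider the two-parameter family of truncated Gaussian densities $p_{x_0,\sigma}(x)\propto \exp\bigl(-\|x-x_0\|^2/(2\sigma^2)\bigr)\,\mathbf{1}_{B(x_0,\rho)}(x)$, indexed by $(x_0,\sigma)\in\mathrm{int}(R^\star)\times(0,\infty)$. This is an uncountable family of continuous distributions on $\Xcal$. Under each of them, $Z=\phi_l(X)=c$ almost surely, so $H(Z)=0$ and therefore $0\le I(X,Z|Y)\le H(Z|Y)\le H(Z)=0$ for every coupling of $Y$, which establishes the claim.

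The main obstacle is precisely the first step --- guaranteeing a full-dimensional constant region for an arbitrary ReLU architecture. This is where the argument becomes architecture-dependent: when the rows of $W_1$ positively span $\RR^d$, $R^\star$ has empty interior and the layer-$1$ dead-region argument breaks. Handling this requires a careful layer-by-layer chase, producing inputs whose forward pass pushes some deeper layer entirely into its dead half-space, and then invoking openness of preimages under the continuous piecewise-affine map $\phi_l$ to inflate the single witness point into an open ball. Once such a constant region is secured, the remainder (choosing the density family and computing the mutual information) is routine.
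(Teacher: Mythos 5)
Your reduction ($\phi_l$ deterministic $\Rightarrow I(X,Z|Y)=H(Z|Y)$, and a representation taking finitely many --- here one --- values has finite entropy) is the same pivot the paper uses, and your idea of supporting the input distribution on a region where the network is locally constant is close in spirit to the paper's construction. But the step you yourself flag as the ``main obstacle'' is a genuine, and in fact unfixable, gap under the reading you adopt: for an \emph{arbitrary} given ReLU network there need not exist any full-dimensional region on which $\phi_l$ is constant. ReLU networks can realize injective maps --- e.g.\ $\phi(x)=\sigma(x)-\sigma(-x)=x$ is the identity on $\RR$, and analogous constructions give the identity on $\RR^d$ --- and for such a network $Z=X$, so \emph{every} continuous input distribution yields $I(X;Z|Y)=H(X|Y)=\infty$. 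Hence no ``layer-by-layer chase'' can produce the dead region you need in general, and the universally quantified version of the claim you are trying to prove is simply false; your argument as written establishes the conclusion only for those networks that happen to possess a constant cell of positive Lebesgue measure.

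The paper reads the proposition as an existence/possibility statement (``a deterministic ReLU network \emph{can} have finite $I(X,Z|Y)$ with continuous input distributions,'' in contrast to the infinite-entropy result of Amjad and Geiger for injective activations) and proves it by co-constructing the network and the distributions: for any continuous distribution whose class-conditional support splits into components separated by a margin, the one-layer net $\sigma(ax+b+c)-\sigma(ax+b-c)$ (or, for general separable components, a thresholded universal-approximation network) maps the support onto two constants, so $Z$ is Bernoulli, $H(Z|Y)$ is bounded, and infinitely many such distributions exist. If you likewise weaken your goal to exhibiting one network--distribution family pair --- e.g.\ fix any network whose first-layer dead region $\{x: W_1x+b_1\le 0\}$ has non-empty interior (such networks obviously exist) and then use your truncated-Gaussian family --- your proof closes and is, if anything, a slightly more economical route to the same conclusion (it even gives $I(X;Z|Y)=0$). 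As a proof of the statement ``for every given ReLU network,'' however, it cannot be completed.
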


\subsection{Modification for valid bounds in the case of infinite mutual information} \label{sec:7}

The mutual information for the information bottleneck is finite  for many practical cases including the cases of discrete domains $\Xcal$ with any models  and of continuous domains  $\Xcal$ with stochastic models as well as the case in Proposition  \ref{prop:3} with ReLU. However, it is infinite for some special case, for example, of  continuous domains $\Xcal$ with   deterministic neural networks with certain types of injective activations such as sigmoid  (instead of ReLU) \citep{amjad2019learning}. This subsection demonstrates that our bounds can be modified to produce finite bounds even for any special cases of the 
mutual information being infinite. Our results (Theorems \ref{thm:1}--\ref{thm:2}   with Corollary \ref{coro:1}) also resolve  the known issue of arbitrariness of the mutual information  with different binning methods \citep{saxe2019information}. 

Consider an arbitrary (continuous or discrete) domain  $\Xcal$ and an arbitrary encoder  $\tphi_{l}^s$   such that  $\tphi_l^s(x) \in\tZcal_{l}^s $ and  the set $\tZcal_{l}^s$ is  potentially (uncountably or countably) infinite.
Define the corresponding model $\tf^{s}$  by  $\tf^{s}=g_{l} ^{s}\circ \tphi_{l}^s$ and $\tZ_{l}^s=\tphi_{l}^s \circ X$.  
 We formalize an arbitrary binning method $\Ecal_{l}[\tphi_{l}^s]$ of computing the mutual information \citep{chelombiev2018adaptive}
as follows: for  any $(l,\tphi_{l}^s)$, let   $\Ecal_{l}[\tphi_{l}^s]: \tZcal_{l}^{s}\rightarrow \Zcal_{l}^s \subseteq \tZcal_{l}^s $ be a~function such that $|\Zcal_{l}^{s}|<\infty$. Set  $\phi_{l}^{s} =\Ecal_{l}[\tphi_{l}^s] \circ\tphi_{l}^{s}$; i.e.,  it follows that $Z_l^{s}=\Ecal_{l}[\tphi_{l}^s]\circ \tZ_l^s $ and $f^s= g_{l} ^{s}\circ\Ecal_{l}[\tphi_{l}^s]\circ  \tphi_{l}^{s}$. Let  $\hQ_l$ and $\min_{l \in \bD} Q_l$ be the right-hand side of Eq.~\eqref{eq:new:6}  and Eq.~\eqref{eq:11} in Theorems \ref{thm:1}--\ref{thm:2} with this  choice of  encoder $\phi_{l}^s$; i.e.,  $\hQ_l$ and   $Q_l$ contain  $ I_{}(X;Z_{l}^s|Y)$ instead of $I(X;\tZ_{l}^s|Y)$. 
Here, $I_{}(X;Z_{l}^s|Y)$ is the mutual information computed by the binning method $\Ecal_{l}[\tphi_{l}^s]$ while $I(X;\tZ_{l}^s|Y)$ is the true mutual information of $\tf^s$. Let  $C_{l}$  be a nonnegative  real number such that $\PP(|\ell((g_{l} ^{s}\circ \tphi_{l}^s)(X),Y)- \ell(( g_{l} ^{s}\circ\Ecal_{l}[\tphi_{l}^s]\circ  \tphi_{l}^{s})(X),Y)|\le C_{l} )=1$. 

Corollary \ref{coro:1} shows that even when the mutual information  $I(X;\tZ_{l}^s|Y)$
of the original model $\tf^s$
is infinite, Theorems \ref{thm:1}--\ref{thm:2} provide the finite bounds on the \textit{original} model $\tf^s$
using the finite mutual information $I_{}(X;Z_{l}^s|Y)$ returned by a binning method  $\Ecal_{l}[\tphi_{l}^s]$: 
\setlist[description]{font=\normalfont}
\begin{corollary} \label{coro:1}
 Suppose that  $C_{l} < \infty$. Then, Theorems \ref{thm:1}--\ref{thm:2} hold true also when we
replace\begin{description} \vspace{-8pt}
\item[(Theorem \ref{thm:1})] 
 Eq.~\eqref{eq:new:6}  with $\Delta(s) \le \hQ_{l}+2C_{l}<\infty$, and,
\item[(Theorem \ref{thm:2})] 
 Eq.~\eqref{eq:11}  with $\Delta(s) \le \min_{l \in \bD} Q_l+2C_{l}<\infty$.
\end{description}
\end{corollary}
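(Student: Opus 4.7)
The plan is to apply Theorems~\ref{thm:1} and~\ref{thm:2} directly to the binned surrogate model $f^s = g_l^s \circ \Ecal_l[\tphi_l^s] \circ \tphi_l^s$, whose $l$-th representation $Z_l^s$ takes values in the finite set $\Zcal_l^s$, and then transfer the resulting generalization bound to the original model $\tf^s$ via a perturbation-of-losses argument controlled by $C_l$.

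First I would invoke Theorems~\ref{thm:1} and~\ref{thm:2} on $f^s$. Because $|\Zcal_l^s|<\infty$ by the very construction of the binning $\Ecal_l[\tphi_l^s]$, the mutual information $I(X;Z_l^s|Y)$ that appears inside the right-hand sides $\hQ_l$ and $\min_{l\in\bD} Q_l$ is finite, so the two theorems apply with their usual finiteness presumptions intact. This yields, with probability at least $1-\delta$ over $s$, an inequality of the form $\Delta^f(s)\le \hQ_l$ (Theorem~\ref{thm:1}) or $\Delta^f(s)\le \min_{l\in\bD} Q_l$ (Theorem~\ref{thm:2}), where I use $\Delta^f(s):= \EE_{(X,Y)\sim\Pcal}[\ell(f^s(X),Y)] - \frac{1}{n}\sum_{i=1}^n \ell(f^s(x_i),y_i)$ to denote the generalization gap of the binned surrogate.

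Second, I would translate this bound into one for the generalization gap $\Delta(s)$ of the original model $\tf^s$. By hypothesis, $|\ell(\tf^s(x),y) - \ell(f^s(x),y)| \le C_l$ almost surely in $(x,y)\sim\Pcal$. Taking expectations under $\Pcal$ gives $|\EE[\ell(\tf^s(X),Y)] - \EE[\ell(f^s(X),Y)]|\le C_l$, and since each $(x_i,y_i)$ is drawn i.i.d.\ from $\Pcal$ it lies in this a.s.\ set, so applying the pointwise bound termwise gives $|\frac{1}{n}\sum_i \ell(\tf^s(x_i),y_i) - \frac{1}{n}\sum_i \ell(f^s(x_i),y_i)|\le C_l$. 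A triangle inequality then yields $|\Delta(s)-\Delta^f(s)|\le 2C_l$, and chaining with the first step produces $\Delta(s)\le \hQ_l+2C_l$ and $\Delta(s)\le \min_{l\in\bD} Q_l + 2C_l$, both finite because $C_l<\infty$ and the respective $\hQ_l,\,Q_l$ depend only on the finite binned mutual information $I(X;Z_l^s|Y)$ rather than the possibly infinite $I(X;\tZ_l^s|Y)$ of the original model.

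I do not anticipate a genuine obstacle here: Corollary~\ref{coro:1} is essentially a surrogate-model argument that factors cleanly through Theorems~\ref{thm:1} and~\ref{thm:2}. The only mild care needed is to verify that the same almost-sure closeness of losses simultaneously controls the population risk and the empirical average on the \emph{same} realization of $s$; this is automatic because $s\sim\Pcal^{\otimes n}$ places every training sample in the measure-one event on which the pointwise $C_l$-bound holds, so no additional union bound or loss of probability mass is incurred.
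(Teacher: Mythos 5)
Your proposal is correct and follows essentially the same route as the paper's proof: apply Theorems~\ref{thm:1}--\ref{thm:2} to the binned surrogate $f^s = g_l^s\circ\Ecal_l[\tphi_l^s]\circ\tphi_l^s$ (whose mutual information is finite by construction), then use the almost-sure $C_l$-closeness of losses to shift both the population and empirical terms by at most $C_l$ each, giving the $+2C_l$ correction with no loss of probability mass. The only cosmetic difference is that the paper phrases the final combination as a union bound over the high-probability event and the probability-one event, which is exactly the observation you make explicitly at the end.
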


The assumption on the finiteness of $C_{l}$ is satisfied for common scenarios. For example, let $L$ be the Lipschitz constant of  the function $q\mapsto \ell(g_{l}^s(q),Y)$  w.r.t. some metric $d_{\Ecal}$ almost surely \citep{fazlyab2019efficient,latorre2019lipschitz,aziznejad2020deep,pauli2021training}.  Set  $\Ecal_{l}[\tphi_{l}^s]$ such  that the radius of each bin w.r.t. the metric $d_{\Ecal}$ is at most $\epsilon/\sqrt{nL^{2}}$ for some $\epsilon>0$.  We can then set $C_{l}= \epsilon/\sqrt{n}.$

 In Corollary \ref{coro:1}, the arbitrariness with binning methods is resolved: e.g.,  increasing the bin size $\epsilon$ can  decrease the mutual information, but it also increases the value of $C_{l}= \epsilon/\sqrt{n}$. Thus, there is always a tradeoff and we can't arbitrarily change values of our bounds by choosing different binning methods.  Similarly, for the case of infinite mutual information, we prove the validity of   general methods of computing mutual information, including those of injecting noises and kernel density estimations in Appendix~\ref{app:10}.

\section{Experiments}\label{sec:experiments}

We conduct empirical experiments to investigate the following questions: (i) Does the information bottleneck regularizer $I(X; Z_l^s)$ alone reliably predict generalization when the encoder $\phi_l^s$ is learned with $s$? (ii) Does the main factor in our bound in Theorem \ref{thm:2} $\min_{l \in [D]} I({S}; \theta_l^{S}) + I(X; Z_l^s | Y)$ with Remark \ref{remark:1} predict generalization more accurately than $I(X; Z_l^s)$ alone (or $I(X; Z_l^s | Y)$ alone)? (iii) How does varying layer $l$ within the network affect the values of $I({S}; \theta_l^{S})$ and $I(X; Z_l^s)$ and their predictive ability?

\subsection{On the Representation Compression Bound}\label{s:experiments1}
\begin{table}
\centering
\begin{tabular}{l c}
\toprule
~~~~~~~~~~~~~~~Pearson Correlation \\
\midrule
Num. params.  & -0.0294~~~~~~~~~~~~~  \\
$\prod_\lb \lVert \theta_\lb^{s} \rVert_F$ &  -0.0871~~~~~~~~~~~~~ \\
$\breve{I}(X; Z_l^s)$ &  0.3712~~~~~~~~~~~~~ \\
$\breve{I}(X; Z_l^s | Y)$   &   0.3842~~~~~~~~~~~~~ \\

$\breve{I}(S; \theta_{D+1}^\mathbf{S})$  & 0.0091~~~~~~~~~~~~~ \\ 
$\breve{I}(S ; \theta_l^{S})$    & 0.0211~~~~~~~~~~~~~ \\
$\tilde{I}(S; \theta_l^{S}) + \breve{I}(X; Z_l^s)$  & \textbf{0.3928}~~~~~~~~~~~~~ \\
$\tilde{I}(S; \theta_l^{S}) + \breve{I}(X; Z_l^s | Y)$  & \textbf{0.4130}~~~~~~~~~~~~~ \\
\bottomrule
\end{tabular}
\caption{Pearson correlation coefficient between metrics and the generalization gap in loss for constrained models trained for 5 class classification on 2D inputs. Positive values denote positive correlations. $\theta_\lb$ denotes parameters of layer $\lb$ and $\theta_l$ denotes parameters up to layer $l$.}
\label{tab:exp1_summary}
\end{table}

\begin{table*}
\centering
\begin{tabular}{l H c c c c c c}
\toprule

& Layer summary 
& \multicolumn{2}{c}{Spearman corr.} & \multicolumn{2}{c}{Pearson corr.} & \multicolumn{2}{c}{Kendall corr.} \\
Generalization gap: & & Loss & Error & Loss & Error & Loss & Error  \\
\midrule
$\frac{1}{D}\sum_{l=1}^D \breve{I}(X; Z_l^s)$ & Mean & 0.8481 & 0.7410 & 0.2116 & 0.1831 & 0.6425 & 0.5436 \\
$\min_{l \in [D]} \breve{I}(X; Z_l^s)$ & Min & 0.7145 & 0.5602 & 0.7203 & 0.5719 & 0.4461 & 0.3404 \\
$\frac{1}{D}\sum_{l=1}^D \breve{I}(X; Z_l^s | Y)$ & Mean & 0.8481 & 0.7406 & 0.2140 & 0.1853 & 0.6427 & 0.5435 \\
$\min_{l \in [D]} \breve{I}(X; Z_l^s | Y)$ & Min & 0.7004 & 0.5434 & 0.7062 & 0.5560 & 0.4386 & 0.3305 \\
\midrule
$\breve{I}({S}; \theta_{D+1}^{S})$ & mean & 0.4688 & 0.3112 & 0.2512 & 0.0775 & 0.2121 & 0.1208 \\
$\min_{l \in [D]} \breve{I}({S}; \theta_{l}^{S}) + \breve{I}(X; Z_l^s | Y)$ & min &  0.8434 & 0.7313 & 0.8437 & 0.7195 & 0.6270 & 0.5332 \\
$\bar{I}({S}; \theta_{D+1}^{S})$ & mean & 0.5370 & 0.3800 & 0.2924 & 0.1218 & 0.2442 & 0.1526 \\
$\min_{l \in [D]}  \bar{I}({S}; \theta_l^{S}) + \breve{I}(X; Z_l^s | Y)$ & Min & \textbf{0.8632} & \textbf{0.7576} & \textbf{0.8511} & \textbf{0.7562} & \textbf{0.6626} & \textbf{0.5664} \\

\bottomrule
\end{tabular} 
\caption{Correlation results across metrics for CIFAR10 models. Each value is in [-1, 1] and $>0$ indicates positive correlation. Best metric highlighted. More results can be found in Appendix~\ref{app:exp2}.}
\label{tab:cifar10_main} 
\end{table*}
\begin{table*}
\centering
\begin{tabular}{l H c c c c c c}
\toprule

& Layer  
& \multicolumn{2}{c}{Spearman corr.} & \multicolumn{2}{c}{Pearson corr.} & \multicolumn{2}{c}{Kendall corr.} \\
Generalization gap: & & Loss & Error & Loss & Error & Loss & Error  \\
\midrule

$\frac{1}{D}\sum_{l=1}^D  \bar{I}(S; \theta_l^{S}) + \breve{I}(X; Z_l^s | Y)$ & Mean & 0.4429 & 0.2908 & 0.2783 & 0.1059 & 0.2349 & 0.1426 \\

$\max_{l \in [D]} \bar{I}(S; \theta_l^{S}) + \breve{I}(X; Z_l^s | Y)$ & Max & 0.5711 & 0.4204 & 0.2993 & 0.1311 & 0.2886 & 0.1945 \\

$\min_{l \in [D]}  \bar{I}(S; \theta_l^{S}) + \breve{I}(X; Z_l^s | Y)$ & Min & \textbf{0.8632} & \textbf{0.7576} & \textbf{0.8511} & \textbf{0.7562} & \textbf{0.6626} & \textbf{0.5664} \\

$\bar{I}(S; \theta_1^{S}) + \breve{I}(X; Z_1^s | Y)$ & $l=1$ & 0.6476 & 0.5292 & 0.1557 & 0.1331 & 0.4307 & 0.3504 \\

$\bar{I}(S; \theta_D^{S}) + \breve{I}(X; Z_D^s | Y)$ & $l=D$ & 0.5711 & 0.4204 & 0.2993 & 0.1311 & 0.2886 & 0.1945 \\
\bottomrule
\end{tabular}
\caption{Correlation results for $\bar{I}({S}; \theta_l^{S}) + \breve{I}(X; Z_l^s | Y)$ for CIFAR10 models across different layer summarization methods.}
\label{tab:cifar10_layers} 
\end{table*}
As discussed in Sections \ref{sec:analysis}, $I(X; Z_l^s)$ is generally not a reliable predictor of generalization because feature compression does not prevent the overfitting of the representation function's parameters. We investigate this further by designing a learning algorithm that trains models under various hyperparameter settings with the constraint that estimated $I(X; Z_l^s)$ is approximately constant. Following previous work such as \citet{galloway2022bounding}, we use correlation analysis to empirically evaluate how strongly metrics predict generalization.

The inference problem studied was 5 class classification on clustered 2D inputs (Fig.~\ref{fig:data}). 
The model architecture was a 5 layer MLP with deterministic weights and feature layer $l$ was fixed to the penultimate layer. 
Given training dataset $s$, each model $q_{\theta^s}$ was optimized with the cross-entropy loss $\min_{\theta^s} - \frac{1}{|s|} \sum_{(x, y) \in s} ( \log (1/k) \sum_{j=1}^k q_{\theta^s}(y | z^j) )$
s.t. $\hat{I}(X; Z_l^s) = \rho$, where features $z^j \sim q_{\theta^s}(Z_l^s | x)$, and $q_{\theta^s}(Z_l^s | x)$ is a multivariate Normal distribution with mean and variance computed by the MLP. Here, 
$$\hat{I}(X; Z_l^s)=\frac{1}{|s|} \sum_{(x, y) \in s}\frac{1}{k}\sum_{j=1}^k \log\frac{q_{\theta^s}(z^j | x)}{\frac{1}{|s|}\sum_{(x', y') \in s} q_{\theta^s}(z^j | x')}$$
is a Monte-Carlo sampling based estimator of $I(X; Z_l^s)$, and constraint $\rho$ was set to 1.5, approximately half the value of $\hat{I}(X; Z_l^s)$ attained without constraining $\hat{I}(X; Z_l^s)$. 
The neural network infers a distribution over a stochastic latent features so that $\hat{I}(X; Z_l^s)$ can be regularized and evaluated directly during training; in Section~\ref{s:experiments2} we consider the case of deterministic features without regularization of $\hat{I}(X; Z_l^s)$.
Whereas $\hat{I}(X; Z_l^s)$ is a sampling based estimator, we also use the   upper-bound based estimator: $\breve{I}(X; Z_l^s) = \frac{1}{|s|} \sum_{(x, y) \in s} \frac{1}{k} \sum_{j=1}^k ( \log q_{\theta^s}(z^j | x) - ( \frac{1}{|s|} \sum_{(x', y') \in s} \log q_{\theta^s}(z^j | x') ) )$. We define  $\hat{I}(X; Z_l^s|Y)$ and  $\breve{I}(X; Z_l^s|Y)$ accordingly by conditioning these quantities on $Y$: $\hat{I}(X; Z_l^s | Y) = \frac{1}{|s|}  \sum_{c \in C}  \sum_{(x, y) \in s_c} \frac{1}{k} \sum_{j=1}^k  \log \frac{q_{\theta^s}(z^j | x)}{\frac{1}{|s_c|} \sum_{(x', y') \in s_c} q_{\theta^s}(z^j | x')}$, and 
$\breve{I} ( X; Z_l^s | Y) \allowbreak = \frac{1}{|s|} \sum_{c \in C} \sum_{(x, y) \in s_c} \frac{1}{k} \sum_{j=1}^k ( \log q_{\theta^s}(\allowbreak z^j | x) - ( \frac{1}{|s_c|} \sum_{(x', y') \in s_c} \log q_{\theta^s}(z^j | x') ) )$. 

For the computation of $I({S}; \theta_{l}^{S})$, the learning algorithm is defined by the posterior distribution over network parameters $\PP(\theta_l^{S} | {S} = s)$, which was modelled using SWAG \citep{maddox2019simple,mandt2017stochastic}, chosen for its popularity and simplicity. We denote the estimator of $I({S}; \theta_l^{S})$ using SWAG by $\breve{I}({S}; \theta_l^{S})$, where datasets $S$ were drawn from the set of training datasets. To account for different scales of different estimation procedures, we tested rescaling $\breve{I}({S}; \theta_l^{S})$ by the average value of $\hat{I}(X; Z_l^s | Y)$, denoting rescaled values by $\tilde{I}({S}; \theta_l^{S})$ (see Appendix~\ref{app:exp1} for more details). 216 models were trained over varying architectures, weight decay rates, dataset draws, and random seeds.
Model parameters were optimized end-to-end using the reparameterization trick \citep{kingma2015variational} with dual gradient descent for MI constraints \citep{bertsekas2014constrained} (Appenix~\ref{app:exp1}). 
For each model, we measured the generalization gap between the test set and train set losses.
We found that combining model compression and representation compression yielded the best predictor of generalization overall, and that this outperformed using representation compression alone (Table~\ref{tab:exp1_summary},~\ref{tab:2_large}).
Additional experimental results on MNIST and Fashion MNIST datasets are given in Appendix~\ref{app:mnist}, showing that this conclusion also holds for stochastic feature networks in cases when $I(X; Z_l^s)$ is unconstrained.

\begin{figure}[h]
    \centering
    \includegraphics[width=0.24\textwidth]{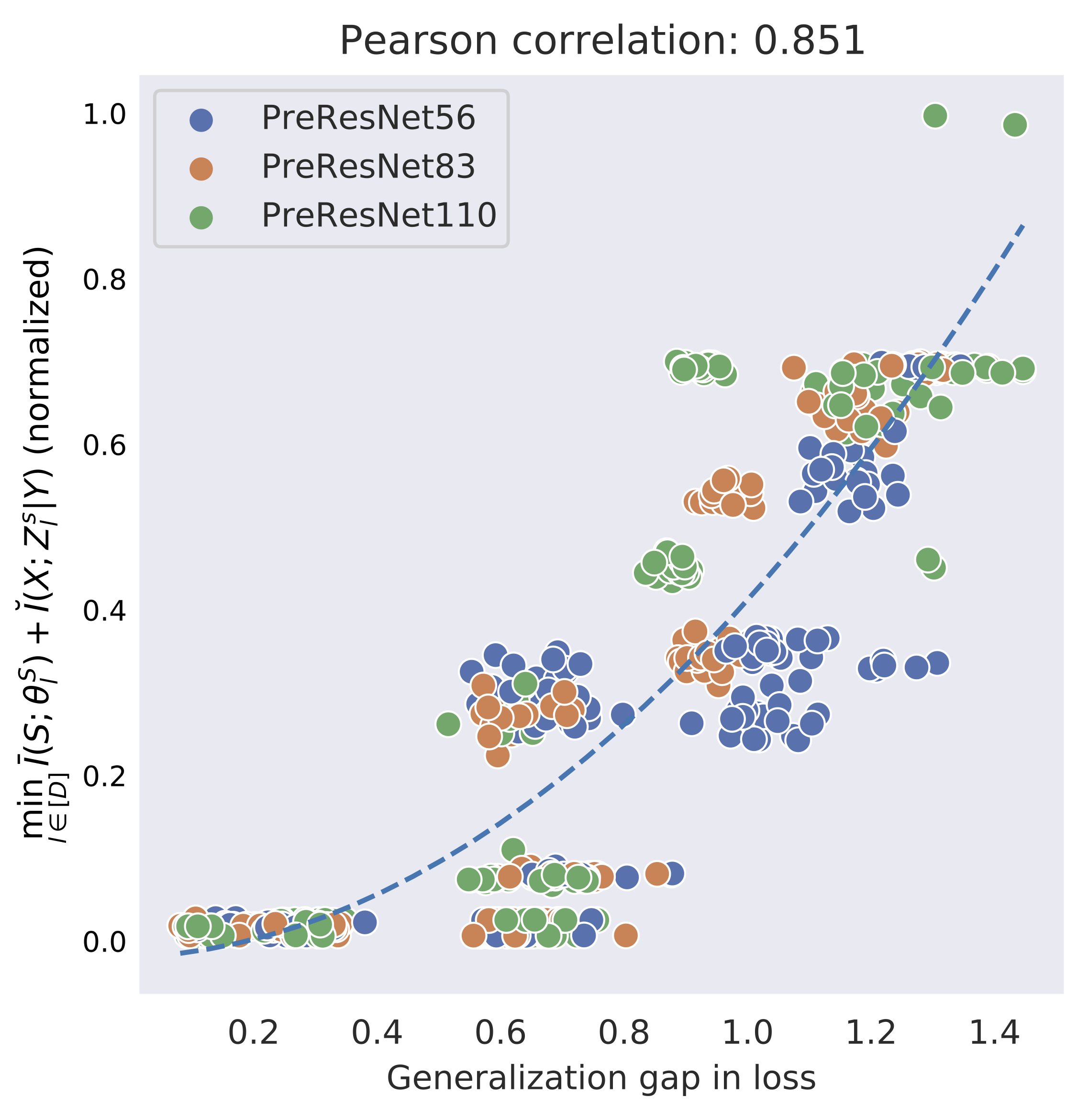}%
    \hspace{0.5em}
    \includegraphics[width=0.22\textwidth]{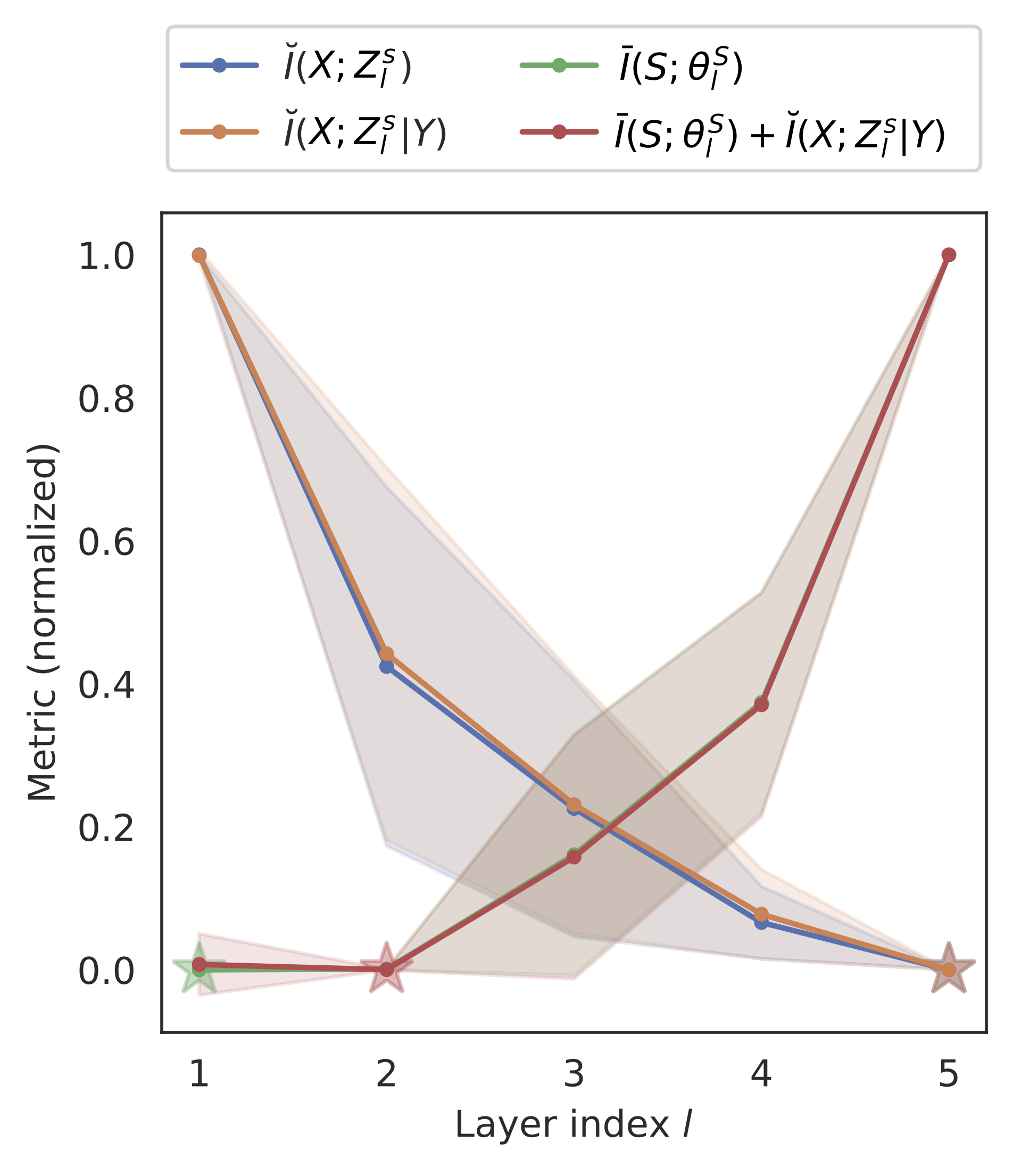}
    \caption{(\textbf{Left}) Results for $\min_{l \in [D]}$ ${\bar{I}({S}; \theta_l^{S}) + \breve{I}(X; Z_l^s | Y)}$ for unconstrained models trained on CIFAR10. Dashed line denotes best polynomial fit with degree 2.  (\textbf{Right}) Metrics averaged over models for each layer index. Star denotes minimum point for each metric. Values are normalized by subtracting the minimum and dividing by the range.}
    \label{fig:haha}
\end{figure}

\subsection{Image Classification with DNNs}\label{s:experiments2}
To investigate a common setting, we trained 540 deep neural networks on CIFAR10 without explicitly constraining MI, over varying preactivation ResNet architectures \citep{he2016identity}, weight decay rates, batch sizes, dataset draws and random seeds.
To study representation compression by estimating MI with deterministically computed features, noise is customarily injected purely for analysis purposes \citep{saxe2019information}. We tested adaptive kernel density estimation (KDE) \citep{chelombiev2018adaptive}, which models the latent represenation of an input as a unimodal Gaussian centred at the deterministic feature, with variance $\sigma^2_l$ determined by scaling a base value according to maximum observed activation value in the layer. We also tested selecting $\sigma^2_l$ by maximum likelihood estimation (MLE) of observed features under the constraint that estimated MI decreases with layer, which follows from the information processing inequality. We report the results in this section for MLE and in Appendix~\ref{app:new:exp_results} for adaptive KDE. Representations were taken from $D=5$ layers in the model, ranging from the input to the output
 of the penultimate layer. 
 Again, SWAG was used to  model the posterior ${\PP(\theta_l^{S} | {S} = s)}$.
Since SWAG approximates the stationary distribution of SGD from a fixed initialization as a unimodal Gaussian \citep{mandt2017stochastic}, we also tested averaging over initializations to obtain a richer posterior model, and denote the estimator 
of MI from this model as $\bar{I}({S}; \theta_l^{S})$, 
defined in Appendix~\ref{app:exp2_metrics}. To construct 
multiple training datasets, we sampled 5 training sets of size 15K from the CIFAR10 training set, and each test set was the original 10K test set. 

The generalization gap was positively correlated with metrics measuring representation compression, but more correlated with metrics that combined both representation and model compression (Table~\ref{tab:cifar10_main}).
By increasing the value of layer index $l$ of the encoder, MI between the encoder and training dataset increased, while MI between the representation and input decreased (Fig.~\ref{fig:haha}), capturing a trade-off between these two measures of compression.
For selection of hyperparameters $\sigma_l^2$, MLE (Fig.~\ref{fig:haha}, Table~\ref{tab:cifar10_main},~\ref{tab:cifar10_layers},~\ref{tab:exp2_mle},~\ref{tab:exp2_mle_ends}) outperformed adaptive KDE  (Table~\ref{tab:exp2_kde},~\ref{tab:exp2_kde_ends}), however regardless of which scheme was used, 
the best metric that combined representation compression and model compression outperformed the best metric for representation compression or model compression individually. 
$\min_{l \in [D]} \bar{I}({S}; \theta_l^{S}) + \breve{I}(X; Z_l^s | Y)$ performed best overall (Fig.~\ref{fig:haha}).
Taking the minimum over layers (Theorem~\ref{thm:2})  outperformed other layer summarization methods (Table~\ref{tab:cifar10_layers}).

\FloatBarrier

\subsection{Feature binning experiments}
To test whether the importance of model compression would hold when $I(X; Z_l^s)$ was estimated by binning deterministic features into discrete categories (Section~\ref{sec:7}), we conducted toy clustering experiments (Appendix~\ref{s:experiments1}) on deterministic feature models. 
The binning implementation of \citet{saxe2019information} was reused to discretize the activity of each node into 10 buckets and the information bottleneck was implemented using the surrogate objective of \citet[Eq. 1]{kirsch2020unpacking}.
216 models were trained across MLP architectures and hyperparameters and SWAG was used to estimate the posterior. Model details and results are given in \cref{app:binning}.
We found that imposing the information bottleneck regularizer decreased the generalization prediction performance of feature compression metrics $\hat{I}(X; Z_l^s)$ and $\hat{I}(X; Z_l^s | Y)$, and combining model compression with feature compression metrics increased performance.

\section{Proof Sketch}\label{sec:proofsketch}

In this section, we further provide proof sketches of Theorems \ref{thm:1}-\ref{thm:2} for readers interested in having an overview of the proofs. The complete proofs are in Appendix~\ref{app:proofs}.

\subsection{Proof Sketch of Theorem \ref{thm:1}}

Fix  $l \in [D]$ and  $\phi^s_l$   independently of the training dataset $s$. Let $T$ be the standard \textit{typical set} of 
$Z_{l}^s$ in information theory. As we will see in the later sketch, we combine  deterministic decompositions and probabilistic bounds with respect to the randomness of new fresh samples $X$  and datasets  $s$.  The usages of probabilistic bounds for different sample spaces enable the exponential improvement over the previous bounds.

\noindent{\textbf{Step 1.}} Decompose the generalization gap into two terms as $\Delta(s)=A+B,$
where $A$ corresponds to the case of $Z_{l}^s \in T$, while  $B$ is for the case of $Z_{l}^s \notin T$. We will bound $A$ and $B$ separately.

\noindent{\textbf{Step 2.}}  By standard argument from information theory, we have $\PP(Z_{l}^s  \notin T\mid Y=y )\le \Ocal(\frac{1}{\sqrt{n}})$ 
where the probability is with respect to $X$, with which we can roughly argue \footnote{Although this requires a refinement of the standard argument. In Appendix~\ref{app:proofs}, we refine  the  argument using  the McDiarmid's inequality and a further decomposition of $B$.} that $B \le\Ocal(\frac{1}{\sqrt{n}}).$

\noindent{\textbf{Step 3.}} To bound $A$,  we argue that $A$ is bounded by a concentration gap of a special multinomial distribution over the elements of $T$, which is bounded roughly  as $A=\Ocal(\sqrt{(\ln |T|) /n})$ (with high probability with respect to $S$), by using a recent statistical result on multinomial distributions \citep[Lemma 3 \& Proposition 3]{kawaguchi2022robust}.
Then, the standard argument from information theory approximately bounds the size  of the typical set as $|T| \leq 2^{I(X;Z_{l}^s|Y)+C_{T}}$ for some $C_{T}>0$, approximately resulting in: with high probability
$$
A=\Ocal\left(\sqrt{\frac{\ln |T|}{n}}\right)=\tilde \Ocal\left(\sqrt{\frac{(I(X;Z_{l}^s|Y)+1) }{n}}\right).
$$ 

\noindent{\textbf{Step 4.}} Finally, By combining the above bounds on $A$ and $B$, we approximately conclude the result.

\subsection{Proof Sketch of Theorem \ref{thm:2}}

Based on the result  of Theorem \ref{thm:1} for  fixed  $l \in [D]$ and  $\phi^s_l$, we further generalize it for flexible $l$ and learnable $\phi^s_l$. Our proof is based on the tricky usage of probabilistic bounds for different sample spaces in Theorem \ref{thm:1}'s proof. 

\noindent{\textbf{Step 1.}} Let $l \in [D]$. We first find a hypothesis space $\Phi^{l}_\delta$  such that  $\PP(\phi_{l}^{S}\notin \Phi^{l}_\delta) \le \delta$ and $|\Phi_{\delta}^l| \leq 2^{I(\phi_{l}^{S}; S)+C_{\delta}}$ for some $C_{\delta}\ge0$. 
We then construct  the corresponding  hypothesis space $\Hcal$ by $\Hcal=\cup_{\phi_l \in \Phi_{\delta}^l} \Hcal_{\phi_l}$ where $\Hcal_{\phi_l}=\{g_l \circ \phi_l \mid  g_{l}: \Zcal_l \rightarrow \RR^{m_y}
\}$. 

\noindent{\textbf{Step 2.}} We then obtain the sample complexity bound for each $\Hcal_{\phi_l}$ (for each $\phi_l \in \Phi_{\delta}^l$) by 
using the result of  Theorem \ref{thm:1}. For each $\phi_l\in \Phi_{\delta}^l$ that is fixed independently of $s$;  i.e.,  $\PP(\forall f  \in \Hcal_{\phi_l}, \Bcal(f)\le  J_{l}(\delta))\ge 1-\delta$ where $\Bcal(f)=\EE_{X,Y}[\ell(f(X),Y)] - \frac{1}{n} \sum_{i=1}^n \ell(f(x_{i}), y_i)$ and $J_l(\delta)$ is  the right-hand side of Eq.~\eqref{eq:new:6}. Then, by taking union bound with the equal weighting over all $\phi_l \in \Phi_{\delta}^l$, we have $\PP(\forall f  \in \Hcal_{}, \Bcal(f)\le  J_{\delta,l})\ge 1-\delta,$
where $J_{\delta,l}=J_{l}(\delta/(2^{I(\phi_{l}^{S}; S)+C_{\delta}}))$. 

\noindent{\textbf{Step 3.}} We now want to show that this bound holds for  $\Bcal(f^{s})$ instead of $\Bcal(f)$ for $f \in \Hcal$. This is achieved if $f^{s} \in \Hcal$. Since  $\PP(f^{S}\in \Hcal)\ge1-\delta$ from the construction of $\Hcal$ and  $\PP(A \cap B) \le \PP(B)$ for any events $A$ and $B$, the following holds: 
\begin{align*}
&\PP(\Bcal(f^{S})\le J_{\delta,l})\\
& \ge\PP_{}(f^{S}\in \Hcal)\PP_{}(\Bcal(f^{S})\le J_{\delta,l}\mid f^{S}\in \Hcal)
\\ & \ge \PP(f^{S}\in \Hcal)(1-\delta)\ge1-2\delta.
\end{align*}  
Therefore, 
by replacing $\delta$ with $\delta/2$, we have $\PP(\Bcal(f^{S})\le J_{\delta/2,l})\ge1-\delta.$

\noindent{\textbf{Step 4.}} For the case of $l=D+1$,
the proof is significantly simplified because an entire model is an encoder as $f=\phi_{D+1}$; i.e., we  replace the result of Theorem \ref{thm:1}  with  Hoeffding's inequality to conclude
that   $\PP(\forall f  \in \Hcal_{\phi_{D+1}} , \Bcal(f)\le  J_{D+1}(\delta))\ge1-\delta$ where $J_{D+1}(\delta)=\Rcal(f) \sqrt{(\ln(1/\delta))/(2n)}$. Using the same steps as the case of $l \in [D]$, we prove that $\PP( \Bcal(f^{S})\le J_{\delta/2,D+1})\ge1-\delta$, where  
$J_{\delta,D+1}=J_{D+1}(\delta/(2^{I(\phi_{l}^{S}; S)+C_{\delta}}))$. By taking union bounds over $l \in \bD\subseteq \{1,2,\dots,D+1\}$, we conclude $\PP(\forall l \in\bD , \Bcal(f^{S})\le  J_{\delta/(2|\bD|),l})=\PP(\Bcal(f^{S})\le \min_{l \in\bD} J_{\delta/(2|\bD|),l})\ge 1-\delta.$

\noindent{\textbf{Step 5.}} Organizing  the expression of $J_{\delta/(2|\bD|),l}$   yields  the right-hand side of \cref{eq:11}, which proves Theorem \ref{thm:2}.

%

\section{Related Works}
The implicit minimization of mutual information $I(X;Z)$ has been studied with the motivation of understanding why deep learning works through the lens of information bottlenecks \citep{shwartz2017opening}. The previous work assumes the benefit of minimizing $I(X;Z)$ and questioned whether the training of deep neural networks implicitly result in the minimization of $I(X;Z)$. In contrast, we studied the benefit of (implicitly or explicitly) controlling $I(X;Z)$. 

In this paper, we consider \textit{the generalization gap in deep learning} \citep{nagarajan2019uniform, zhang2021understanding, zhangdoes,kawaguchi2022robust,kawaguchi2022generalization,hu2022extended}, which is different from generalization gaps studied in the field of information bottleneck. In the field of information bottleneck, previous studies have analyzed a generalization gap between the true mutual information and its empirical estimate \citep{shamir2010learning, tishby2015deep} and the generalization gap on $\Ccal(q_\theta(Z|X), q(Y|Z))$ \citep{vera2018role}  where $\Ccal(p, q)$ is the cross entropy of $q$ relative to $p$, $q_\theta(Z|X)$ is a randomized encoder with learnable parameters $\theta$, and $q(Y|Z)$ is a simple count-based decoder \textit{with no learnable parameter}. Unlike ours, their bounds on $\Ccal(q_\theta(Z|X), q(Y|Z))$ scale with $\frac{|\Xcal| \ln n}{\sqrt{n}}+\frac{|\Zcal|}{\sqrt{n}}$ (due to their dependence on $\frac{1}{P_X(x_{\min})} \ge |\Xcal|$ and $|\Ucal|=|\Zcal|$ in their notation). This dependence on $|\Xcal|$ makes their bounds inapplicable as it requires the number of samples $n \gg |\Xcal|^2$. Here, the cross entropy $\Ccal(q_\theta(Z|X), q(Y|Z))$ studied in the previous work is also different from the cross-entropy loss of deep learning, $\Ccal(p(Y|X), q_\theta(Y|X))$, where $p(Y|X)$ is a target distribution and $q_\theta(Y|X)$ represents a deep neural network with learnable parameters $\theta$. Therefore, we could not rely on any of these previous results from the field of information bottleneck.

Another related yet different topic of information theory  is to use  $I(f^{S}; S)$  to compute generalization bounds \citep{xu2017information,bassily2018learners,hellstrom2020generalization,steinke2020reasoning}. However, these previous bounds are  \textbf{\textit{not}}  about information bottleneck as these do \textbf{\textit{not}} utilize  $I(X;Z_{l}^s|Y)$ (or $I(X;Z_{l}^s)$) and only uses $I(f^{S}; S)$, the mutual information between the training dataset $S$ and the \textbf{\textit{entire}} model $f^{S}=\phi^{S}_{D+1}$. Thus, the previous bounds cannot provide insights or justifications on the information bottleneck principle unlike our bounds. Moreover, in Section \ref{sec:experiments}, we demonstrate the advantage of $I(X;Z_{l}^s|Y)+I(\phi_{l}^{S}; S)$ in our bound over $I(f^{S}; S)$ in the previous bounds. Here, notice that $I(\phi_{l}^{S}; S) \neq I(f^{S}; S)$ for any $l\neq D+1$, and we always have $I(\phi_{1}^{S}; S)\le \cdots \le I(\phi_{D}^{S}; S) \le I(\phi_{D+1}^{S}; S) = I(f^{S}; S)$ (e.g., see Fig. \ref{fig:haha}). See Appendix~\ref{app:11} for more comparison with these previous bounds, which are \textbf{\textit{not}} of information bottlenecks although they use mutual information.

The various interesting properties of information bottleneck are discussed in \citep{achille2018emergence}. However, they do not provide generalization bounds with information bottlenecks. Among others, they discuss a connection between $\mathbf{I}(X;Z_{l}^s)$ and $\mathbf{I}(\tilde \theta; S)$, where  $\mathbf{I}$ represents an over-simplified version of  mutual information in which everything is ignored except an artificially added noise; i.e., $\mathbf{I}(\tilde \theta ; S)\triangleq-\frac{1}{2} \sum_{i=1}^d \log \alpha_i$ (this definition is given in Remark 4.2 of \citealp{achille2018emergence})\ where $\alpha_i$ is the variance of the Gaussian noise $\epsilon$ that is artificially multiplied to the  learned weights $\theta$ after training as $\tilde \theta = \epsilon \theta$. In other words, $\mathbf{I}$ completely ignores the dependence of $\theta$ on $S$, although it is the main factor in the question of generalization. For example, if we set $\alpha_i=1$, we have $\mathbf{I}(\tilde \theta; S)=0$ always, despite the fact that $I(\theta; S) \neq 0$ when $\theta$ is learned from $S$. This shows that the previous paper did not  consider the mutual information between $\theta$ and $S$; it used the entropy of the artificially multiplied Gaussian noise $\epsilon$ as mutual information.
 Thus the meaningful factors of mutual information are ignored in the previous work. 

There are generalization bounds via different approaches including VC-dimension \citep{vapnik1999nature,bartlett2019nearly}, Rademacher  complexity \citep{bartlett2002rademacher,truong2022rademacher,trauble2023discrete},  stability \citep{bousquet2002stability,deng2021toward}, robustness \citep{xu2012robustness,liudiscrete,kawaguchi2022robust,liu2023adaptive}, and PAC-Bayes \citep{dziugaite2021role, lotfi2022pac}. Unlike these previous studies, we provided the first generalization bound via the information bottleneck.

\section{Conclusion} 
This study completed the proof of the previous conjecture with near-exponential improvements for the setting of fixed representations, proved the first rigorous generalization bound for the setting of learning representations,
and empirically strengthened the findings with supporting experiments. This paper makes a contribution on technical aspects relevant for current and future methods of deep learning through the lens of information bottlenecks. Whereas information bottleneck is explicit in various algorithms (e.g., \citealp{federici2020learning, sun2022graph,li2022invariant,li2022explanation,su2023vision}), it is also interesting to motivate future methods based on implicit effects of architectures (e.g., transformer and convolution) and training (e.g., SGD and self-supervised objectives) on information bottlenecks.

\bibliography{all}
\bibliographystyle{icml2023}


\appendix
\onecolumn

\appendix

\allowdisplaybreaks

\subsubsection*{Reproducibility Statement}
For the theoretical results, complete proofs are provided. For the empirical experiments, source code is available at: \href{https://github.com/xu-ji/information-bottleneck}{https://github.com/xu-ji/information-bottleneck}



\section{Experimental details for \cref{s:experiments1}}\label{app:exp1}

\subsection{Training}

\paragraph{Data.} The dataset was 5-way classification on 2D clustered inputs (\cref{fig:data}). Each dataset draw contained 50 training points and 250 test points.

\begin{figure}[ht]
    \centering
\includegraphics[width=0.5\textwidth]{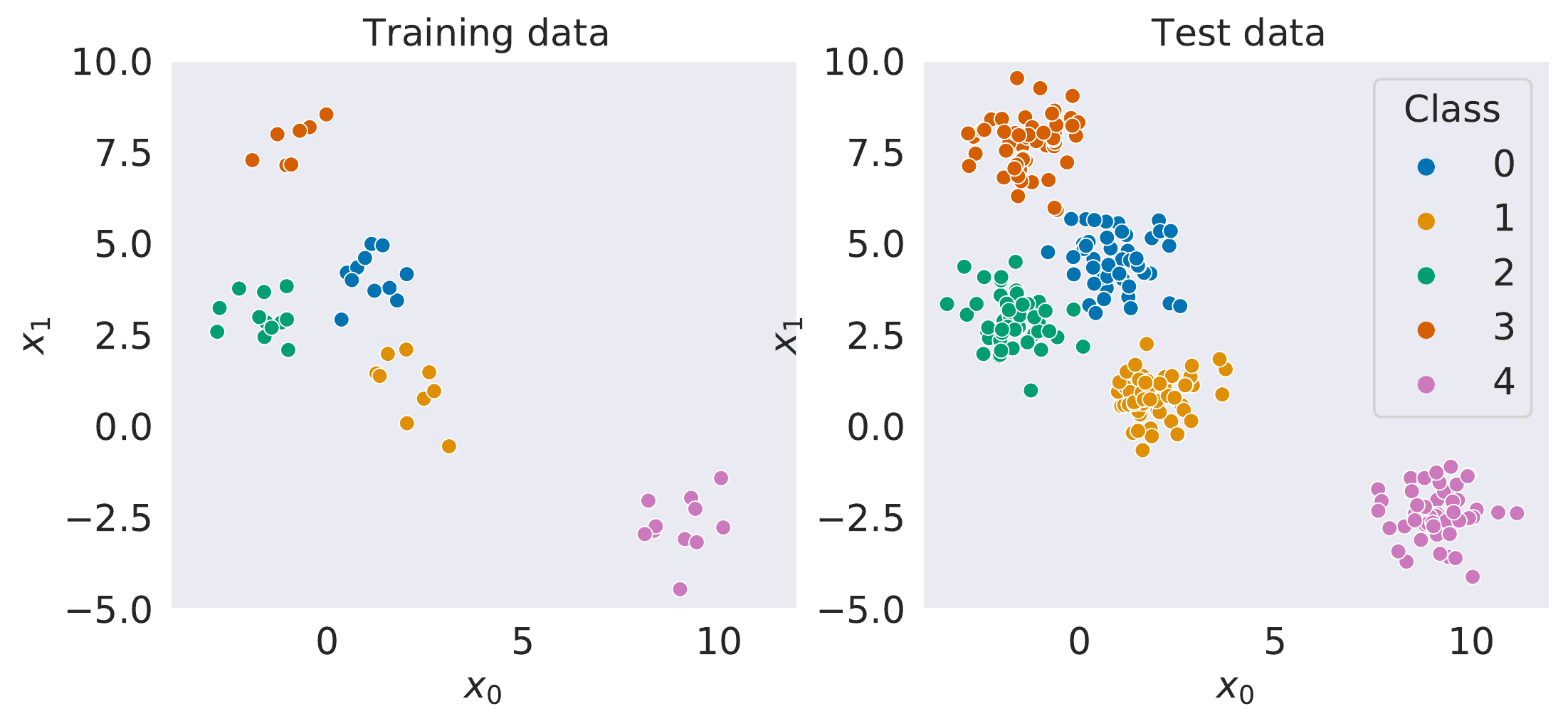}
    \caption{Example draw for 2D classification dataset.}
    \label{fig:data}
\end{figure}

\FloatBarrier

\paragraph{Training.}\label{app:training} 216 models were trained for all combinations of options: 4 ReLU-activated MLP architectures (per-layer widths of [256, 256, 128, 128],
    [128, 128, 64, 64],
    [64, 64, 32, 32],
    [32, 32, 16, 16]), 3 weight decay rates (0, 0.01, 0.1), 3 dataset draws, 3 random seeds, and 2 sample set sizes for evaluating $I(X; Z_l^s)$ and $I(X; Z_l^s | Y)$. 
Final features were sampled from deterministically computed mean and standard deviation vectors and mapped to class probabilities with a softmax-activated linear layer. 
The expectation in MI over $\PP(Z_l^s| x)$ depends on neural network parameters, so the
reparameterization trick was used to optimize the expectation with respect to neural network parameters by rewriting the expectation over $Z_l^s$ as an expectation over random noise \citep{kingma2015variational}.
Models were trained for 300 iterations with a learning rate of $\eta_\theta = 1e-2$. Out of all settings, 36 models with training set accuracy $< 85\%$  were discarded. Statistics for accepted models are given in \cref{tab:stats}. Of 180 accepted models, 9 (5\%) had a small negative generalization gap in loss ($-0.0258 \pm 0.0161$). These models were not screened out before evaluating the metrics because the generalization gap is not estimatable without access to labelled test data. 

\begin{table}[ht] 
    \centering
    
    \begin{tabular}{l c c c c}
    \toprule
     & Mean & Standard deviation & Max & Min  \\
    \midrule
Train loss & 0.1265 & 0.1603 & 0.5757 & 0.0018  \\
Train accuracy & 0.9680 & 0.0479 & 1.0000 & 0.8600  \\
Test loss & 0.1984 & 0.1593 & 0.5487 & 0.0247  \\
Test accuracy & 0.9356 & 0.0568 & 0.9960 & 0.7880 \\
\bottomrule
    \end{tabular}
    \caption{Performance statistics of 180 accepted models.}
    \label{tab:stats}
\end{table}

\paragraph{Constrained optimization.} 
In each learning iteration, the gradient of the relaxed problem $\theta^s \leftarrow \theta^s - \eta_{\theta^s} \nabla_{\theta^s} \big [ \big( - \frac{1}{|s|} \sum_{(x, y) \in s} \big( \log \frac{1}{k} \sum_{j=1}^k q_{\theta^{s}}(y | z^j) \big) \big)
+ \lambda (\rho - \hat{I}(X; Z_l^s)) \big ] $ was applied to update the model and $\lambda \leftarrow \lambda + \eta_\lambda (\rho - \hat{I}(X; Z_l^s))$ was applied to update the multiplier $\lambda$, where $\eta_{\theta^s}$ and $\eta_\lambda$ are learning rates.
For a similar use case for dual gradient descent, see \citet{eysenbach2021robust}. Example plots showing the change in $\hat{I}(X; Z_l^s)$ and $\lambda$ during training are given in \cref{fig:constrained_opt}. Note that the gradient of $\lambda$ is a term in the gradient of $\theta^s$, thus updating $\lambda$ incurs negligible additional cost.

\begin{figure}[h]
    \centering
    \includegraphics[width=0.3\textwidth]{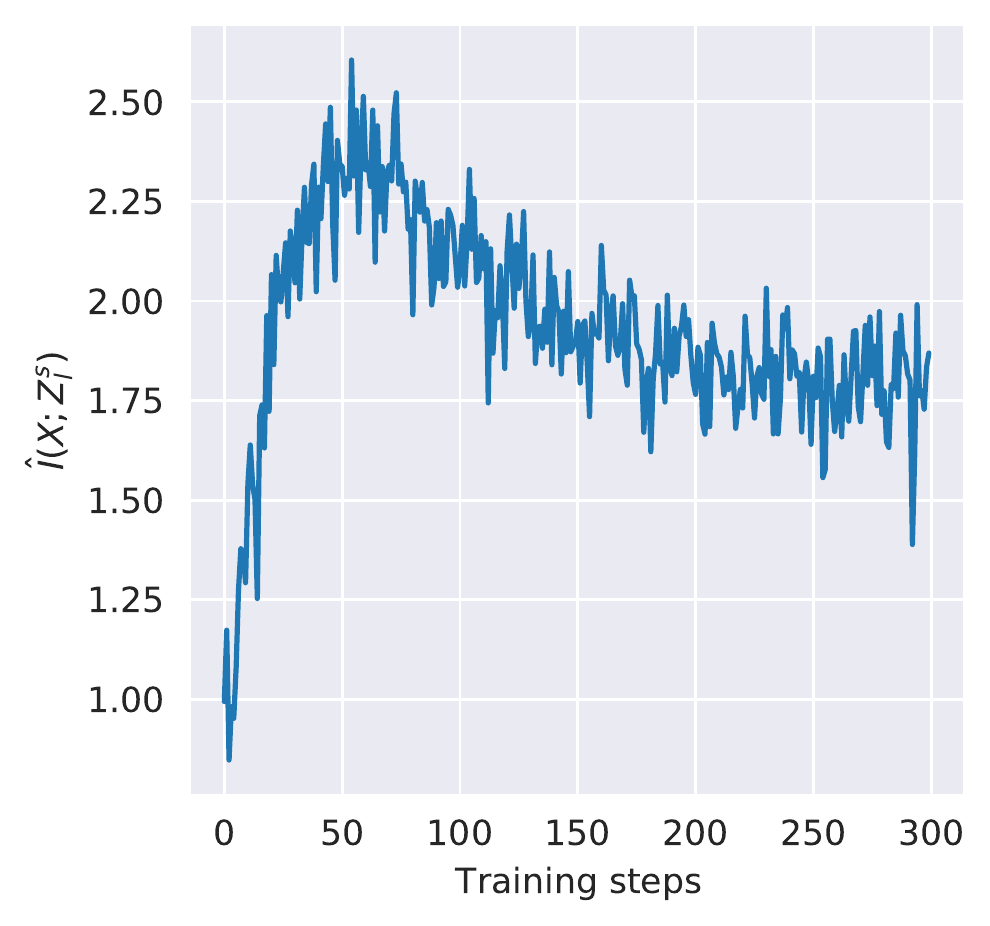}%
    \includegraphics[width=0.3\textwidth]{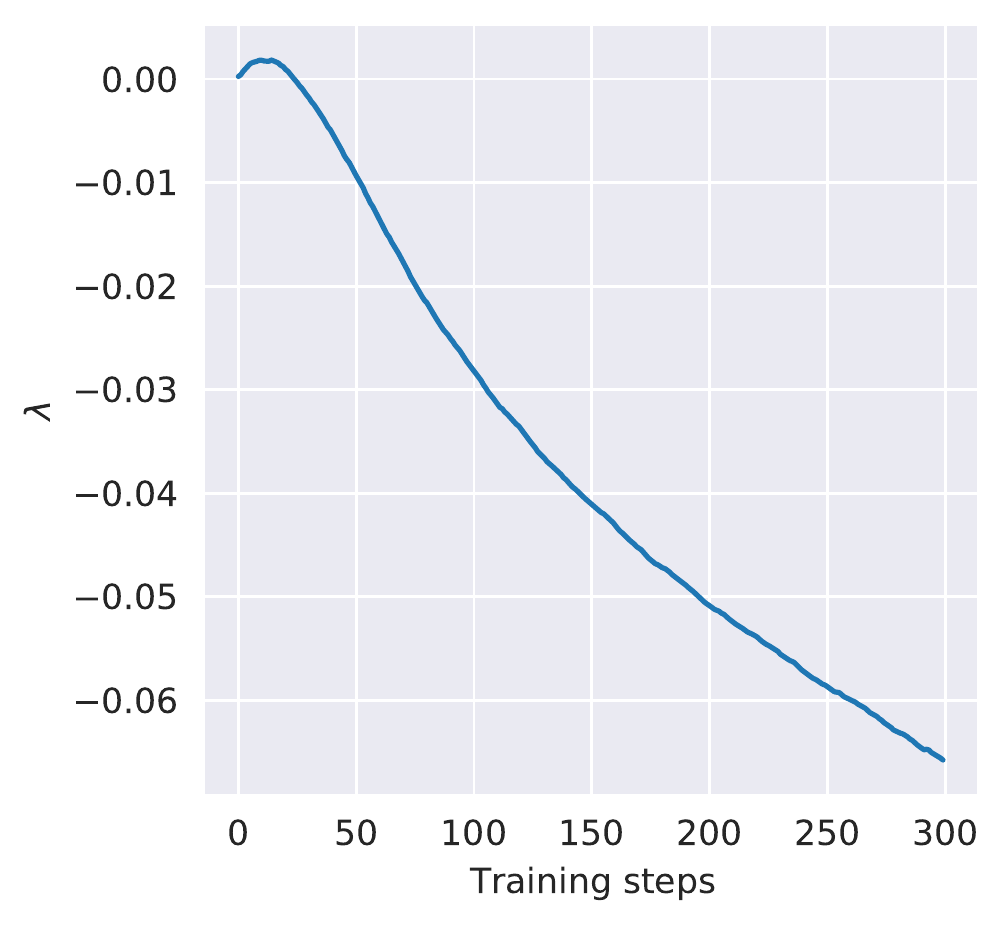}
    \caption{Example plots of $\hat{I}(X; Z_l^s)$ and $\lambda$ during constrained optimized of a neural network model with $\rho = 1.5$.}
    \label{fig:constrained_opt}
\end{figure}

\subsection{Metrics}\label{app:experiments1_metrics}
SWAG provides an estimate of the posterior as a multivariate Gaussian by averaging gradient updates across training epochs. SWAG was used in the estimator 
$
\breve{I}({S}; \theta_l^{S}) = (1/|D|) \sum_{s \in D} (1/k) \sum_{j=1}^k (\log p(w^j | s)) - ((1/|D|) \sum_{s' \in D} \log p(w^j | s')) \ge (1/|D|) \allowbreak \sum_{s \in D} (1/k) \sum_{j=1}^k \log (p(w^j | s)/((1/|D|)\sum_{s' \in D} p(w^j | s')))
$,
where $w^j \sim \PP( \allowbreak \theta_l^{S} \allowbreak | {S} = s)$ for all $j$ and the upper bound is 
obtained via Jensen's inequality. We found that averaging in the log domain by using the upper bound improved numerical  stability compared to averaging in the probability domain due to large magnitudes of $\log p(w^j | s)$.

Mutual information between variables is a measure of their statistical dependence and is defined in our setting as:
\begin{align}
I(X; Z_l^s) &= \mathbb{E}_{X, Z_l^s} \log \frac{q_{\theta^{s}}(Z_l^s | X)}{\mathbb{E}_{X'} q_{\theta^{s}}(Z_l^s| X')}, \\
I(X; Z_l^s | Y) &= \mathbb{E}_{Y} \mathbb{E}_{X_Y, Z_l^s} \log \frac{q_{\theta^{s}}(Z_l^s | X_Y)}{\mathbb{E}_{X'_Y} q_{\theta^{s}}(Z_l^s | X'_Y)}, \\
I({S}; \theta_l^{S}) &= \mathbb{E}_{{S}} \mathbb{E}_{\theta_l^{S} | S} \log \frac{\PP(\theta_l^{S} | {S})}{\mathbb{E}_{{S}''} \PP(\theta_l^{S''} | {S}'')}, \\
\end{align}
where $X'$, $X'_Y$, ${S''}$ are independent copies of variables $X$, $X_Y$, ${S}$ respectively.
Let $C$ be the set of classes and $s_c$ denote dataset samples for class $c$. 
We use $\hat{I}$ to denote estimation by Monte-Carlo sampling and $\breve{I}$ to denote upper bounding via the Jensen inequality:

\begin{align}
\hat{I}(X; Z_l^s) &= \frac{1}{|s|} \sum_{(x, y) \in s} \frac{1}{k} \sum_{j=1}^k \log \frac{q_{\theta^{s}}(z^j | x)}{\frac{1}{|s|} \sum_{(x', y') \in s} q_{\theta^{s}}(z^j | x')}, \label{eq:exp2_MI_mc} \\
\hat{I}(X; Z_l^s | Y) &= \frac{1}{|s|} \sum_{c \in C} \sum_{(x, y) \in s_c} \frac{1}{k} \sum_{j=1}^k  \log \frac{q_{\theta^{s}}(z^j | x)}{\frac{1}{|s_c|} \sum_{(x', y') \in s_c} q_{\theta^{s}}(z^j | x')},  \\
\breve{I}(X; Z_l^s) &= \frac{1}{|s|} \sum_{(x, y) \in s} \frac{1}{k} \sum_{j=1}^k \left( \log q_{\theta^{s}}(z^j | x) - \bigg( \frac{1}{|s|} \sum_{(x', y') \in s} \log q_{\theta^{s}}(z^j | x') \bigg) \right), \label{eq:exp2_MI_jensen} \\
\breve{I}(X; Z_l^s | Y) &= \frac{1}{|s|} \sum_{c \in C} \sum_{(x, y) \in s_c} \frac{1}{k} \sum_{j=1}^k \left( \log q_{\theta^{s}}(z^j | x) - \bigg( \frac{1}{|s_c|} \sum_{(x', y') \in s_c} \log q_{\theta^{s}}(z^j | x') \bigg) \right),
\end{align}
where $z^j \sim q_{\theta^{s}}(Z_l^s | x)$ for all $j$.

For mutual information between the model and training dataset, we compute:
\begin{align}\label{eq:model_compr_jensen}
    \breve{I}({S}; \theta_l^{S}) &= \frac{1}{|D|} \sum_{s \in D} \frac{1}{k} \sum_{j=1}^k \left( \log p(w^j | s) - \bigg( \frac{1}{|D|} \sum_{s' \in D} \log p(w^j | s') \bigg) \right),
\end{align}
where $w^j \sim \PP(\theta_l^{S} | {S} = s)$. 
The learning algorithm is defined by the variables excluding the training dataset, i.e. architecture, weight decay, multiplier learning rate, seed.
Denote the average values of $\breve{I}({S}; \theta_l^{S})$ and $\hat{I}(X; Z_l^s | Y)$ across learning algorithms by $\mu$ and $\mu'$ respectively. 
The rescaled value $\tilde{I}({S}; \theta_l^{S})$ is defined by:
\begin{align}
    \tilde{I}({S}; \theta_l^{S}) = \frac{\mu'}{\mu} \breve{I}({S}; \theta_l^{S}).
\end{align}

Note that MI is measured in universal units, but we test multiple estimation procedures for $I(X; Z_l^s)$ and $I({S}; \theta_l^{S})$. Scaling was tested because of the difference of estimators, rather than of units of the estimated quantity.

\subsection{Additional results}

As the true data generator is available for the toy dataset, 2 sample set sizes were considered for computing estimators of $I(X; Z_l^s)$ and $I(X; Z_l^s | Y)$ during evaluation of the metrics (\cref{app:training}): using the training dataset (50 data points), and using a sample 10 times larger drawn from the generator (500 data points). 
Using the larger sample size (\cref{tab:2_large,tab:exp1_summary}) improved the predictive ability of the baseline representation compression metrics compared to using the small sample size (\cref{tab:2_small}). 

\begin{table}[h]
    \centering
   
    \begin{tabular}{l c c c}
\toprule
Metric & Spearman &Pearson & Kendall  \\
\midrule

Num. params. $m$     & -0.0576 & -0.0294 & -0.0402 \\
$m \log m$      & -0.0576 & -0.0287 & -0.0402 \\
$\sum_\lb \theta_\lb^{s}$       & -0.2550 & -0.1366 & -0.1567 \\
$\prod_\lb \theta_\lb^{s}$      & -0.2172 & -0.0871 & -0.1374 \\
\midrule

$\hat{I}(X; Z_l^s)$     & 0.1816 & 0.2878 & 0.1280 \\
$\hat{I}(X; Z_l^s | Y)$         & 0.1749 & 0.3167 & 0.1129 \\
$\breve{I}(X; Z_l^s)$   & 0.1648 & 0.3712 & 0.1223 \\
$\breve{I}(X; Z_l^s | Y)$       & 0.2293 & 0.3842 & 0.1515 \\
\midrule

$\breve{I}({S}; \theta_l^{S})$    & 0.0020 & 0.0211 & 0.0074 \\
$\breve{I}({S}; \theta_{D+1}^{S})$        & -0.0221 & 0.0091 & -0.0090 \\

$\breve{I}({S}; \theta_l^{S}) + \hat{I}(X; Z_l^s)$        & 0.0178 & 0.0211 & 0.0178 \\
$\breve{I}({S}; \theta_l^{S}) + \hat{I}(X; Z_l^s | Y)$    & 0.0163 & 0.0211 & 0.0167 \\
$\breve{I}({S}; \theta_l^{S}) + \breve{I}(X; Z_l^s)$      & 0.0135 & 0.0212 & 0.0162 \\
$\breve{I}({S}; \theta_l^{S}) + \breve{I}(X; Z_l^s | Y)$  & 0.0164 & 0.0211 & 0.0167 \\

$\tilde{I}({S}; \theta_l^{S}) + \hat{I}(X; Z_l^s)$        & 0.1104 & 0.1401 & 0.0794 \\
$\tilde{I}({S}; \theta_l^{S}) + \hat{I}(X; Z_l^s | Y)$    & 0.2253 & 0.3177 & 0.1567 \\
$\tilde{I}({S}; \theta_l^{S}) + \breve{I}(X; Z_l^s)$      & 0.2684 & 0.3928 & 0.1912 \\
$\tilde{I}({S}; \theta_l^{S}) + \breve{I}(X; Z_l^s | Y)$  & \textbf{0.3015} & \textbf{0.4130} & \textbf{0.2085} \\

\bottomrule
    \end{tabular}
    \caption{Correlation coefficients for metrics and the generalization gap in loss, large sample setting for estimation of $I(X; Z_l^s)$ and $I(X; Z_l^s | Y)$. $\theta_\lb^{S}$ denotes parameters of layer $\lb$ and $\theta_l^{S}$ denotes parameters up to layer $l$. Layer $l$ is fixed to the penultimate layer. $>0$ indicates positive correlation. }
    \label{tab:2_large}
\end{table}

\begin{table}[h]
    \centering
    
    \begin{tabular}{l c c c }
\toprule
Metric & Spearman &Pearson & Kendall  \\
\midrule

$\hat{I}(X; Z_l^s)$     & -0.1066 & -0.0972 & -0.0709  \\
$\hat{I}(X; Z_l^s | Y)$         & 0.0868 & 0.0394 & 0.0698   \\
$\breve{I}({S}; \theta_l^{S}) + \hat{I}(X; Z_l^s | Y)$    & 0.2360 & 0.2489 & 0.1611  \\
$\tilde{I}({S}; \theta_l^{S}) + \hat{I}(X; Z_l^s | Y)$    & \textbf{0.3277} & \textbf{0.2888} & \textbf{0.2257} \\

\bottomrule
    \end{tabular}
    \caption{Correlation coefficients for metrics and the generalization gap in loss, small sample setting for estimation of $I(X; Z_l^s)$ and $I(X; Z_l^s | Y)$. Best metric and ablation shown. Layer $l$ is fixed to the penultimate layer. $>0$ indicates positive correlation.}
    \label{tab:2_small}
\end{table}

\begin{figure}[h]
\centering
\begin{subfigure}[$\breve{I}(X; Z_l^s | Y)$]{\includegraphics[width=.28\textwidth]{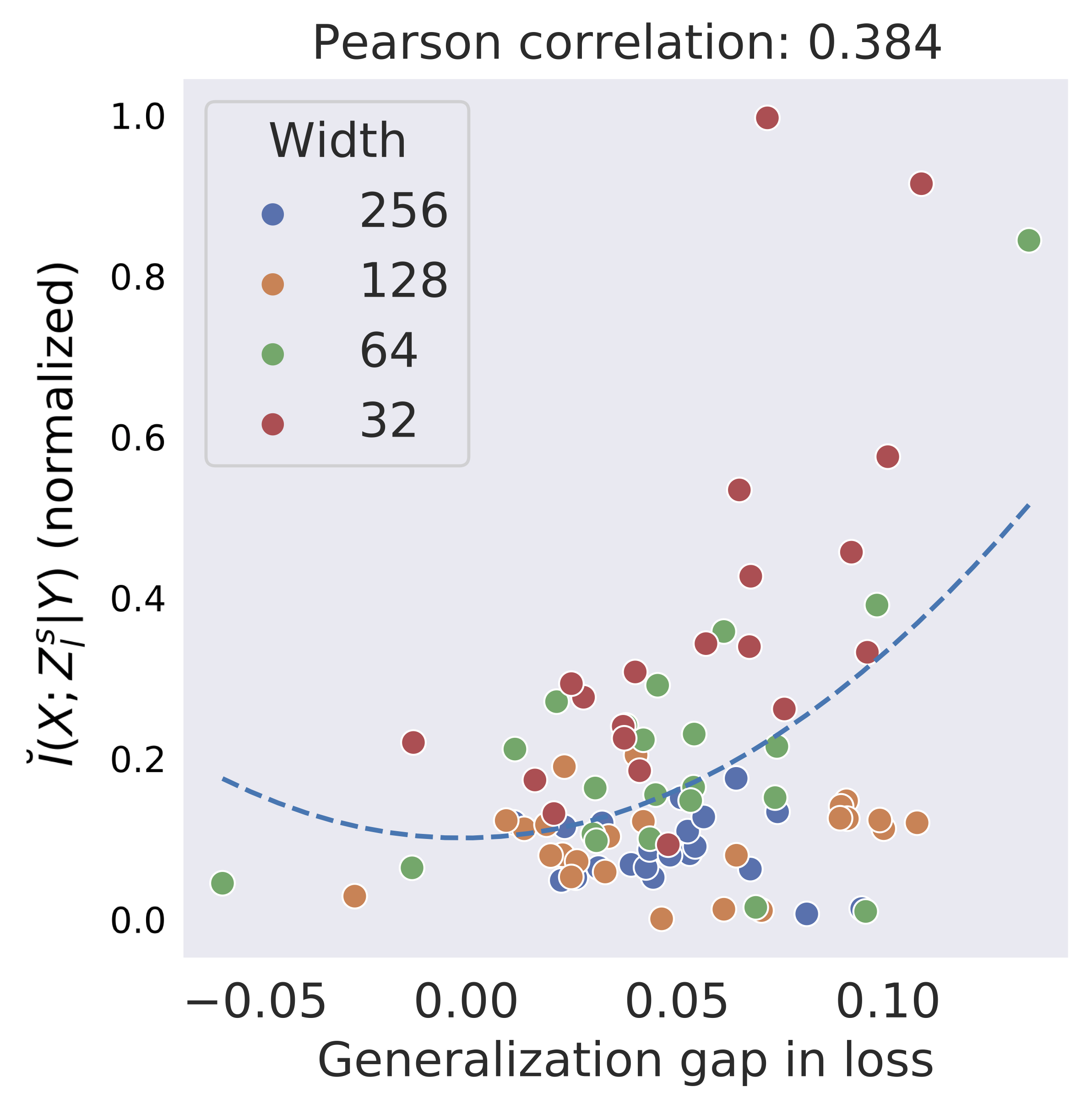}}
\end{subfigure}%
\begin{subfigure}[$\tilde{I}({S}; \theta_l^{S}) + \breve{I}(X; Z_l^s)$]{
  \includegraphics[width=0.29\textwidth]{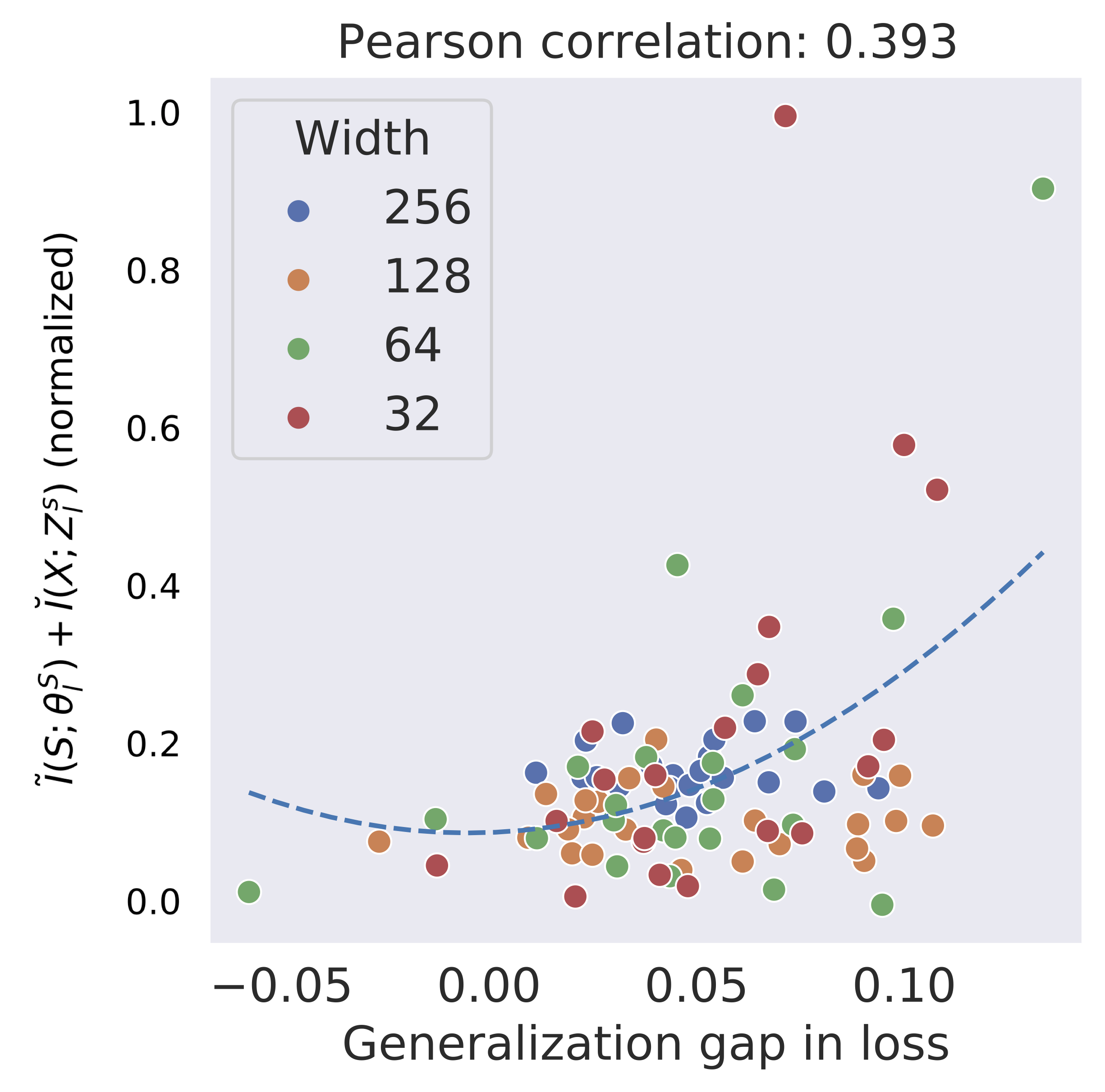}}
\end{subfigure}%
\begin{subfigure}[$\tilde{I}({S}; \theta_l^{S}) + \breve{I}(X; Z_l^s | Y)$ ]{\includegraphics[width=0.28\textwidth]{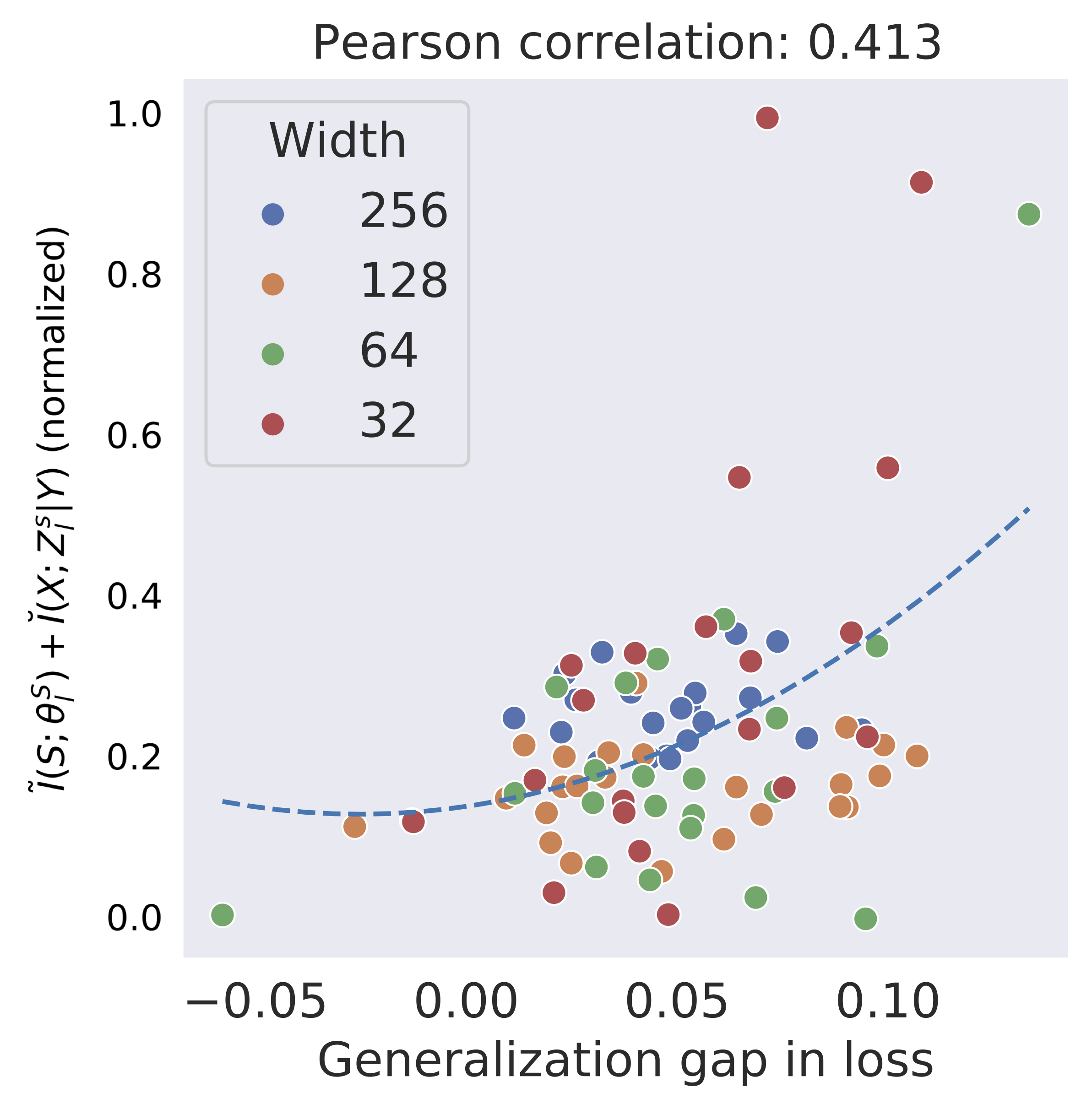}}
\end{subfigure}%
    \caption{
Correlation of best 3 metrics with the generalization gap. Color denotes network width. Dashed line denotes best polynomial fit with degree 2. Values are normalized by subtracting the minimum and dividing by the range.
    } 
    \label{fig:extra_2}
\end{figure}

\FloatBarrier

\section{MNIST and Fashion MNIST}\label{app:mnist}

We conducted experiments on the MNIST and Fashion MNIST datasets. 
These experiments follow the same protocol described in \cref{s:experiments1,app:exp1} except that MI was not constrained, in order to investigate predictive ability of the metrics in the setting of unconstrained stochastic feature models. 
144 models were trained over all combinations of options: 2 datasets, 3 ReLU-activated architectures (1 convolutional layer and 3 linear layers, with per hidden layer channel sizes of 
[64, 512, 512],
[32, 256, 256], 
[16, 128, 128]), 
2 weight decay rates (0, 1e-3), 2 batch sizes (128, 32), 3 dataset draws, 2 random seeds.
As in \cref{s:experiments1,app:exp1}, the penultimate layer of the network infers mean and standard deviation vectors that define a distribution over latent features.  
To construct 
multiple instances of the training dataset, we sampled training datasets of size 8K from the training set, and each test set was the original 10K test set. 
Performance of trained models are given in \cref{tab:fmnist_stats,tab:dmnist_stats}.
In line with \cref{s:experiments1,s:experiments2,app:exp1,app:exp2}, results on MNIST and Fashion MNIST indicate that metrics which combine model compression with representation compression outperform metrics for representation compression alone (\cref{tab:dmnist,tab:fmnist}).

\begin{table}[H] 
    \centering
 
    \begin{tabular}{l c c c c}
    \toprule
     & Mean & Standard deviation & Max & Min  \\
    \midrule

Train loss & 0.0071 & 0.0165 & 0.1053 & 0.0001 \\
Train accuracy & 0.9986 & 0.0060 & 1.0000 & 0.9628 \\
Test loss & 0.1355 & 0.0285 & 0.2564 & 0.0915 \\
Test accuracy & 0.9673 & 0.0065 & 0.9737 & 0.9358 \\

\bottomrule
    \end{tabular}
    \caption{Performance statistics for MNIST models.}
    \label{tab:dmnist_stats}
\end{table}

\begin{table}[H] 
    \centering
   
    \begin{tabular}{l c c c c}
    \toprule
     & Mean & Standard deviation & Max & Min  \\
\midrule

Train loss & 0.0692 & 0.0640 & 0.1908 & 0.0003 \\
Train accuracy & 0.9765 & 0.0234 & 1.0000 & 0.9329 \\
Test loss & 0.5609 & 0.1682 & 0.8819 & 0.3791 \\
Test accuracy & 0.8721 & 0.0090 & 0.8864 & 0.8262 \\

\bottomrule
    \end{tabular}
    \caption{Performance statistics for Fashion MNIST models.}
    \label{tab:fmnist_stats}
\end{table}

\begin{table}[H]
    \centering
    
    \begin{tabular}{l c c c }
\toprule
Metric & Spearman & Pearson & Kendall  \\
\midrule

$\hat{I}(X; Z_l^s)$     & 0.2738 & -0.1268 & 0.2178 \\

$\hat{I}(X; Z_l^s | Y)$         & 0.4399 & 0.2059 & 0.3243 \\

$\breve{I}(X; Z_l^s)$   & 0.7895 & 0.5467 & 0.6346 \\

$\breve{I}(X; Z_l^s | Y)$       & 0.7931 & 0.5348 & \textbf{0.6416} \\

\midrule

$\breve{I}({S}; \theta_l^{S})$    & 0.6004 & 0.7044 & 0.4193 \\

$\bar{I}({S}; \theta_l^{S})$      & 0.5328 & 0.6906 & 0.3771 \\

$\breve{I}({S}; \theta_{D+1}^{S})$        & 0.5384 & 0.6619 & 0.3768 \\

$\bar{I}({S}; \theta_{D+1}^{S})$  & 0.5328 & 0.6507 & 0.3771 \\

$\tilde{I}({S}; \theta_l^{S}) + \hat{I}(X; Z_l^s)$        & 0.6021 & 0.7043 & 0.4130 \\

$\tilde{I}({S}; \theta_l^{S}) + \hat{I}(X; Z_l^s | Y)$    & 0.5958 & 0.7044 & 0.3955 \\

$\tilde{I}({S}; \theta_l^{S}) + \breve{I}(X; Z_l^s)$      & \textbf{0.8352} & 0.6367 & \textbf{0.6452} \\

$\tilde{I}({S}; \theta_l^{S}) + \breve{I}(X; Z_l^s | Y)$  & \textbf{0.8303} & 0.6242 & 0.6384 \\

$\breve{I}({S}; \theta_l^{S}) + \hat{I}(X; Z_l^s)$        & 0.6021 & 0.7044 & 0.4130 \\

$\breve{I}({S}; \theta_l^{S}) + \hat{I}(X; Z_l^s | Y)$    & 0.5958 & 0.7044 & 0.3955 \\

$\breve{I}({S}; \theta_l^{S}) + \breve{I}(X; Z_l^s)$      & 0.7128 & \textbf{0.7626} & 0.5119 \\

$\breve{I}({S}; \theta_l^{S}) + \breve{I}(X; Z_l^s | Y)$  & 0.6566 & \textbf{0.7329} & 0.4610 \\

\bottomrule
    \end{tabular}
    \caption{MNIST. Correlation coefficients for metrics and the generalization gap in loss. 
    Layer $l$ is fixed to the penultimate layer. $>0$ indicates positive correlation.}
    \label{tab:dmnist}
\end{table}

\begin{table}[H]
    \centering
  
    \begin{tabular}{l c c c }
\toprule
Metric & Spearman & Pearson & Kendall  \\
\midrule

$\hat{I}(X; Z_l^s)$     & -0.0299 & -0.2056 & -0.0261 \\

$\hat{I}(X; Z_l^s | Y)$         & 0.2318 & 0.1185 & 0.1458 \\

$\breve{I}(X; Z_l^s)$   & 0.3861 & 0.5146 & 0.2293 \\

$\breve{I}(X; Z_l^s | Y)$       & 0.3848 & 0.5115 & 0.2308 \\

\midrule

$\breve{I}({S}; \theta_l^{S})$    & 0.3479 & 0.3191 & 0.2682 \\

$\bar{I}({S}; \theta_l^{S})$      & 0.3471 & 0.2804 & 0.2684 \\

$\breve{I}({S}; \theta_{D+1}^{S})$        & 0.3479 & 0.3187 & 0.2682 \\

$\bar{I}({S}; \theta_{D+1}^{S})$  & 0.3928 & 0.3121 & 0.2998 \\

$\tilde{I}({S}; \theta_l^{S}) + \hat{I}(X; Z_l^s)$        & 0.3377 & 0.3188 & 0.2469 \\

$\tilde{I}({S}; \theta_l^{S}) + \hat{I}(X; Z_l^s | Y)$    & 0.3446 & 0.3191 & 0.2660 \\

$\tilde{I}({S}; \theta_l^{S}) + \breve{I}(X; Z_l^s)$      & \textbf{0.5623} & \textbf{0.6488} & \textbf{0.4238} \\

$\tilde{I}({S}; \theta_l^{S}) + \breve{I}(X; Z_l^s | Y)$  & \textbf{0.5637} & \textbf{0.6441} & \textbf{0.4344} \\

$\breve{I}({S}; \theta_l^{S}) + \hat{I}(X; Z_l^s)$        & 0.3377 & 0.3191 & 0.2469 \\

$\breve{I}({S}; \theta_l^{S}) + \hat{I}(X; Z_l^s | Y)$    & 0.3446 & 0.3191 & 0.2660 \\

$\breve{I}({S}; \theta_l^{S}) + \breve{I}(X; Z_l^s)$      & 0.3734 & 0.3280 & 0.2677 \\

$\breve{I}({S}; \theta_l^{S}) + \breve{I}(X; Z_l^s | Y)$  & 0.3679 & 0.3217 & 0.2766 \\

\bottomrule
    \end{tabular}
    \caption{Fashion MNIST. Correlation coefficients for metrics and the generalization gap in loss. Layer $l$ is fixed to the penultimate layer. $>0$ indicates positive correlation.}
    \label{tab:fmnist}
\end{table}

\reve 

\FloatBarrier

\section{Feature binning} \label{app:binning}

Binning is a method for characterizing feature compression in neural networks with deterministic features \citep{shwartz2017opening}. 
To investigate the performance of feature and model compression for predicting generalization, we trained models for the toy clustering problem of \cref{s:experiments1} using MLPs with deterministic features. The binning implementation of \citet{saxe2019information} was reused to discretize the activity of each node into 10 buckets. 
216 models were trained over all combinations of: 
3 ReLU-activated MLP acrhitectures (5 fully connected layers with per hidden layer channel sizes of 
[512, 512, 256, 256],
[256, 256, 128, 128],
[128, 128, 64, 64]
), 
3 weight decay rates (0, 1e-3, 1e-2), 3 dataset draws, 3 random seeds, with and without information bottleneck (IB) feature regularization.
For models trained with IB regularization, we used the surrogate objective of \citet[Eq. 1]{kirsch2020unpacking}.
While feature compression and model compression both performed reasonably in the setting without IB regularization (\cref{tab:binning_noreg}), utilizing IB regularization significantly impaired the performance of metrics based solely on feature compression (\cref{tab:binning_reg}).

\begin{table}[h]
    \centering
    
    \begin{tabular}{l c c c }
\toprule
Metric & Spearman & Pearson & Kendall  \\
\midrule
$\hat{I}(X; Z_l^s)$     & \textbf{0.3271} & \textbf{0.3032} & \textbf{0.2235} \\
$\hat{I}(X; Z_l^s | Y)$         & \textbf{0.2431} & 0.2194 & \textbf{0.1643} \\
Num. param. $m$         & 0.0078 & -0.0259 & 0.0064 \\
$m \log m$      & 0.0078 & -0.0267 & 0.0064 \\
$\sum_l \theta_l^{s}$  & 0.0557 & 0.0770 & 0.0444 \\
$\prod_l \theta_l^{s}$         & -0.0559 & -0.1755 & -0.0191 \\

\midrule
$\breve{I}({S}; \theta_l^{S})$    & 0.1955 & 0.1851 & 0.1335 \\
$\bar{I}({S}; \theta_l^{S})$      & 0.1955 & 0.1940 & 0.1347 \\

$\tilde{I}({S}; \theta_l^{S}) + \hat{I}(X; Z_l^s)$        & 0.2046 & \textbf{0.2419} & 0.1423 \\
$\tilde{I}({S}; \theta_l^{S}) + \hat{I}(X; Z_l^s | Y)$    & 0.1846 & 0.2389 & 0.1272 \\
$\breve{I}({S}; \theta_l^{S}) + \hat{I}(X; Z_l^s)$        & 0.1530 & 0.2041 & 0.1043 \\
$\breve{I}({S}; \theta_l^{S}) + \hat{I}(X; Z_l^s | Y)$    & 0.1377 & 0.2041 & 0.0920 \\

\bottomrule
    \end{tabular}
    \caption{Results for clustering experiments using feature binning for estimation of MI, without information bottleneck regularization. Correlation coefficients for metrics and the generalization gap in loss. Layer $l$ is fixed to the penultimate layer. $>0$ indicates positive correlation.}
    \label{tab:binning_noreg}
\end{table}

\begin{table}[h]
    \centering
    
    \begin{tabular}{l c c c }
\toprule
Metric & Spearman & Pearson & Kendall  \\
\midrule
$\hat{I}(X; Z_l^s)$     & -0.0403 & -0.0440 & -0.0426 \\
$\hat{I}(X; Z_l^s | Y)$         & 0.0252 & 0.0390 & -0.0012 \\

Num. param. $m$         & -0.0103 & 0.0057 & -0.0229 \\
$m \log m$      & -0.0103 & 0.0053 & -0.0229 \\
$\sum_l \theta_l^{s}$  & -0.0579 & 0.0158 & -0.0457 \\
$\prod_l \theta_l^{s}$         & -0.1460 & -0.1640 & -0.1111 \\

\midrule

$\breve{I}({S}; \theta_l^{S})$    & 0.0252 & 0.0567 & 0.0159 \\
$\bar{I}({S}; \theta_l^{S})$      & 0.0370 & 0.0665 & 0.0208 \\

$\tilde{I}({S}; \theta_l^{S}) + \hat{I}(X; Z_l^s)$        & 0.0498 & 0.0545 & 0.0381 \\
$\tilde{I}({S}; \theta_l^{S}) + \hat{I}(X; Z_l^s | Y)$    & 0.0688 & 0.0647 & 0.0512 \\
$\breve{I}({S}; \theta_l^{S}) + \hat{I}(X; Z_l^s)$        & \textbf{0.0757} & \textbf{0.0815} & \textbf{0.0530} \\
$\breve{I}({S}; \theta_l^{S}) + \hat{I}(X; Z_l^s | Y)$    & \textbf{0.0794} & \textbf{0.0815} & \textbf{0.0525} \\

\bottomrule
    \end{tabular}
    \caption{Results for clustering experiments using feature binning for estimation of MI, with information bottleneck regularization. Correlation coefficients for metrics and the generalization gap in loss. Layer $l$ is fixed to the penultimate layer. $>0$ indicates positive correlation.}
    \label{tab:binning_reg}
\end{table}

\section{Experimental details for \cref{s:experiments2}}\label{app:exp2}

\subsection{Training}
540 models were trained for all combinations of the  options: 3 architecutres (PreResNet56, PreResNet83, PreResNet110), 3 weight decay rates (1e-3, 1e-4, 1e-5), 3 batch sizes (64, 128, 1024), 5 dataset draws and 4 random seeds. 
The PreResNet architecture \citep{he2016identity} consists of a convolutional layer, 3 residual blocks and a final linear prediction layer. We consider representations from $D=5$ layers: input layer, after convolutional layer, and after each residual block.
Models were trained for 200 epochs with SGD and a learning rate of $1e-2$. Statistics are given in \cref{tab:exp2_stats}.

\begin{table}[ht]
    \centering
    \begin{tabular}{l c c c c}
    \toprule
     & Mean & Standard deviation & Max & Min \\
     \midrule
Train loss & 0.2157 & 0.2815  & 1.4174 & 0.0005  \\
Train accuracy & 0.9282 & 0.0937  & 1.0000 & 0.5433   \\
Test loss & 0.9704 & 0.2023 & 1.6269 & 0.5989  \\
Test accuracy & 0.8043 & 0.0554 & 0.8770 & 0.5192  \\
\bottomrule
    \end{tabular}
    \caption{Performance statistics of 540 models.}
    \label{tab:exp2_stats}
\end{table}

\subsection{Metrics}\label{app:exp2_metrics}
We used the same metrics as defined in \cref{app:experiments1_metrics} and additionally test excluding the seed from the definition of the learning algorithm by averaging across seeds. Let $G$ denote the set of seeds and let $\Gamma$ denote the seed variable:
\begin{align}
\bar{I}({S}; \theta_l^{S}) = \frac{1}{|D||G|} \sum_{s \in D} \sum_{\gamma \in G} \frac{1}{k} \sum_{j=1}^k \Bigg(& \bigg(  \frac{1}{|G|} \sum_{\gamma' \in G} \log p(w^j | s, \gamma') \bigg)  
\\ \nonumber &-   \bigg(  \frac{1}{|D||G|} \sum_{s'\in D} \sum_{\gamma' \in G} \log p(w^j |s', \gamma') \bigg)\Bigg)
\end{align}
where $w^j \sim \PP(\theta_l^{S} | {S} = s, \Gamma = \gamma)$ is sampled from the estimated posterior produced by SWAG.

\subsection{Kernel Density Estimation}\label{app:exp2_kde}

Without the addition of noise in the hidden representation, mutual information between inputs and deterministic continuous features is ill-defined \citep{saxe2019information}. One way to add noise is to discretize hidden activity into bins \citep{shwartz2017opening, burhanpurkar2021scaffolding}.
Another approach is kernal density estimation \citep{kolchinsky2017estimating, ji2021power,ji2021unconstrained},
which assumes for the purpose of analysis that Gaussian noise with variance $\sigma_l^2$ is added to the representation produced by layer $l$. 
In adaptive KDE \citep{chelombiev2018adaptive} $\sigma_l^2$ is scaled from a base by the maximum observed activation level in the layer, improving on constant $\sigma_l^2$ \citep{saxe2019information} by allowing the level of noise to vary with layers. Following the previous work, we found $1e-3$ to work well as the base value.

As an alternative method for specifying $\sigma_l^2$,  we selected $\sigma^2_l$ from a discrete set by maximum log likelihood of observed features, 
\begin{align}\label{eq:exp2_mle}
    \frac{1}{|s|} \sum_{(x, y) \in s} \frac{1}{k} \sum_{j=1}^k \log \frac{1}{|s|} \sum_{(x', y') \in s} q_{\theta^{s}}(z^j | x') \qquad \text{where } z^j  \sim q_{\theta^{s}}(Z_l^s | x),
\end{align}
under the constraint that estimated MI decreased with layer, which follows from the information processing inequality. This was performed by iterating from layer $D$ to layer $1$ and choosing $\sigma_l^2$ with maximum likelihood such that the estimator of MI was non-decreasing, i.e. $\hat{I}(X; Z_l^s) \geq \hat{I}(X; Z_{l+1}^s)$ for $l < D$. As with estimators in \cref{app:experiments1_metrics}, averaging can be done in the log domain to yield the lower bound:
\begin{align}\label{eq:exp2_mle2}
    \frac{1}{|s|} \sum_{(x, y) \in s} \frac{1}{k} \sum_{j=1}^k \frac{1}{|s|} \sum_{(x', y') \in s} \log q_{\theta^{s}}(z^j | x') \qquad \text{where } z^j  \sim q_{\theta^{s}}(Z_l^s | x).
\end{align}
For consistency, \cref{eq:exp2_mle} was used with $\hat{I}(X; Z_l^s)$ (\cref{eq:exp2_MI_mc}) and \cref{eq:exp2_mle2} with $\breve{I}(X; Z_l^s)$ (\cref{eq:exp2_MI_jensen}).

\subsection{Further results} \label{app:new:exp_results}
We provide further results below.

\begin{table}[b]
    \centering
    \begin{tabular}{l H c c c c c c}
\toprule

& Layer summary 
& \multicolumn{2}{c}{Spearman corr.} & \multicolumn{2}{c}{Pearson corr.} & \multicolumn{2}{c}{Kendall corr.} \\
Generalization gap: & & Loss & Error & Loss & Error & Loss & Error  \\
\midrule

Train loss & mean & -0.8075 & -0.7229 & -0.8376 & -0.8730 & -0.5632 & -0.5307 \\

Test loss & mean & 0.6970 & 0.6095 & 0.6497 & 0.5210 & 0.5938 & 0.4777 \\

Train error & mean & -0.8114 & -0.7269 & -0.8489 & -0.8831 & -0.5707 & -0.5381 \\

Test error & mean & -0.4609 & -0.3834 & -0.5896 & -0.6087 & -0.2652 & -0.2020 \\

Gen. gap error & mean & 0.9443 & 1.0000 & 0.9562 & 1.0000 & 0.8149 & 1.0000 \\

Gen. gap loss & mean & 1.0000 & 0.9443 & 1.0000 & 0.9562 & 1.0000 & 0.8149 \\
\bottomrule
    \end{tabular}
    \caption{Results for prediction with performance metrics.}
    \label{tab:exp2_raw}
\end{table}

\begin{figure}[ht]
    \centering
    \includegraphics[width=0.3\textwidth]{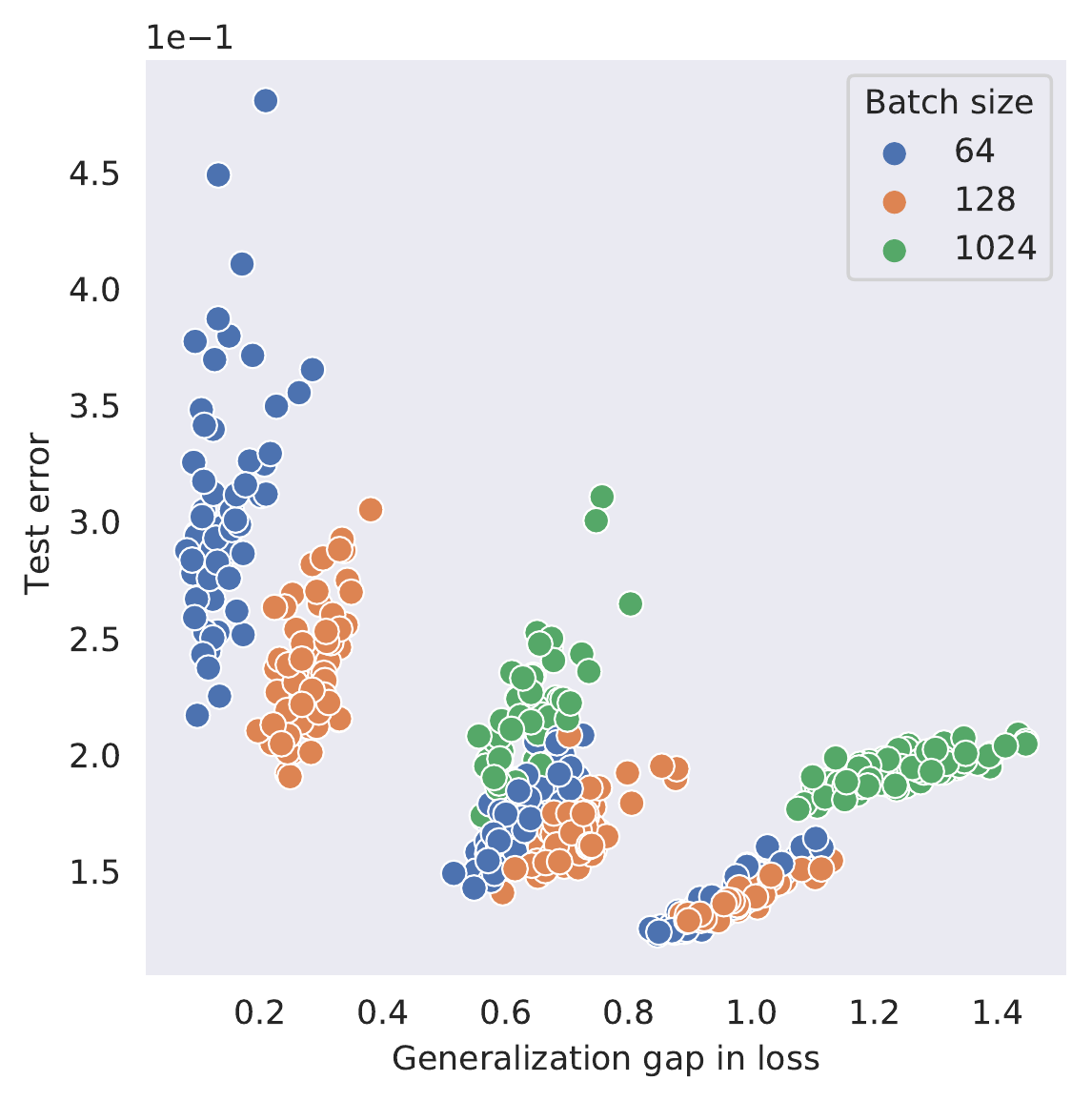}%
    \includegraphics[width=0.3\textwidth]{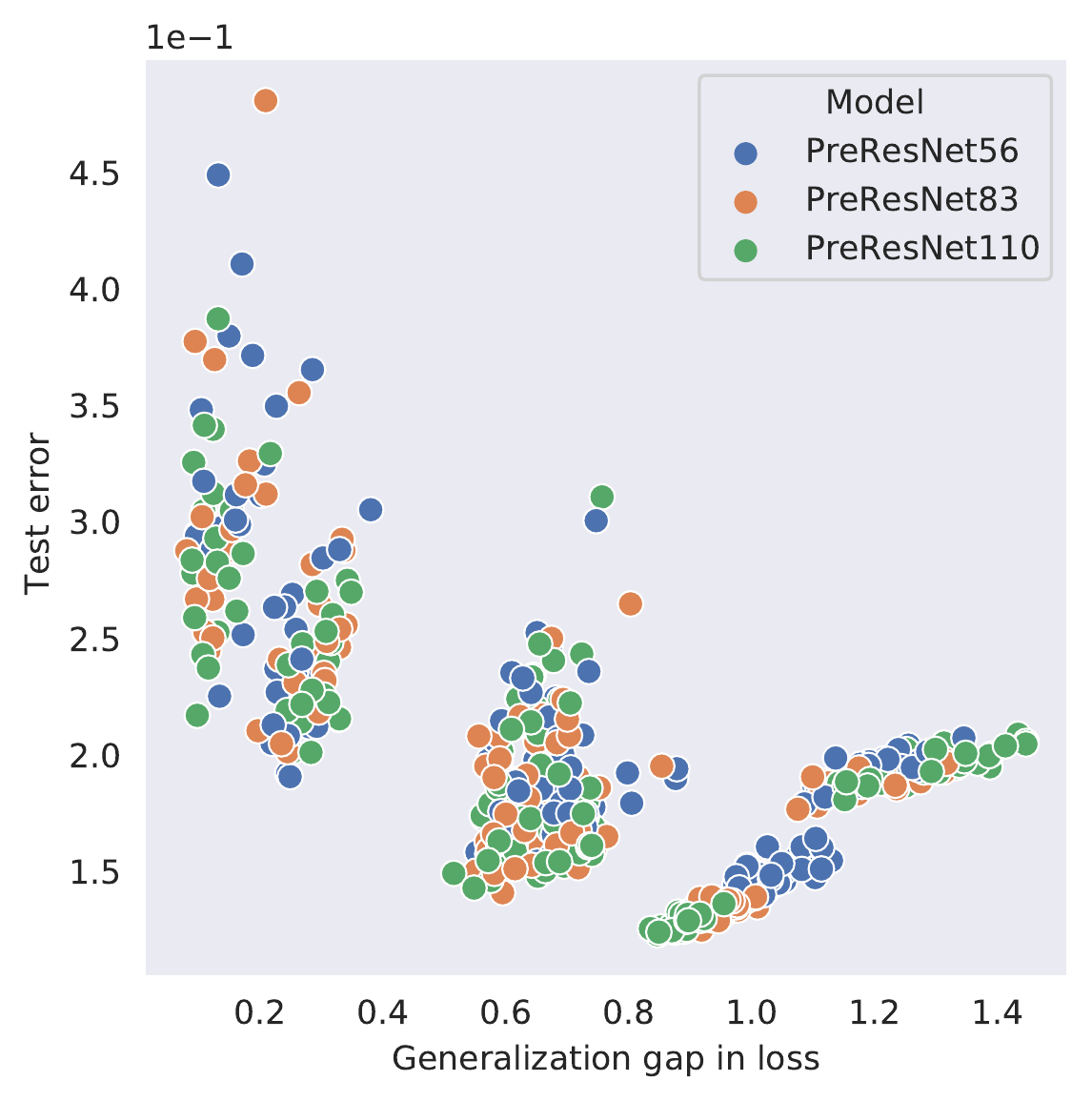}%
    \includegraphics[width=0.3\textwidth]{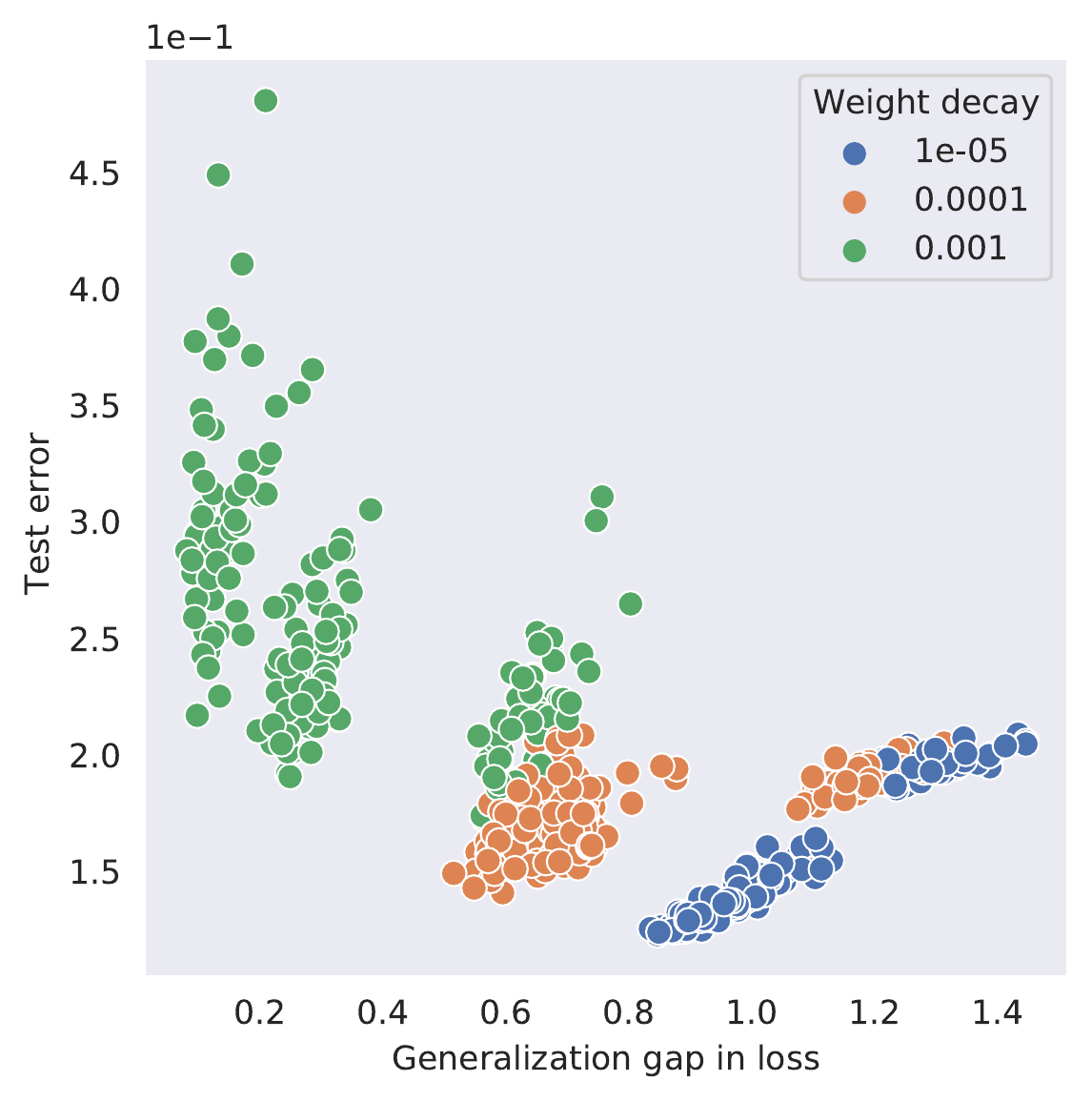}
    
    \includegraphics[width=0.3\textwidth]{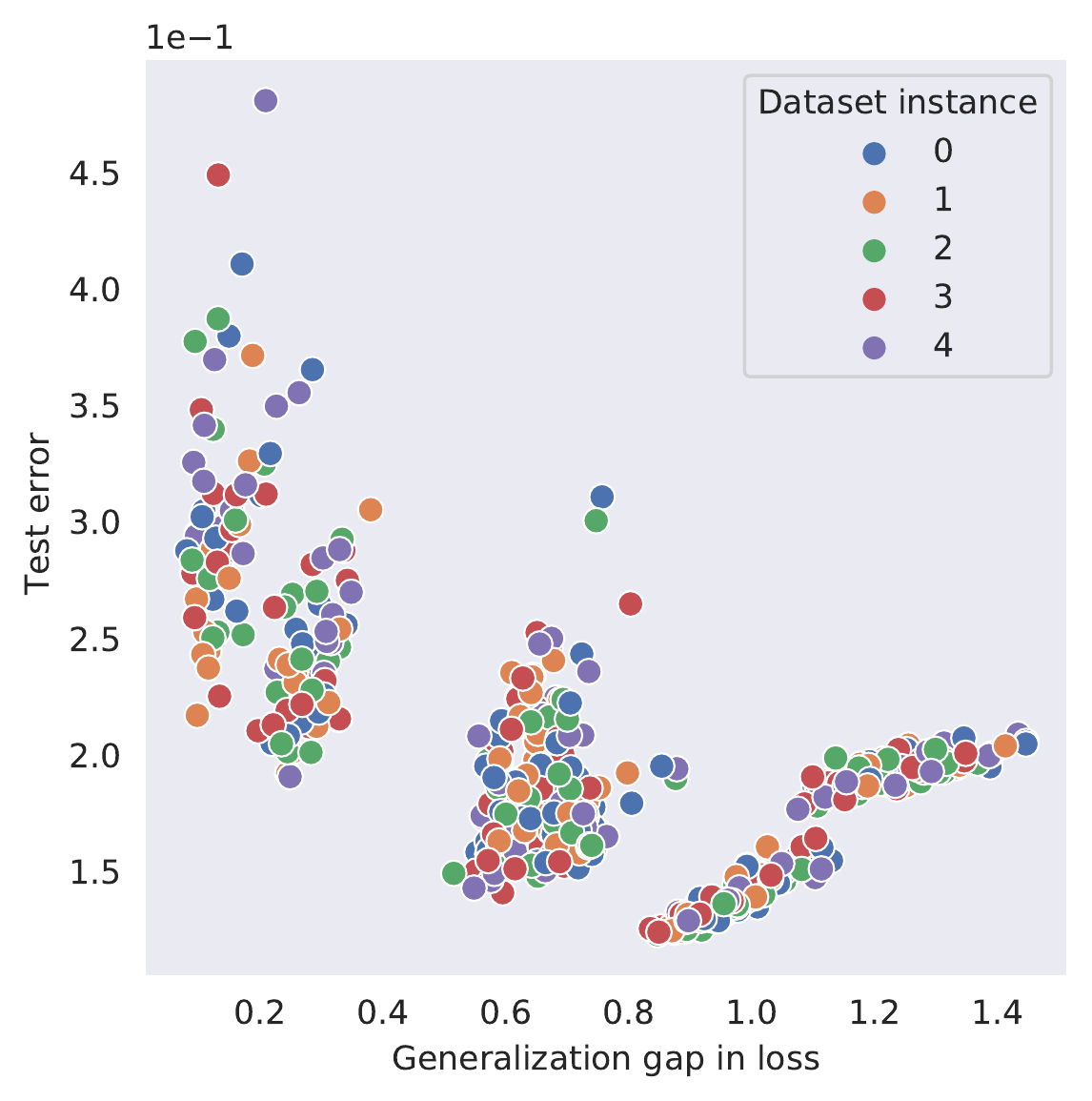}%
    \includegraphics[width=0.3\textwidth]{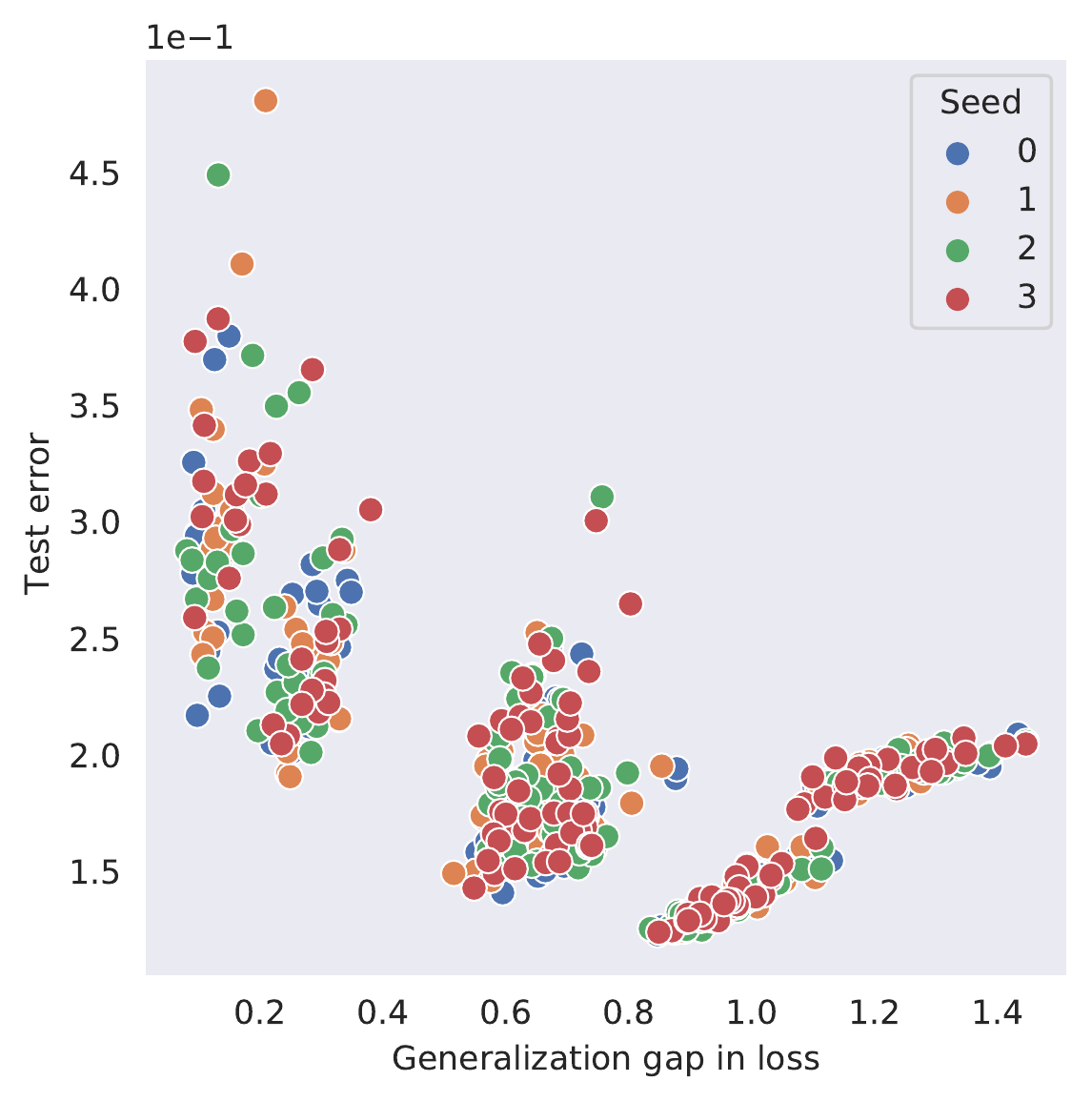}
    
    \caption{Illustration of performance-based clustering behaviour that emerged from training, attributed mostly to batch size and weight decay.}
    \label{fig:exp2_clustering}
\end{figure}

\begin{table}[]
    \centering
    \begin{tabular}{l c c c c c}
\toprule
Layer index $l$: & 1 & 2 & 3 & 4 & 5 \\
\midrule
$\breve{I}(X; Z_l^s)$   & 5.9422E+04& 1.8236E+04& 1.5514E+04& 7.7865E+03& 5.7027E+03 \\
$\breve{I}(X; Z_l^s | Y)$       & 5.7783E+04& 1.7790E+04& 1.5429E+04& 7.6948E+03& 5.3298E+03 \\
$\bar{I}({S}; \theta_l^{S})$      & 0.0000E+00& 3.7136E+03& 9.4890E+05& 2.1247E+06& 5.8244E+06 \\
$\bar{I}({S}; \theta_l^{S}) + \breve{I}(X; Z_l^s | Y)$    & 5.7783E+04& 2.1504E+04& 9.6433E+05& 2.1324E+06& 5.8297E+06 \\
\bottomrule
    \end{tabular}
    \caption{Example values of metrics, for best performing model by test loss (PreResNet56, batch size 128, weight decay 0.001). $l=1$ is the input layer.}
    \label{tab:exp2_values}
\end{table}

\begin{table}[]
    \centering
    \begin{tabular}{l H c c c c c c}
    \toprule
    & Layer  
& \multicolumn{2}{c}{Spearman corr.} & \multicolumn{2}{c}{Pearson corr.} & \multicolumn{2}{c}{Kendall corr.} \\
Generalization gap: & & Loss & Error & Loss & Error & Loss & Error  \\
\midrule

$\breve{I}(S; \theta_{D+1}^{S})$ & - & 0.4688 & 0.3112 & 0.2512 & 0.0775 & 0.2121 & 0.1208 \\

$\bar{I}(S; \theta_{D+1}^{S})$ & - & 0.5370 & 0.3800 & 0.2924 & 0.1218 & 0.2442 & 0.1526 \\
\bottomrule

    \end{tabular}
    \caption{Results for model compression metrics. }
    \label{tab:exp2_model_comp}
\end{table}

\begin{table}[]
    \centering
    \begin{tabular}{l c c c c c c c}
\toprule

& Layer  
& \multicolumn{2}{c}{Spearman corr.} & \multicolumn{2}{c}{Pearson corr.} & \multicolumn{2}{c}{Kendall corr.} \\
Generalization gap: & & Loss & Error & Loss & Error & Loss & Error  \\

\midrule
$\hat{I}(X; Z_l^s)$ & $l=1$ & 0.7345 & 0.6156 & 0.5722 & 0.3853 & 0.5174 & 0.4215 \\
$\hat{I}(X; Z_l^s)$ & $l=D$ & 0.7170 & 0.5740 & 0.7063 & 0.5602 & 0.4784 & 0.3721 \\

$\hat{I}(X; Z_l^s | Y)$ & $l=1$ & 0.7199 & 0.6073 & 0.5724 & 0.3854 & 0.5005 & 0.4111 \\
$\hat{I}(X; Z_l^s | Y)$ & $l=D$ & 0.7126 & 0.5691 & 0.7071 & 0.5616 & 0.4768 & 0.3700 \\
\midrule
$\breve{I}(X; Z_l^s)$ & $l=1$ & 0.6765 & 0.5655 & 0.1554 & 0.1328 & 0.4553 & 0.3781 \\
$\breve{I}(X; Z_l^s)$ & $l=D$ & 0.7145 & 0.5602 & 0.7203 & 0.5719 & 0.4461 & 0.3404 \\

$\breve{I}(X; Z_l^s | Y)$ & $l=1$ & 0.6476 & 0.5292 & 0.1557 & 0.1331 & 0.4307 & 0.3504 \\
$\breve{I}(X; Z_l^s | Y)$ & $l=D$ & 0.7004 & 0.5434 & 0.7062 & 0.5560 & 0.4386 & 0.3305 \\
\midrule

    \end{tabular}
    \caption{Results for representation compression metrics for $l \in \{1, D\}$ summarization over layers (MLE selection of $\sigma_l^2$). }
    \label{tab:exp2_mle_ends}
\end{table}

\begin{table}[]
    \centering
    \begin{tabular}{l c c c c c c c}
\toprule

& Layer  
& \multicolumn{2}{c}{Spearman corr.} & \multicolumn{2}{c}{Pearson corr.} & \multicolumn{2}{c}{Kendall corr.} \\
Generalization gap: & & Loss & Error & Loss & Error & Loss & Error  \\

\midrule
$\hat{I}(X; Z_l^s)$ & $l=1$ & -0.0681 & -0.0659 & -0.0645 & -0.0617 & -0.0464 & -0.0447 \\
$\hat{I}(X; Z_l^s)$ & $l=D$ & 0.7019 & 0.5452 & 0.4596 & 0.2845 & 0.4329 & 0.3293 \\

$\hat{I}(X; Z_l^s | Y)$ & $l=1$ & 0.0050 & 0.0032 & 0.0146 & 0.0191 & 0.0035 & 0.0022 \\
$\hat{I}(X; Z_l^s | Y)$ & $l=D$ & 0.6977 & 0.5403 & 0.4544 & 0.2798 & 0.4302 & 0.3268 \\
\midrule
$\breve{I}(X; Z_l^s)$ & $l=1$ & -0.0196 & -0.0080 & -0.0070 & -0.0029 & -0.0139 & -0.0051 \\
$\breve{I}(X; Z_l^s)$ & $l=D$ & 0.7314 & 0.5848 & 0.5109 & 0.3350 & 0.4645 & 0.3624 \\

$\breve{I}(X; Z_l^s | Y)$ & $l=1$ & 0.0005 & -0.0060 & 0.0036 & -0.0001 & 0.0005 & -0.0038 \\
$\breve{I}(X; Z_l^s | Y)$ & $l=D$ & 0.7033 & 0.5465 & 0.4587 & 0.2846 & 0.4342 & 0.3304 \\
\bottomrule
    \end{tabular}
    \caption{Results for metrics for $l \in \{1, D\}$ summarization over layers (adaptive KDE selection of $\sigma_l^2$). }
    \label{tab:exp2_kde_ends}
\end{table}

\begin{table}[]
    \centering
    \begin{tabular}{l c c c c c c c}
\toprule

& Layer  
& \multicolumn{2}{c}{Spearman corr.} & \multicolumn{2}{c}{Pearson corr.} & \multicolumn{2}{c}{Kendall corr.} \\
Generalization gap: & & Loss & Error & Loss & Error & Loss & Error  \\
\midrule

$\hat{I}(X; Z_l^s)$ & Mean & 0.7598 & 0.6184 & 0.5781 & 0.3911 & 0.4970 & 0.3936 \\
$\hat{I}(X; Z_l^s)$ & Max & 0.7345 & 0.6156 & 0.5722 & 0.3853 & 0.5174 & 0.4215 \\
$\hat{I}(X; Z_l^s)$ & Min & 0.7170 & 0.5740 & 0.7063 & 0.5602 & 0.4784 & 0.3721 \\
$\hat{I}(X; Z_l^s | Y)$ & Mean & 0.7720 & 0.6378 & 0.5790 & 0.3920 & 0.5186 & 0.4145 \\
$\hat{I}(X; Z_l^s | Y)$ & Max & 0.7049 & 0.5814 & 0.5723 & 0.3853 & 0.4712 & 0.3785 \\
$\hat{I}(X; Z_l^s | Y)$ & Min & 0.7137 & 0.5701 & 0.7112 & 0.5679 & 0.4775 & 0.3697 \\
\midrule

$\breve{I}(X; Z_l^s)$ & Mean & 0.8481 & 0.7410 & 0.2116 & 0.1831 & 0.6425 & 0.5436 \\
$\breve{I}(X; Z_l^s)$ & Max & 0.6765 & 0.5655 & 0.1554 & 0.1328 & 0.4553 & 0.3781 \\
$\breve{I}(X; Z_l^s)$ & Min & 0.7145 & 0.5602 & 0.7203 & 0.5719 & 0.4461 & 0.3404 \\
$\breve{I}(X; Z_l^s | Y)$ & Mean & 0.8481 & 0.7406 & 0.2140 & 0.1853 & 0.6427 & 0.5435 \\
$\breve{I}(X; Z_l^s | Y)$ & Max & 0.6486 & 0.5297 & 0.1557 & 0.1331 & 0.4316 & 0.3511 \\
$\breve{I}(X; Z_l^s | Y)$ & Min & 0.7004 & 0.5434 & 0.7062 & 0.5560 & 0.4386 & 0.3305 \\
\midrule

$\breve{I}({S}; \theta_l^{S}) + \hat{I}(X; Z_l^s)$ & Mean & 0.4546 & 0.3055 & 0.1867 & 0.0160 & 0.2304 & 0.1398 \\
$\breve{I}({S}; \theta_l^{S}) + \hat{I}(X; Z_l^s)$ & Max & 0.5111 & 0.3609 & 0.2572 & 0.0858 & 0.2634 & 0.1715 \\
$\breve{I}({S}; \theta_l^{S}) + \hat{I}(X; Z_l^s)$ & Min & 0.8134 & 0.6906 & 0.5363 & 0.3729 & 0.5840 & 0.4870 \\
$\breve{I}({S}; \theta_l^{S}) + \hat{I}(X; Z_l^s | Y)$ & Mean & 0.4543 & 0.3052 & 0.1858 & 0.0152 & 0.2305 & 0.1397 \\
$\breve{I}({S}; \theta_l^{S}) + \hat{I}(X; Z_l^s | Y)$ & Max & 0.5112 & 0.3609 & 0.2572 & 0.0858 & 0.2637 & 0.1718 \\
$\breve{I}({S}; \theta_l^{S}) + \hat{I}(X; Z_l^s | Y)$ & Min & 0.8136 & 0.6949 & 0.5193 & 0.3591 & 0.5817 & 0.4871 \\

\midrule
$\breve{I}({S}; \theta_l^{S}) + \breve{I}(X; Z_l^s)$ & Mean & 0.4513 & 0.3026 & 0.1832 & 0.0132 & 0.2305 & 0.1398 \\
$\breve{I}({S}; \theta_l^{S}) + \breve{I}(X; Z_l^s)$ & Max & 0.5112 & 0.3609 & 0.2572 & 0.0858 & 0.2636 & 0.1715 \\
$\breve{I}({S}; \theta_l^{S}) + \breve{I}(X; Z_l^s)$ & Min & 0.8489 & 0.7353 & 0.8459 & 0.7216 & 0.6354 & 0.5386 \\
$\breve{I}({S}; \theta_l^{S}) + \breve{I}(X; Z_l^s | Y)$ & Mean & 0.4513 & 0.3026 & 0.1832 & 0.0132 & 0.2301 & 0.1397 \\
$\breve{I}({S}; \theta_l^{S}) + \breve{I}(X; Z_l^s | Y)$ & Max & 0.5113 & 0.3609 & 0.2572 & 0.0858 & 0.2638 & 0.1716 \\
$\breve{I}({S}; \theta_l^{S}) + \breve{I}(X; Z_l^s | Y)$ & Min & 0.8434 & 0.7313 & 0.8437 & 0.7195 & 0.6270 & 0.5332 \\

\midrule
$\bar{I}({S}; \theta_l^{S}) + \hat{I}(X; Z_l^s)$ & Mean & 0.4770 & 0.3244 & 0.2901 & 0.1155 & 0.2510 & 0.1568 \\
$\bar{I}({S}; \theta_l^{S}) + \hat{I}(X; Z_l^s)$ & Max & 0.5709 & 0.4205 & 0.2993 & 0.1311 & 0.2878 & 0.1946 \\
$\bar{I}({S}; \theta_l^{S}) + \hat{I}(X; Z_l^s)$ & Min & 0.7898 & 0.6548 & 0.6706 & 0.4929 & 0.5432 & 0.4412 \\
$\bar{I}({S}; \theta_l^{S}) + \hat{I}(X; Z_l^s | Y)$ & Mean & 0.4764 & 0.3241 & 0.2869 & 0.1128 & 0.2489 & 0.1558 \\
$\bar{I}({S}; \theta_l^{S}) + \hat{I}(X; Z_l^s | Y)$ & Max & 0.5709 & 0.4205 & 0.2993 & 0.1311 & 0.2882 & 0.1952 \\
$\bar{I}({S}; \theta_l^{S}) + \hat{I}(X; Z_l^s | Y)$ & Min & 0.7917 & 0.6595 & 0.6490 & 0.4753 & 0.5447 & 0.4450 \\

\midrule

$\bar{I}({S}; \theta_l^{S}) + \breve{I}(X; Z_l^s)$ & Mean & 0.4429 & 0.2910 & 0.2785 & 0.1060 & 0.2353 & 0.1432 \\
$\bar{I}({S}; \theta_l^{S}) + \breve{I}(X; Z_l^s)$ & Max & 0.5707 & 0.4205 & 0.2993 & 0.1311 & 0.2880 & 0.1946 \\
$\bar{I}({S}; \theta_l^{S}) + \breve{I}(X; Z_l^s)$ & Min & \textbf{0.8635} & \textbf{0.7576} & \textbf{0.8493} & \textbf{0.7544} & \textbf{0.6660} & \textbf{0.5684} \\
$\bar{I}({S}; \theta_l^{S}) + \breve{I}(X; Z_l^s | Y)$ & Mean & 0.4429 & 0.2908 & 0.2783 & 0.1059 & 0.2349 & 0.1426 \\
$\bar{I}({S}; \theta_l^{S}) + \breve{I}(X; Z_l^s | Y)$ & Max & 0.5711 & 0.4204 & 0.2993 & 0.1311 & 0.2886 & 0.1945 \\
$\bar{I}({S}; \theta_l^{S}) + \breve{I}(X; Z_l^s | Y)$ & Min & \textbf{0.8632} & \textbf{0.7576} & \textbf{0.8511} & \textbf{0.7562} & \textbf{0.6626} & \textbf{0.5664} \\

\bottomrule

    \end{tabular}
    \caption{Results for metrics for mean, min, max summarization over layers (MLE  selection of $\sigma_l^2$). Best metrics highlighted.}
    \label{tab:exp2_mle}
\end{table}


\begin{table}[]
    \centering
    \begin{tabular}{l c c c c c c c}
\toprule

& Layer  
& \multicolumn{2}{c}{Spearman corr.} & \multicolumn{2}{c}{Pearson corr.} & \multicolumn{2}{c}{Kendall corr.} \\
Generalization gap: & & Loss & Error & Loss & Error & Loss & Error  \\
\midrule

$\hat{I}(X; Z_l^s)$ & Mean & 0.7322 & 0.5837 & 0.6823 & 0.5067 & 0.4606 & 0.3605 \\
$\hat{I}(X; Z_l^s)$ & Max & 0.7805 & 0.6521 & 0.6848 & 0.5088 & 0.5299 & 0.4360 \\
$\hat{I}(X; Z_l^s)$ & Min & 0.7726 & 0.6450 & 0.8131 & 0.7004 & 0.5126 & 0.4205 \\
$\hat{I}(X; Z_l^s | Y)$ & Mean & 0.7323 & 0.5833 & 0.6863 & 0.5110 & 0.4604 & 0.3598 \\
$\hat{I}(X; Z_l^s | Y)$ & Max & 0.7865 & 0.6548 & 0.6920 & 0.5160 & 0.5378 & 0.4380 \\
$\hat{I}(X; Z_l^s | Y)$ & Min & 0.7731 & 0.6451 & 0.8175 & 0.7041 & 0.5131 & 0.4221 \\
\midrule
$\breve{I}(X; Z_l^s)$ & Mean & 0.7401 & 0.5933 & 0.7078 & 0.5334 & 0.4739 & 0.3727 \\
$\breve{I}(X; Z_l^s)$ & Max & 0.7452 & 0.6264 & 0.6891 & 0.5116 & 0.4927 & 0.4109 \\
$\breve{I}(X; Z_l^s)$ & Min & 0.7981 & 0.6720 & \textbf{0.8515} & \textbf{0.7262} & 0.5490 & 0.4500 \\
$\breve{I}(X; Z_l^s | Y)$ & Mean & 0.7375 & 0.5894 & 0.7010 & 0.5263 & 0.4701 & 0.3685 \\
$\breve{I}(X; Z_l^s | Y)$ & Max & 0.7449 & 0.6177 & 0.7056 & 0.5289 & 0.4924 & 0.4053 \\
$\breve{I}(X; Z_l^s | Y)$ & Min & 0.7686 & 0.6270 & 0.8130 & 0.6699 & 0.5093 & 0.4055 \\
\midrule

$\breve{I}({S}; \theta_l^{S}) + \hat{I}(X; Z_l^s)$ & Mean & 0.4555 & 0.3070 & 0.1856 & 0.0153 & 0.2334 & 0.1425 \\
$\breve{I}({S}; \theta_l^{S}) + \hat{I}(X; Z_l^s)$ & Max & 0.5133 & 0.3633 & 0.2580 & 0.0865 & 0.2650 & 0.1735 \\
$\breve{I}({S}; \theta_l^{S}) + \hat{I}(X; Z_l^s)$ & Min & 0.8112 & 0.7033 & 0.8272 & 0.7243 & 0.5775 & 0.4914 \\
$\breve{I}({S}; \theta_l^{S}) + \hat{I}(X; Z_l^s | Y)$ & Mean & 0.4542 & 0.3058 & 0.1849 & 0.0146 & 0.2322 & 0.1414 \\
$\breve{I}({S}; \theta_l^{S}) + \hat{I}(X; Z_l^s | Y)$ & Max & 0.5126 & 0.3625 & 0.2578 & 0.0863 & 0.2643 & 0.1730 \\
$\breve{I}({S}; \theta_l^{S}) + \hat{I}(X; Z_l^s | Y)$ & Min & \textbf{0.8205} & \textbf{0.7203} & 0.8281 & \textbf{0.7313} & \textbf{0.5913} & \textbf{0.5108} \\
\midrule

$\breve{I}({S}; \theta_l^{S}) + \breve{I}(X; Z_l^s)$ & Mean & 0.4602 & 0.3115 & 0.1878 & 0.0172 & 0.2378 & 0.1468 \\
$\breve{I}({S}; \theta_l^{S}) + \breve{I}(X; Z_l^s)$ & Max & 0.5146 & 0.3647 & 0.2588 & 0.0871 & 0.2666 & 0.1748 \\
$\breve{I}({S}; \theta_l^{S}) + \breve{I}(X; Z_l^s)$ & Min & 0.8014 & 0.6854 & 0.8423 & 0.7107 & 0.5658 & 0.4760 \\
$\breve{I}({S}; \theta_l^{S}) + \breve{I}(X; Z_l^s | Y)$ & Mean & 0.4598 & 0.3111 & 0.1874 & 0.0168 & 0.2374 & 0.1464 \\
$\breve{I}({S}; \theta_l^{S}) + \breve{I}(X; Z_l^s | Y)$ & Max & 0.5143 & 0.3643 & 0.2584 & 0.0868 & 0.2661 & 0.1743 \\
$\breve{I}({S}; \theta_l^{S}) + \breve{I}(X; Z_l^s | Y)$ & Min & 0.8069 & 0.6913 & \textbf{0.8440} & 0.7135 & 0.5732 & 0.4849 \\
\midrule

$\bar{I}({S}; \theta_l^{S}) + \hat{I}(X; Z_l^s)$ & Mean & 0.4783 & 0.3249 & 0.2868 & 0.1134 & 0.2495 & 0.1577 \\
$\bar{I}({S}; \theta_l^{S}) + \hat{I}(X; Z_l^s)$ & Max & 0.5765 & 0.4250 & 0.3021 & 0.1333 & 0.2971 & 0.2057 \\
$\bar{I}({S}; \theta_l^{S}) + \hat{I}(X; Z_l^s)$ & Min & 0.8031 & 0.6920 & 0.8270 & 0.7149 & 0.5664 & 0.4793 \\
$\bar{I}({S}; \theta_l^{S}) + \hat{I}(X; Z_l^s | Y)$ & Mean & 0.4766 & 0.3242 & 0.2842 & 0.1110 & 0.2468 & 0.1551 \\
$\bar{I}({S}; \theta_l^{S}) + \hat{I}(X; Z_l^s | Y)$ & Max & 0.5759 & 0.4247 & 0.3013 & 0.1326 & 0.2958 & 0.2049 \\
$\bar{I}({S}; \theta_l^{S}) + \hat{I}(X; Z_l^s | Y)$ & Min & \textbf{0.8130} & \textbf{0.7070} & 0.8295 & 0.7203 & \textbf{0.5799} & \textbf{0.4963} \\
\midrule

$\bar{I}({S}; \theta_l^{S}) + \breve{I}(X; Z_l^s)$ & Mean & 0.4864 & 0.3335 & 0.2946 & 0.1205 & 0.2613 & 0.1689 \\
$\bar{I}({S}; \theta_l^{S}) + \breve{I}(X; Z_l^s)$ & Max & 0.5769 & 0.4253 & 0.3048 & 0.1356 & 0.2978 & 0.2067 \\
$\bar{I}({S}; \theta_l^{S}) + \breve{I}(X; Z_l^s)$ & Min & 0.7974 & 0.6749 & 0.8206 & 0.6842 & 0.5608 & 0.4684 \\
$\bar{I}({S}; \theta_l^{S}) + \breve{I}(X; Z_l^s | Y)$ & Mean & 0.4851 & 0.3317 & 0.2933 & 0.1192 & 0.2596 & 0.1674 \\
$\bar{I}({S}; \theta_l^{S}) + \breve{I}(X; Z_l^s | Y)$ & Max & 0.5770 & 0.4253 & 0.3034 & 0.1343 & 0.2981 & 0.2068 \\
$\bar{I}({S}; \theta_l^{S}) + \breve{I}(X; Z_l^s | Y)$ & Min & 0.7989 & 0.6763 & 0.8242 & 0.6891 & 0.5628 & 0.4703 \\
\bottomrule
    \end{tabular}
    \caption{Results for metrics for mean, min, max summarization over layers (adaptive KDE selection of $\sigma_l^2$). Best metrics highlighted.}
    \label{tab:exp2_kde}
\end{table}

\FloatBarrier

\allowdisplaybreaks

\section{Additional Results and Explanations for Theory} \label{app:7}

We present additional results and explanations for theoretical results in \cref{app:7}, full proofs in \cref{app:proofs}, and additional results and details for experimental results in \cref{app:exp1}.

\subsection{On Theorems \ref{thm:1}--\ref{thm:2}} \label{app:12}
The mutual information $I(\phi_{l}^{S}; S)$ in Theorem \ref{thm:2} does not appear in Conjecture \ref{thm:shwartz}. However, the sample complexity bound  in Conjecture \ref{thm:shwartz} is invalid for the setting of learning $\phi^s_l$, because the encoder $\phi_{l} ^{s}$ can overfit to the training data, which was demonstrated  with the counter-example in \citet[Example 3.1]{hafez2020sample}. The mutual information  $I(\phi_{l}^{S}; S)$ is measuring the effect of overfitting the encoder, which is necessary to avoid the counter-example. 

Using additional notation defined in \cref{app:13}, we will discuss the details of  the formulas of  $G_1^{l}(0)$, $\Gcal_2^l$ and $G_3^{l}$ of  Theorems \ref{thm:1} and  $G_1^l(\zeta)$, $\widehat \Gcal_2^{l}$, $\check \Gcal_2^l$, and $G_3^l$  of Theorem \ref{thm:2} in \cref{app:14}.

\subsubsection{Additional Notation} \label{app:13}
We define the variables for the (hidden) input generating process as follows. Each $x\in \Xcal$ is generated with  a hidden function $\chi$  by $
x=\chi(y,\xi^{(y)}), 
$
where $\xi^{(y)}=(\xi_1^{(y)},\dots,\xi_m^{(y)}) \in \hXi_{y} \subseteq\RR^m$ is the nuisance variable. We denote the random variable for  $\xi^{(y)}$ by $\Xi_{y}$; i.e.,  
$
\Xi_{y}(\omega_{y}) = \xi^{(y)}
$
where  $\omega_{y}\in \Omega_{y}$ is the element of the sample space $\Omega_{y}$ of  the nuisance variable, conditioned on $Y=y$. Then, we denote the random variables for $X$ and $Z_l^s$ conditioned on $Y=y$ by $X_{y}$ and $Z_{l,y}^s$: $X_{y}(\omega_{y})=\chi(y, \Xi_y(\omega_{y})) \in \Xcal$ and $Z_{l,y}^s =\phi_l ^{s}\circ X_{y}$. For any $l \in  [D]$ and $y \in \Ycal$, we define the sensitivity $c_{l}^{y}(\phi_l^{s})$ of the trained encoder $\phi_l^s$ with respect to the nuisance variable     $\xi^{(y)}$ by the number such  that for all $i\in [m]$,
$c_{l}^{y}(\phi_l^{s}) \ge \sup_{\xi_{1}^{(y)},\dots,\xi_{i-1}^{(y)},\xi_{i}^{(y)},\tilde \xi_{i}^{(y)},\xi_{i+1}^{(y)},\dots,\xi_{m}^{(y)}}|\log p_y((\phi_l^{s}\circ\chi _{y})(\xi_{1}^{(y)},\dots,\xi_{i-1}^{(y)},\xi_{i}^{(y)},\xi_{i+1}^{(y)},\dots,\xi_{m}^{(y)}))-\log p_y((\phi_l^{s}\circ\chi_{y} )(\xi_{1}^{(y)},\dots,\xi_{i-1}^{(y)},\tilde \xi_{i}^{(y)},\xi_{i+1}^{(y)},\dots,\xi_{m}^{(y)}))|$, where $\chi _{y}(\xi^{(y)})=\chi(y,\xi^{(y)})$ and $p_y(q)= \PP(Z_{l,y}^s=q)$. 

For any $l \in [D+1]$ and $\lambda_{l}>0$, define
\vskip -1.5em \begin{align} \label{e:Cla}
    C_{\lambda_{l},l}=\frac{1}{e^{\lambda_{l} H(\phi_{l} ^{S})}}\sum_{q\in \Mcal_l}(\PP(\phi_{l} ^{S}=q))^{1-\lambda_{l}},
\end{align} \vskip -1em 
where $H(\phi_{l} ^{S})$ is the entropy of the random variable $\phi_{l}^{S}$. We define the set of the latent variable per class by
$
\Zcal_{l,y}^s =\{(\phi_l^{s}\circ \chi_{y})(\xi^{(y)}):\ \xi^{(y)}\in \hXi_y \}.
$
For any $\gamma>0$, we then define a  (typical) subset $\Zcal_{\gamma,l,y}^{s} $ (of the set $\Zcal_{l,y}^s$) by
$
\Zcal_{\gamma,y}^{s,l} =\{z \in \Zcal_{l,y}^s  : - \log \PP_{}(Z_{l,y}^{s}=z)- H_{}(Z_{l,y}^{s})  \le c_{l}^{y}(\phi_l^{s}) \sqrt{\frac{m\ln(\sqrt{n}/\gamma)}{2}} \}.
$
Let us write the element of $\Zcal_{\gamma,y}^{s,l}$ by $\Zcal_{\gamma,y}^{s,l} =\{a_1^{l,y},\dots,a_{T_{y}^l}^{l,y}\}$ where $T_{y}^l =|\Zcal_{\gamma,y}^{s,l} |$. Finally, define maximum training loss $\Lcal(f^{s})= \max_{i \in \{1,\dots,n\}} \ell(f^{s}(x_{i}),y_i)$.

\subsubsection{Details of Other Terms} \label{app:14}

In Theorem \ref{thm:1}, we have that $G_1^{l}(q) =\frac{\Lcal(f^s)\sqrt{2\gamma_{l}|\Ycal|}}{n^{1/4}} \sqrt{q+\ln (2| \Ycal|/\delta)} 
+\gamma_{l} \Rcal(f^s)$,  
$\Gcal_2^l= G_2 ^{l}\ln(2) +\ln(2| \Ycal|/\delta)$,  $G_3 ^{l}= \max\limits_{y \in \Ycal}  \sum_{k=1}^{T_{y}^l}\ell( g_{l}^{s}(a_k^{l,y}),y) \sqrt{2|\Ycal|\PP_{}(Z_{l,y}^s=a_{k}^{l,y})}$, and  $G_2 ^{l}=\EE_y[c_{l}^{y}(\phi_{l}^{s})]\sqrt{\frac{m\ln(\frac{\sqrt{n}}{\gamma_{l}})}{2}}+H_{}(Z_{l}^s|X_{},Y)$.
Theorem \ref{thm:1} holds for any (fixed) $\gamma_l>0$.

In Theorem \ref{thm:2},   the definitions of   $G_1^l(q), G_2^l, G_3^l$  are the same as in  Theorem \ref{thm:1}, and we have that $\zeta=(I_{}(\phi_{l}^{S}; S)+G_{4}^{l})\ln (2)+\ln(2|\bD|)$,  
$\widehat \Gcal_2 ^l=\left(G_2 ^{l}+G_{4}^l\right)\ln(2)+\ln(4| \Ycal||\bD|/\delta)$, $\check \Gcal_2^l=G_{4}^{l} \ln(2)+\ln(2/\delta)$, and $G_4^l =\frac{1}{\lambda_{l}}\ln \frac{C_{\lambda_{l},l}|\bD| }{\delta} +H_{}(\phi_{l}^{S}|S)$. Theorem \ref{thm:2}  holds for any (fixed)  $\gamma_{l}>0$ and $\lambda_l>0$ for all $l \in \bD$. 

Proposition \ref{prop:1} below shows that $G_3^l$ can be bounded by a constant value, which is much smaller than and independent of the size of the  set  $\Zcal_{\gamma,l,y}^{s}$. \begin{proposition} \label{prop:1}
Let $l \in \{1,\dots,D\}$.
Let $v_k(y)=\ell( g_{l}^{s}(a_{j_k}^{l,y}),y)^{2}\PP(Z_{l,y}^{s}=a_{j_k}^{l,y})$ where $k \mapsto j_k$ is a permutation of the index such that $v_1(y)\ge v_2(y) \ge \cdots \ge v_{T_{y}^l}(y)$. If there exist some constants  $\alpha_{y}\ge 1$ and $\beta_{y},C_{y}>0$ such that $v_k(y)\le C _{y}e^{-(k/\beta_{y})^{\alpha_y}}$, then 
\vskip -1.8em \begin{align}
G_3 ^{l}\le \sqrt{2|\Ycal|}\max_{y \in \Ycal} \left(\sqrt{v_1(y)} \ceil{\tbeta_{y}}  + (C _{y}\tbeta_y)/(\alpha _{y}e)\right),
\end{align} \vskip -1em
without  assuming that $\phi^s_l$ is fixed independently of the training dataset $s$, where   $\tbeta_{y}= 2^{1/\alpha_y}\beta_{y}$.
\end{proposition}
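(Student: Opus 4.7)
The plan is to unwind $G_3^l$ into a decreasing series matching the hypothesis, then estimate it by a head-plus-tail split. Since the summand in $G_3^l$ is invariant under the permutation $k\mapsto j_k$, one may rewrite
\[
G_3^l \;=\; \sqrt{2|\Ycal|}\;\max_{y\in\Ycal}\sum_{k=1}^{T_y^l}\sqrt{v_k(y)},
\]
so it suffices, for each $y\in\Ycal$, to prove
\[
\sum_{k=1}^{T_y^l}\sqrt{v_k(y)} \;\le\; \sqrt{v_1(y)}\,\lceil\tilde\beta_y\rceil \;+\; \frac{C_y\,\tilde\beta_y}{\alpha_y e},
\]
and then maximize over $y$.

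Next, I would translate the decay assumption into an exponential bound on $\sqrt{v_k(y)}$. Taking square roots in $v_k(y)\le C_y e^{-(k/\beta_y)^{\alpha_y}}$ gives $\sqrt{v_k(y)}\le \sqrt{C_y}\,e^{-(k/\beta_y)^{\alpha_y}/2}$, and the identity $(k/\beta_y)^{\alpha_y}/2 = (k/\tilde\beta_y)^{\alpha_y}$---which is precisely the reason for the choice $\tilde\beta_y := 2^{1/\alpha_y}\beta_y$---rewrites this as $\sqrt{v_k(y)}\le \sqrt{C_y}\,e^{-(k/\tilde\beta_y)^{\alpha_y}}$. I then split the series at $K_y := \lceil\tilde\beta_y\rceil$: for indices $k\le K_y$ the monotonicity $\sqrt{v_k(y)}\le\sqrt{v_1(y)}$ yields a head contribution of at most $\lceil\tilde\beta_y\rceil\sqrt{v_1(y)}$, matching the first term in the claim.

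For $k>K_y$ I would dominate the discrete tail by an integral. Since $t\mapsto e^{-(t/\tilde\beta_y)^{\alpha_y}}$ is decreasing in $t$,
\[
\sum_{k>K_y}\sqrt{v_k(y)} \;\le\; \int_{K_y}^{\infty}\sqrt{C_y}\,e^{-(t/\tilde\beta_y)^{\alpha_y}}\,dt.
\]
The substitution $u=(t/\tilde\beta_y)^{\alpha_y}$ gives $dt=(\tilde\beta_y/\alpha_y)\,u^{1/\alpha_y-1}\,du$ and a lower limit $u_0=(K_y/\tilde\beta_y)^{\alpha_y}\ge 1$. The decisive observation is that $\alpha_y\ge 1$ forces $1/\alpha_y-1\le 0$, so $u^{1/\alpha_y-1}\le 1$ on $[1,\infty)$ and the integral collapses to at most $(\sqrt{C_y}\tilde\beta_y/\alpha_y)\int_{1}^{\infty}e^{-u}\,du = \sqrt{C_y}\tilde\beta_y/(\alpha_y e)$, which furnishes (up to the precise constant prefactor) the second term in the stated bound.

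The main technical hurdle is precisely this tail estimate: one needs the splitting point $\lceil\tilde\beta_y\rceil$ to guarantee $u_0\ge 1$, and the exponent inequality $1/\alpha_y-1\le 0$ to kill the polynomial factor appearing after the substitution---both are features the hypothesis is engineered to provide. Combining the head and tail bounds, maximizing over $y\in\Ycal$, and restoring the $\sqrt{2|\Ycal|}$ factor then yields the advertised inequality. Note that nothing in this argument uses any probabilistic property of $\phi_l^s$; the estimate is purely analytic in the deterministic sequence $\{v_k(y)\}_k$, which is why the conclusion holds without requiring $\phi_l^s$ to be fixed independently of the training data $s$.
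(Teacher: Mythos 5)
Your proposal is correct and follows essentially the same route as the paper's proof, which isolates your estimate as a standalone lemma (Lemma~\ref{lemma:10}) and then rearranges $G_3^l$: take square roots so that $\tbeta_y=2^{1/\alpha_y}\beta_y$ absorbs the factor $1/2$ in the exponent, split the sum at $\lceil \tbeta_y\rceil$, bound the head by monotonicity (the paper leaves this last trivial replacement by $\sqrt{v_1(y)}\,\lceil\tbeta_y\rceil$ implicit), and dominate the tail by the integral, using $\alpha_y\ge 1$ to discard the factor $u^{1/\alpha_y-1}$ after substitution. The $\sqrt{C_y}$ tail prefactor you obtain is exactly what the paper's own computation yields before it is loosely restated as $C_y\tbeta_y/(\alpha_y e)$, so your parenthetical remark about the constant matches the paper's derivation.
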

\begin{proof} 
The proof is provided in \cref{app:4}.
\end{proof}  

Proposition \ref{prop:new:1} shows that the value of $\ln C_{\lambda_{l},l}$ (recall from \eqref{e:Cla}) in the formula of $G_4^l$  can be bounded by a constant value independently of $\ln |\Mcal_l|$ and is much smaller than $\ln |\Mcal_l|$ and $H(\phi_{l}^{S})$: 
\begin{proposition} \label{prop:new:1}
Let  $l \in \{1,\dots,D+1\}$. We denote $N=|\Mcal_l|$, and enumerate $\Mcal_l$ as $q_1,q_2,q_3,\cdots, q_N$ with decreasing probability, i.e. $p_i=\PP(\phi_{l}^{S}=q_i)$ and $p_1\geq p_2\geq \cdots\geq p_N$.
\begin{enumerate}
    \item \vskip -0.5em 
    If $p_i$ decays sufficiently fast, i.e., $p_i\leq C/i^\alpha$ with some $\alpha>1$ and $C\geq 1$, then for $0<\lambda_l<1-1/\alpha$, both the entropy $H(\phi_{l}^{S})$ and $C_{\lambda_{l},l}$ are bounded and independent of $N$:    
    \vskip -1.5em \begin{align*}
    &H(\phi_{l}^{S})\leq 1+C\alpha\left(\frac{\ln(2)}{2^\alpha}+\frac{\ln(3)}{3^\alpha}+\frac{3^{1-\alpha}((a-1)\ln(3)+1)}{(\alpha-1)^2}\right),\\
    &C_{\lambda_{l},l}\leq C^{1-\lambda_l}\frac{\alpha(1-\lambda_l)}{\alpha(1-\lambda_l)-1}.
    \end{align*} \vskip -1em
    \item \vskip -0.5em 
    If $p_i$ decays slowly, i.e., $p_i=c_i/(Z i^\alpha )$ with $0\leq \alpha<1$ and $0< c\leq c_i\leq C$
    where $Z$ is the normalization constant, then the entropy $H(\phi_{l}^{S})$ grows as $\ln(N)+\Ocal(1)$ where $\Ocal(1)$ depends only on $\alpha$, but $C_{\lambda_{l},l} $ is bounded and independent of $N$ as:
    \vskip -1.8em \begin{align*}
   C_{\lambda_{l},l}\leq \left(\ln(1-(1-\lambda_l)\alpha)-(1-2\lambda_l)\ln(1-\alpha)\right)+ (2-\lambda_l)\ln(C/c)+\frac{C}{c(1-\alpha)}.
    \end{align*} \vskip -1em
\end{enumerate}
\end{proposition}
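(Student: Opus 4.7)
The plan is to bound the numerator $S := \sum_{i=1}^{N} p_i^{1-\lambda_l}$ and the denominator $e^{\lambda_l H(\phi_l^S)}$ of $C_{\lambda_l,l}$ separately, in each of the two decay regimes. The unifying observation is that in the fast-decay regime both $S$ and $H$ are bounded independently of $N$, so one can bound $C_{\lambda_l,l}$ from $S$ alone (using $H \ge 0$, hence $e^{\lambda_l H} \ge 1$); whereas in the slow-decay regime both $S$ and $e^{\lambda_l H}$ grow like $N^{\lambda_l}$, and the desired $N$-independent bound arises only because these growths cancel exactly.

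For Part~1, I would obtain the bound on $C_{\lambda_l,l}$ by substituting $p_i \le C/i^\alpha$ to get $S \le C^{1-\lambda_l}\sum_{i\ge 1} i^{-\alpha(1-\lambda_l)}$ and applying the integral comparison $\sum_{i\ge 1} i^{-s} \le 1 + \int_1^\infty x^{-s}\,dx = s/(s-1)$, valid precisely when $s := \alpha(1-\lambda_l) > 1$, i.e., $\lambda_l < 1 - 1/\alpha$. This gives exactly the stated bound. For the entropy, I would separate the $i=1$ contribution (bounded by $\sup_{p \in [0,1]} (-p\ln p) = 1/e \le 1$) and, for $i \ge 2$, use monotonicity of $-p \ln p$ on $(0, 1/e)$ together with $p_i \le C/i^\alpha$ to get $-p_i \ln p_i \le C \alpha \ln i/i^\alpha$ (the $-\ln C$ term is $\le 0$ since $C \ge 1$). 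Handle $i = 2, 3$ as explicit terms $C\alpha \ln 2/2^\alpha$ and $C\alpha \ln 3/3^\alpha$, and bound the tail $\sum_{i\ge 4}$ by the integral $\int_3^\infty C \alpha \ln x/x^\alpha\,dx$, which integration by parts evaluates to $C\alpha \cdot 3^{1-\alpha}((\alpha-1)\ln 3 + 1)/(\alpha-1)^2$.

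For Part~2, my approach is to get matching $N$-power bounds on $Z = \sum_{i=1}^N c_i/i^\alpha$, on $H$, and on $S$. By integral comparison, $c(1-\alpha)^{-1}((N+1)^{1-\alpha}-1) \le Z \le C\,(1 + (1-\alpha)^{-1}(N^{1-\alpha}-1))$, which yields $Z = \Theta(N^{1-\alpha})$ with constants controlled by $c, C$. For the entropy I expand $H = \ln Z + \alpha \sum p_i \ln i - \sum p_i \ln c_i$; the first summand equals $(1-\alpha)\ln N + O(1)$, and a second integral comparison applied to $\sum c_i \ln i / (Z i^\alpha)$ shows $\alpha \sum p_i \ln i = \alpha \ln N + O(1)$, together giving $H = \ln N + O(1)$ with $O(1)$ explicit in $\alpha, c, C$. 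For $S$, substituting $p_i \le C/(Z i^\alpha)$ and using $\alpha(1-\lambda_l) < 1$, the integral comparison $\sum_{i=1}^N i^{-\alpha(1-\lambda_l)} \le $ const $\cdot N^{1-\alpha(1-\lambda_l)}/(1-\alpha(1-\lambda_l))$, combined with $Z^{-(1-\lambda_l)} \le$ const $\cdot N^{-(1-\alpha)(1-\lambda_l)}$, yields $S \le$ const $\cdot N^{\lambda_l}$. Dividing by $e^{\lambda_l H} = N^{\lambda_l} e^{\lambda_l \cdot O(1)}$, the $N^{\lambda_l}$ factors cancel and the residual constants assemble into the stated bound.

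The main obstacle is the constant-tracking in Part~2. Because $Z$ appears with exponent $-(1-\lambda_l)$ in the numerator and inside $\ln$ of $H$ in the denominator, the exact $N$-cancellation requires upper and lower estimates of $Z$ whose leading terms agree, and similarly $\sum p_i \ln i = \ln N + O(1)$ must be controlled on both sides, since any looseness in $H$ is amplified exponentially in $e^{\lambda_l H}$. Part~1 is technically simpler, but the entropy bound still requires care at small $i$, where $C/i^\alpha$ can exceed $1/e$ for $\alpha$ close to $1$ and $C$ large so that $-p \ln p$ is not monotone on the relevant range; the resolution is to absorb such indices into the explicit $1$, $C\alpha \ln 2/2^\alpha$, and $C\alpha \ln 3/3^\alpha$ terms via the universal bound $-p\ln p \le 1/e$ and the ordering $p_i \le 1/i$ coming from decreasing probabilities summing to $1$.
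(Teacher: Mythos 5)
Your Part~1 is correct and is essentially the paper's own argument: since $H(\phi_l^{S})\ge 0$ you may drop the factor $e^{-\lambda_l H(\phi_l^{S})}$ and bound $C_{\lambda_l,l}\le\sum_i p_i^{1-\lambda_l}\le C^{1-\lambda_l}\sum_i i^{-\alpha(1-\lambda_l)}$ by integral comparison, and the entropy bound uses the same splitting into the (at most two) indices with $p_i>1/e$, the explicit $i=2,3$ terms, and the tail integral $\int_3^\infty \alpha \ln x \, x^{-\alpha}\,dx$. Your extra care about small $i$ where $C/i^\alpha>1/e$ addresses a point the paper glosses over, and your fix is fine.

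The gap is in Part~2, at exactly the step you yourself flag as the crux. You propose to show $\alpha\sum_i p_i\ln i=\alpha\ln N+\Ocal(1)$ by ``a second integral comparison applied to $\sum_i c_i\ln i/(Z i^\alpha)$''. That comparison does not deliver what you need: once $c_i$ is replaced by its bounds $c$ or $C$, you only obtain $\frac{c}{C}\ln N-\Ocal(1)\le\sum_i p_i\ln i\le \frac{C}{c}\ln N+\Ocal(1)$, i.e.\ the coefficient of $\ln N$ is pinned only up to the factor $C/c$, so the error in $H(\phi_l^{S})$ is of order $\ln N$ rather than $\Ocal(1)$. Because $H$ enters through $e^{\lambda_l H}$, this looseness leaves a residual factor of order $N^{\lambda_l\alpha(1-c/C)}$ in the bound on $C_{\lambda_l,l}$, so the advertised cancellation of the $N^{\lambda_l}$ growth fails and ``bounded independently of $N$'' does not follow from the steps as described. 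The missing ingredient is a device that exploits the normalization $\sum_i p_i=1$ to extract $\ln N$ with coefficient exactly one \emph{before} any $c$-versus-$C$ loss is incurred. The paper does this by Abel summation: with $s_i=\sum_{j\le i}c_j j^{-\alpha}$ one has $\sum_i p_i\ln i=\ln N+\frac{1}{Z}\sum_{i=1}^{N-1}s_i(\ln i-\ln(i+1))$ because $s_N=Z$, and the second term is bounded in absolute value by $\frac{C}{c(1-\alpha)}$ uniformly in $N$; equivalently, one can write $\ln i=\ln N-\ln(N/i)$ and bound $\sum_i p_i\ln(N/i)=\Ocal(1)$. With that step supplied, your remaining Part~2 ingredients -- the two-sided estimate $Z$ of order $N^{1-\alpha}$, the bound $\sum_i p_i^{1-\lambda_l}=\Ocal(N^{\lambda_l})$, and the cancellation against $e^{\lambda_l H}$ -- coincide with the paper's proof.
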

\begin{proof}  
The proof is provided in \cref{app:5}.
\end{proof} 

We now discuss the factors $G_1^l(q)$ and $G_2^l$ in Theorem \ref{thm:1}.  The formula of $G_1^l(q)$ is simplified as $G_1^l(q)=\gamma_l$ for any $q \in \RR_{\ge 0}$ in a  common scenario of deep learning where we use  the 0-1 loss (to measure generalization) and have zero training error. This is because  $\Lcal(f^s)=0$ and $\Rcal(f^s) \le 1$ for the  scenario. In the formula of $G_2$,  we have $H_{}(Z_{l}^s|X_{},Y)=0$ if the function $\phi^s_l$ is deterministic, which is the typical case for deep neural networks, because $\phi^s_l$ is the function used at inference or test time as opposed to training time (when dropout for example can be used). When the function $\phi^s_l$ is stochastic, we have $H(Z_{l}^s|X,Y) = \Ocal(1)$ as $n\rightarrow \infty$. The networks can be stochastic, for example, with randomization defenses against adversarial attacks \citep{xie2018mitigating,pinot2019theoretical,pinot2020randomization,levine2020robustness} or noise injections \citep{goldfeld2019estimating}. The value of $\EE_{y}[c_{l}^{y}(\phi_l^{s})]$ in the formula of $G_2$ measures the  sensitivity with respect to the nuisance variable $\xi^{(y)}$; i.e., minimizing this value should result in better generalization, which is consistent with Theorem \ref{thm:1}. The sensitivity $c_{l}^{y}(\phi_l^{s})$  is a measure on the single final encoder $\phi_{l}^s$; i.e., increasing the complexity of hypothesis spaces does not imply an increase in this value.

All the discussions and results, including Proposition \ref{prop:1},  on the factors $G_1^l(q), G_2^l$, and $G_3^l$ in Theorem \ref{thm:1} hold true for these factors in Theorem \ref{thm:2} (because we do not assume the use of  the fixed encoder for these). Accordingly, we now discuss the new factor, $G_4$, in Theorem \ref{thm:2}.
The value of $\ln C_{\lambda_{l},l}$ in the formula of $G_4^l$   is analyzed in Proposition \ref{prop:new:1}. In the formula of $G_4$, $H_{}(\phi_{l}^{S}|S)$ measures the randomness of algorithm $\Acal_l$. To understand this, let us consider a real-world experiment with a coin tossing. We can model the coin tossing by a stochastic model (by saying that coin tossing has 50-50 chance of getting heads and tails) or by a deterministic model to predict the exact outcome with an exact initial condition of the physical system. similarly, we can model a single real-world algorithm with a stochastic model $\Acal_l$ (by saying that something has some random chances) or a deterministic model $\Acal_l$ with the exact initial condition, which is the random seed in the numerical experiments. That is, as in any mathematical theories and symbols, $\Acal_l$ is a theoretical placeholder with its mathematical definition; i.e., $\Acal_l$ does \textit{not} have one-to-one correspondence to a real-world algorithm implemented in experiments. In other words, given a single real-world algorithm, there are many different ways to model the real-world algorithm and different ways result in different $\Acal_l$. For example, let us fix the real-world algorithm implemented in experiments to be one with dropout \citep{srivastava2014dropout} and stochastic gradient descent (SGD). At this point, $\Acal_l$ in Theorem \ref{thm:2} is not fully determined yet and we can choose $\Acal_l$ differently for the same real-world algorithm by modeling the real-world differently. For instance, we can model the one with dropout and SGD as a stochastic algorithm or as a deterministic algorithm given a fixed random seed in practice. Thus, we can set $\Acal_l$ in \ref{thm:2} to be either a stochastic algorithm or a deterministic algorithm for the exact same real-world and the same fixed algorithm implemented in experiments. If we set $\Acal_l$ to be a deterministic algorithm with a fixed seed, then we have $H_{}(\phi_{l}^{S}|S)=0$. If we set $\Acal_l$ to be a stochastic algorithm, then we increase $H(\phi_{l}^{S}|S)$ but we can potentially decrease $I(\phi_{l}^{S}; S)$ since the extra randomness can potentially reduce the mutual information of $\phi_{l}^{S}$ and $S$. Thus, there is a trade-off in how we model the real-world via $\Acal_l$ and we cannot reduce the bound arbitrarily. Our theorems allow to instantiate our bounds with both deterministic and stochastic views of the learning algorithms, without changing the real-world algorithms.

More generally, a randomized algorithm can be defined as a deterministic algorithm with an additional input that consists of a sequence of random bits \citep{hromkovivc2004randomized}. Here, the sequence of random bits corresponds to the sequence of random seeds in the numerical experiments with SGD. In other words, on the one hand, we can model SGD as a stochastic process when we analyze a set of experiments with SGD over a set of random seeds that are generated randomly. On the other hand, we can model SGD as a deterministic process when we analyze one experiment with SGD for one random seed. Moreover, as \citet{hromkovivc2004randomized} explains, we can bridge those two cases by modeling SGD as a deterministic algorithm with its additional inputs being the seed; then (1) it is deterministic for each seed, and (2) we can recover the stochastic model by considering a sequence of the randomly generated seeds. If we analyze one experiment of SGD for one random seed, then we have a deterministic algorithm, and a typical worst-case analysis provides a guarantee on the SGD with the worst seed. But, if we analyze a set of experiments of SGD over a set of random seeds that are generated randomly, then we have a stochastic algorithm, and we can analyze its expected performance or high-probability guarantee w.r.t. the random seeds.

To formally treat the learning algorithm $\Acal_l$ as a stochastic one, we replace $S$ with $\tilde {S}$ in \cref{eq:new:8} where $\tilde {S}(\omega,\omega')=(S(\omega),\omega')$ with $\omega$ and $\omega'$ being elements of the sample spaces for $S$ and $\Acal_l$ respectively. Similarly,  when the encoder $\phi_{l}^{s} $ is stochastic, we replace $X$ with $\tilde X$ in \cref{eq:new:7} where $\tilde X(\omega,\omega')=(X(\omega),\omega')$ with $\omega$ and $\omega'$ being elements of the sample spaces for $X$ and $\phi_{l}^s$ respectively. All of the proofs work in any of  these cases.

\subsection{On the application to the case of infinite mutual information} \label{app:10}
Section \ref{sec:7} discusses a way to apply a sample complexity bound with mutual information to the cases of infinite mutual information by using binning methods. This section considers a more general method of computing the mutual information to achieve the following goal: we demonstrate that a theoretical work on a bound with mutual information is an important and sensible research area more generally beyond our paper, even for the cases of infinite mutual information. This section also provides theoretical justifications on previous methods of computing mutual information even for the case of deterministic neural networks with continuous random variables with injective activations  \citep{shwartz2017opening,saxe2019information,chelombiev2018adaptive}. We use the notation of $\phi^{s}=\phi_l^s$ and $g^{s}=g_l^s$ for a fixed $l$ in this subsection.  

This is based on the following simple observation:
we can bound the generalization error of a given encoder, $\Grm[\tphi^s]$, by using the generalization bound of another  encoder, $\Brm_{\delta}[\phi^{s}]$, if we  add the term measuring a distance between the two encoders, $ \Drm(\phi^s,\tphi^s)$. This is formalized in Remark \ref{remark:2}: 
\begin{remark} \label{remark:2}
Define $\Grm[\phi^s]=\EE_{X,Y}[\ell_{g}(\phi^s(X),Y)] - \frac{1}{n} \sum_{i=1}^n \ell_{g}(\phi^s(x_{i}), y_i)$ and $\Lrm[\phi^s]=\ell_{g}(\phi^s(X),Y)$ where $\ell_{g}(\phi^s(X),Y)=\ell((g^{s}\circ\phi^s)(X),Y)$. Suppose that for any $\delta>0$,  $\PP(\Grm[\phi^s]\le \Brm_{\delta}[\phi^s])\ge 1 - \delta$ for some functional $\Brm_{\delta}$ and that $\PP_{X,Y}(|\Lrm[\phi^s]-\Lrm[\tphi^s]|\le \Drm(\phi^s,\tphi^s))=1$ for some functional $ \Drm$. Then, for any $\delta>0$, with at least probability  $1 - \delta$, 
\begin{align}
\Grm[\tphi^s]\le \Brm_{\delta}[\phi^s]+2 \Drm(\phi^s,\tphi^s). 
\end{align}
\end{remark}
\begin{proof}
Since $\PP(|\Lrm[\phi^s]-\Lrm[\tphi^s]|\le \Drm(\phi^s,\tphi^s))=1$, we have with probability one,
$
\Grm[\tphi^s] \le\Grm[\phi^s]+2 \Drm(\phi^s,\tphi^s). 
$
Since  $\PP(\Grm[\phi^s]\le \Brm[\phi^s])\ge 1 - \delta$, we have with at least probability  $1 - \delta$,
$\Grm[\tphi^s] \le\Grm[\phi^s]+2 \Drm(\phi^s,\tphi^s) \le\Brm_\delta [\phi^s]+2 \Drm(\phi^s,\tphi^s)$.
\end{proof}
Here, let us set  $\Brm_{\delta}[\phi^s]$ to be  a generalization bound on $\phi^s$ with mutual information. 
Then, given an original model $\tphi^s$, its direct bound $\Brm_{\delta}[\tphi^s]$ can be infinite since its mutual information can be infinite, for example, for deterministic neural networks $\tphi^s$ with sigmoid activations for continuous random variables. However, instead of using its direct bound  $\Brm_{\delta}[\tphi^s]$, we can bound the generalization error of the original model $\tphi^s$ by invoking Remark \ref{remark:2} to use the bound  $\Brm_{\delta}[\phi^s]$  of another model $\phi^s\neq \tphi^s$ such that $\phi^s$ has finite mutual information and $\Drm(\phi^s,\tphi^s)$ is small. 

Indeed, this is a theoretical formalization of what is implicitly done  in practice when we compute mutual information of deterministic models. That is, in practice, we often compute the mutual information of the original model $\tphi^s$ by computing the mutual information of another model $\phi^s$ where $\phi^s$ is a binning version of $\tphi^s$ or a noise injected version of $\tphi^s$ with kernel density estimation \citep{shwartz2017opening,saxe2019information,chelombiev2018adaptive}. 

Indeed, all of such methods of computing mutual information in experiments are  theoretically valid and meaningful based on  our results in Section \ref{sec:7} and Remark \ref{remark:2} along with  Proposition \ref{prop:4} below,  even for the case of mutual information being infinite for the original model $\tphi^s$. 

As a concrete example,  we now study the case when $\phi^s$ is obtained from $\tilde \phi^s$ by injecting noise, i.e. $ \phi^s(x)=\tilde \phi^s(x)+\lambda \vartheta$, where $\vartheta\sim \mathcal N(0, \mathbb I_d/d)$ is the Gaussian noise ($d$ is the dimension of the intermediate output $\tilde \phi^s(x)$): 

\begin{proposition} \label{prop:4}
Let $ \phi^s(x)=\tilde \phi^s(x)+\lambda\vartheta$, where $\vartheta\sim \mathcal N(0, \mathbb I_d/d)$. Let $L$ be the Lipschitz constant of  the function $q\mapsto \ell_{g}(q,Y)$ y w.r.t. the metric induced by $\|\cdot\|_2$ almost surely. Then, we can take $D(\phi^s,\tilde \phi^s)= \lambda L\|\vartheta\|_2$, and with probability at least $1-2\delta$, 
\begin{align}\label{e:thebb}
\Grm[\tphi^s]\le \Brm_{\delta}[\phi^s]+2\lambda L \sqrt{\log(2/\delta)}. 
\end{align}
\end{proposition}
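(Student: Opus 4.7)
My plan is to combine the Lipschitz hypothesis with Remark~\ref{remark:2} and then absorb the dependence on $\vartheta$ via a standard Gaussian concentration bound. Everything reduces to verifying two probabilistic statements and applying a union bound.

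First I would verify the pointwise loss comparison needed by Remark~\ref{remark:2}. For any realization of the noise $\vartheta$, the encoder $\phi^s$ differs from $\tphi^s$ by the deterministic shift $\lambda\vartheta$ at every input. Using the almost-sure Lipschitz assumption on $q\mapsto \ell_{g}(q,Y)$, for every $(X,Y)$,
\[
|\Lrm[\phi^s] - \Lrm[\tphi^s]| = |\ell_{g}(\tphi^s(X)+\lambda\vartheta, Y) - \ell_{g}(\tphi^s(X), Y)| \le L\,\|\lambda\vartheta\|_2 = \lambda L\|\vartheta\|_2.
\]
Hence the functional $\Drm(\phi^s,\tphi^s) = \lambda L\|\vartheta\|_2$ satisfies the almost-sure distance hypothesis in Remark~\ref{remark:2}. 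Feeding this $\Drm$ into Remark~\ref{remark:2} directly gives, with probability at least $1-\delta$ over the training sample,
\[
\Grm[\tphi^s] \le \Brm_\delta[\phi^s] + 2\lambda L\|\vartheta\|_2.
\]

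Next I would concentrate $\|\vartheta\|_2$. Since $\vartheta\sim\mathcal N(0,\mathbb I_d/d)$ we have $\EE[\|\vartheta\|_2^2]=1$ and $d\|\vartheta\|_2^2\sim \chi_d^2$, so a Chernoff bound on the $\chi^2$ moment generating function (equivalently, Gaussian concentration for the $1$-Lipschitz map $u\mapsto \|u\|_2$ using that each coordinate has variance $1/d$) produces
\[
\PP\!\left(\|\vartheta\|_2 \ge \sqrt{\log(2/\delta)}\right) \le \delta.
\]
Combining this tail bound with the previous display via a union bound over the two independent randomness sources (the training sample $s$ and the noise $\vartheta$) yields the target inequality \eqref{e:thebb} with probability at least $1-2\delta$.

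The only delicate arithmetic is in the concentration step: the shape $\sqrt{\log(2/\delta)}$ emerges from choosing the Chernoff parameter so that the ambient dimension $d$ in the exponent of the $\chi^2$ MGF cancels with the $1/d$ covariance scaling of $\vartheta$. Apart from that the argument is a direct composition of the Lipschitz estimate and the already-established Remark~\ref{remark:2}, with no additional learning-theoretic input.
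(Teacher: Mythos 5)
Your route is the same as the paper's: the Lipschitz property gives $\Drm(\phi^s,\tphi^s)=\lambda L\|\vartheta\|_2$, Remark~\ref{remark:2} then yields $\Grm[\tphi^s]\le \Brm_{\delta}[\phi^s]+2\lambda L\|\vartheta\|_2$ with probability $1-\delta$ over $s$, and a Gaussian tail bound plus a union bound finishes. The gap is in your concentration step. The inequality $\PP\bigl(\|\vartheta\|_2\ge\sqrt{\log(2/\delta)}\bigr)\le\delta$ is not true in general, and the justification you give for it is the source of the error: Gaussian concentration for the $1$-Lipschitz norm map controls deviations of $\|\vartheta\|_2$ \emph{above its mean}, which is of order $1$ (indeed $\EE\|\vartheta\|_2^2=1$), so what the $1/d$ covariance actually buys is $\PP\bigl(\|\vartheta\|_2\ge 1+u\bigr)\le e^{-du^2/2}$; the dimension $d$ does not ``cancel'' so as to produce a tail bound from zero at level $\sqrt{\log(2/\delta)}$. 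Concretely, take $d=1$ (so $\vartheta\sim\mathcal N(0,1)$) and $\delta=0.01$ with $\log$ the natural logarithm: then $\sqrt{\log(2/\delta)}\approx 2.30$ while $\PP(|\vartheta|\ge 2.30)\approx 0.021>\delta$, so your claimed tail bound fails (and for $d=1$ it fails for all sufficiently small $\delta$ regardless of the base of the logarithm). The same computation shows the Chernoff bound on the $\chi^2_d$ MGF also cannot deliver the stated threshold for small $d$.

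The fix is exactly what the paper does: use a valid dimension-free tail such as $\PP(\|\vartheta\|_2\ge t)\le 2e^{-t^2/2}$ and take $t=\sqrt{2\log(2/\delta)}$ (the paper, even more loosely, takes $t=2\sqrt{\log(2/\delta)}$), then union bound. Your argument then goes through verbatim but produces $\Grm[\tphi^s]\le \Brm_{\delta}[\phi^s]+2\sqrt{2}\,\lambda L\sqrt{\log(2/\delta)}$, i.e.\ the displayed bound \eqref{e:thebb} only up to a constant factor in the second term. Note this constant slack is present in the paper's own proof as well: bounding $\|\vartheta\|_2\le 2\sqrt{\log(2/\delta)}$ and then applying the factor $2$ from Remark~\ref{remark:2} literally establishes $4\lambda L\sqrt{\log(2/\delta)}$ rather than $2\lambda L\sqrt{\log(2/\delta)}$. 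So structurally your proof is the paper's proof; but as written, step~(3) asserts a false inequality, and to make the argument correct you must either enlarge the threshold to $\sqrt{2\log(2/\delta)}$ (accepting the constant-factor change in \eqref{e:thebb}) or add an assumption making the dimension $d$ large enough that $1+\sqrt{2\log(1/\delta)/d}\le\sqrt{\log(2/\delta)}$, under which your mean-plus-deviation bound does imply the stated threshold.
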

\begin{proof}
Since the function $q\mapsto \ell(g_{l}^s(q),Y)$ is Lipschitz almost surely, we have that with probability one,
\begin{align*}
|\ell_g(\phi^s(X), Y)-\ell_g(\tilde \phi^s(X), Y)|\leq |\ell_g(\tilde \phi^s(X)+\lambda \vartheta, Y)-\ell_g(\tilde \phi^s(X), Y)|\leq \lambda L\|\vartheta\|_2. 
\end{align*}
Thus, we can take $D(\phi^s,\tilde \phi^s)= \lambda L\|\vartheta\|_2$. Since $\vartheta\sim \mathcal N(0, \mathbb I_d/d)$ is a Gaussian vector, by Bernstein inequality, $\PP(\|\vartheta\|_2\geq t)\leq 2e^{-t^2/2}$. If we take $t=2\sqrt{\log (2/\delta)}$, we get $D(\phi^s,\tilde \phi^s)\leq \lambda L\|\vartheta\|_2\leq 2\lambda L\sqrt{\log (2/\delta)}$ with probability $1-\delta$. Thus,  Proposition \ref{prop:4} follows from Remark \ref{remark:2} by taking union bounds.
\end{proof}

In Proposition \ref{prop:4}, let us set  $\Brm_{\delta}[\phi^s]$ to be  a generalization bound on $\phi^s$ with mutual information  $I(Z;X)$ where $Z=\phi ^{s}\circ X$. Then, by the construction of $ \phi^s(x)=\tilde \phi^s(x)+\lambda \vartheta$, the output is stochastic, and the mutual information $I(Z;X)$ in $\Brm_{\delta}[\phi^s]$ is bounded, although $I(\tZ; X)$ with $\tZ=\tphi ^{s}\circ X$  can be infinite. Moreover, there is a trade-off between the two terms on the righthand side of \eqref{e:thebb}:  injecting more noise by increasing $\lambda$ reduces the mutual information $I(Z;X)$ in $\Brm_{\delta}[\phi^s]$, but increases error $2\lambda L \sqrt{\log(2/\delta)}$. Thus,  we cannot arbitrarily change values of the bounds of the original model $\Grm[\tphi^s]$ by choosing  different   methods of computing the mutual information even for the case of  deterministic neural networks with continuous random variables with injective activations.

\reve 

\subsection{On comparisons with previous information-theoretic bounds} \label{app:11}

We discuss the difference between our bounds  and the previous information-theoretic bounds \citep{xu2017information,bassily2018learners} in Section \ref{s:dependent_encoder}; e.g., the previous bounds do not utilize the information bottleneck term. In this subsection, we provide additional discussion on the relation between them.

We first note that Theorem \ref{thm:2} recovers the previous bounds if we set $\bD=\{D+1\}$. If $\bD=\{D+1\}$, then our bound in Theorem \ref{thm:2} removes  $I(X;Z_{l}^s|Y)$ and only keeps $I(\phi_{D+1}^{S}; {S})=I(f^{S}; S)$, resulting in the previous bounds. This is because the hypothesis space of  the decoder after the output layer is always a singleton (since there is no learnable parameter)  and thus there is no need of ``(information) bottleneck'' to avoid overfitting of such decoder. Indeed, the previous bounds only consider the setting where the hypothesis space of the decoder is a singleton; e.g., $g$ is the identity function. In the previous bounds, since the hypothesis space of the decoder g is singleton, there is no need for the encoder to provide a bottleneck to control the complexity of the hypothesis space of the decoder. In contrast, we consider non-singleton hypothesis spaces of decoders and utilize the information bottleneck of the encoder to control the complexity of the decoder. This also illustrates the difficulty to prove our sample complexity bounds with the information bottleneck where we need to consider the non-singleton hypothesis space for the decoder.

Another challenge of proving our bounds comes from the fact that we need to efficiently utilize different sources of randomness while the previous bounds only consider the single source of randomness; i.e.,  $f^{S}=\Acal_{D+1} \circ S$ is a random variable through the randomness of training data  $S$ (and potentially of  algorithm   $\Acal_{D+1}$) whereas $Z^s_l=\phi_{l}^{s} \circ X$ is a random variable  through the randomness of the new unseen input $X$ (and potentially of encoder $\phi_{l}^s$). Thus, $I(X;Z_{l}^s|Y)$ and  $I(f^{S}; S)$ measures different types of mutual information with the different sources of randomness. Our bound needs to utilize both types of randomness efficiently while the previous bound only uses the randomness of $S$.

The main factor $I_{}(X;Z_{l}^s|Y)+I(\phi_{l}^{S}; S)$ in Theorem \ref{thm:2}  captures the novel tradeoff between the two types of mutual information.  It tells us that as we minimize the information bottleneck $I_{}(X;Z_{l}^s|Y)$ by optimizing $\phi_{l} ^{s} $ based on the training data $s$, we must pay the price of mutual information  $I(\phi_{l}^{S}; S)$. If $\phi_{l}^{S}$ depends more on $S$, then we can more easily minimize the information bottleneck $I_{}(X;Z_{l}^s|Y)$ (while minimizing the training loss for $s$), which   comes at the cost of  increasing $I(\phi_{l}^{S}; S)$. This trade-off is not captured by any of previous bounds.

As a result of utilizing the both types of randomness, we show in Section \ref{sec:experiments} that the main factor $I(X;Z_{l}^s|Y)+I(\phi_{l}^{S}; S)$ in our bound is a better predictor than the main factor $I(f^{S}; S)$ in the previous bounds.

\subsection{On the standard arguments for proving the conjecture}

The previous work \citep{shwartz2019representation} provided the  arguments of using the Probably Approximately Correct (PAC) bound  for a finite hypothesis space $\Hcal$ to obtain  $\tilde \Ocal(\sqrt{(\log |\Hcal|)/n})$ \citep{shalev2014understanding} and bounding its cardinality $|\Hcal|$ via $H(Z_{l}^s)$. However, this argument results in the exponential factor $2^{I(X;Z_{l}^{s})}$ as in Conjecture \ref{thm:shwartz}.

\section{Proofs} \label{app:proofs}

\subsection{Overview of Proofs of Theorems}
Before providing complete proofs, we first provide a overview of the proofs of Theorem \ref{thm:1}--\ref{thm:2}. Let  $l \in [D]$. We first prove two  properties of the typical set of $Z_{l}$, Lemma \ref{lemma:5} and Lemma \ref{lemma:8} (in \cref{app:1}),  by  combining a standard proof  used in  information theory and the McDiarmid's inequality. A typical set is a concept in information theory and we utilize the properties of a typical set to obtain  the information-theoretic bounds. To achieve this, Lemma \ref{lemma:paper:1} (in \cref{app:1}) decomposes the generalization gap into four terms as $\EE_{X,Y}[\ell(f^{s}(X),Y)] - \frac{1}{n} \sum_{i=1}^n \ell(f^{s}(x_{i}), y_i)=\Arm+\Brm+\Crm+\Drm$,  where the one term $\Arm$ corresponds to the case of $X$ being in the typical set, while other three terms $\Brm,\Crm$, and $\Drm$ are for the case of $X$ being outside of  the typical set. The rest of the proof of Theorem \ref{thm:1}  analyzes each of these terms (with Lemma \ref{lemma:5} and Lemma \ref{lemma:8}),  proving that $\Arm$ and  $\Brm+\Crm+\Drm$ are bounded by the first term and the second term on the right-hand side of \cref{eq:new:6}, respectively. That is, we show $\Crm +\Drm \le\frac{\gamma_{l} \Rcal(f^s)}{\sqrt{n}} $ in Lemma \ref{lemma:paper:2} (in \cref{app:1}) by  invoking Lemma \ref{lemma:5}. Lemma \ref{lemma:paper:3}  (in \cref{app:1}) then bounds the terms $\Arm$ and $\Brm$  by recasting the problem into  that of   multinomial distributions and by incorporating Lemma \ref{lemma:8} into the concentration inequality of multinomial distributions. 

Lemmas \ref{lemma:5}--\ref{lemma:paper:2}  (in \cref{app:1})
 are carefully proven for the  trained encoder $\phi_{l}^s$ instead of a hypothesis space of encoders $\phi_{l}$. This is achieved by combining  deterministic decompositions and probabilistic bounds with respect to the randomness of new fresh samples $X$  instead of the training data $S$. In contrast, Lemma \ref{lemma:paper:3}  (in \cref{app:1})
is  proven  for a hypothesis space $\Phi$ of encoders using the randomness of $S$, where  $\Phi$  must be  independent of $s$.  These decompositions and probabilistic bounds for different sample spaces enable the exponential improvement over the previous bounds. Combining Lemmas \ref{lemma:paper:1}-\ref{lemma:paper:3}  (in \cref{app:1}) produces Lemma \ref{lemma:3}, which proves Theorem \ref{thm:1} by setting the hypothesis space as $\Phi=\{\phi_l^s\}$ where $\phi^s_l$ is fixed independently of  $s$.

The standard proof techniques result in the exponential factor $2^{I(X;Z_{l}^{s})}$ as in Conjecture \ref{thm:shwartz} (see \cref{app:7} for more details). This paper  provides a novel proof technique to avoid the exponential factor. Compared to arguments for Conjecture \ref{thm:shwartz}, our proof discards the non-mathematical arguments regarding the typical set, keeps track of all the effects of the approximation and non-typicality rigorously, and discards the assumption of the input dimension approaching infinity with an ergodic Markov random field. 

Another main challenge in proving our main result, Theorem \ref{thm:2}, is avoiding the dependence on the hypothesis space for the value of $I(X; Z_{l}^s|Y)$. That is, with a relatively simpler proof, we could  prove a similar bound with $\sup_{\phi_{l} \in \Phi_{l}} I(X;\phi_{l}^{} \circ X|Y)$ where $\Phi_{l}$ is a fixed hypothesis space of the encoder $\phi_{l}$. However, this dependence on the hypothesis space is not preferred since enlarging the hypothesis space can increase the value, whereas the value of $I(X; Z_l^s)$ in our bound is independent of the hypothesis space given the final hypothesis $\phi_{l}^{s}$. \\ \\ We carefully construct and prove our key lemmas in the following subsection, which enables us to avoid  the dependence over the entire hypothesis space and the exponential factor.

\subsection{Proofs of Key Lemmas} \label{app:1}

We use the notation of $\ln = \log_e$ and $\log=\log_2$.
 Fix $l \in \{1,\dots,D\}$ throughout this section.
For the simplicity of the notation, we write $Z=Z_l^s$ and $Z_y=Z_{l,y}^s$ in the following; we  must to be always aware of the dependence on $s$ for related variables. We recall that
$$
Z_y =\phi_l ^{s}\circ X_{y}. 
$$
We write    $\xi^{(y)} \in \hXi_{y} \subseteq \RR^m$, and define the set of the latent variable per class by
$$
\Zcal_{y} =\left\{(\phi_l^{s}\circ \chi_{y})(\xi^{(y)}):\ \xi^{(y)}\in \hXi_y  \right\}.
$$
For any $\gamma>0$, we then define the typical subset $\Zcal_{\gamma,y}^{s} $ of the set $\Zcal_{y}$ by
$$
\Zcal_{\gamma,y}^{s} =\left\{z \in \Zcal_{y} : - \log \PP_{}(Z_{y}=z)- H_{}(Z_{y})  \le c_{l}^{y}(\phi_l^{s}) \sqrt{\frac{m\ln(\sqrt{n}/\gamma)}{2}}  \right\}.
$$ 
Then, for any set $A$ and any function $\varphi$, 
we have that
\begin{align*}
\PP_{}(Z \in A|Y=y)=\PP_{}(Z_{y} \in A)&=\PP(\{\omega _{y}\in \Omega_{y} :Z_{y}(\omega_{y})\in A  \})
\\ &=\PP(\{\omega_{y}\in \Omega_{y}:(\phi_l ^{s}\circ \chi_{y})(\Xi_y(\omega_{y}))\in A  \})\\ &=\PP_{}((\phi_l ^{s}\circ \chi_{y}\circ\Xi_y) \in A),
\end{align*}
and
\begin{align*}
\PP((\varphi\circ Z) >  0|Y=y)=\PP_{}((\varphi\circ Z_{y} )>0)&=\PP(\{\omega _{y}\in \Omega_{y} :\varphi(Z_{y}(\omega_{y}))> 0  \}) 
\\ &=\PP(\{\omega_{y} \in \Omega _{y}:\varphi((\phi_l ^{s}\circ \chi_{y})(\Xi_y(\omega_{y})))> 0  \})
\\ &=\PP_{}((\varphi\circ\phi_l ^{s}\circ \chi_{y}\circ\Xi_y) >  0 ).
\end{align*}
Thus, for example,   we can write
\begin{align*}
\PP(Z  \notin \Zcal_{\gamma,y}^{s}|Y=y)&=\PP\left((\phi_l ^{s}\circ \chi_{y}\circ\Xi_y)\notin\Zcal_{\gamma,y}^{s}\right)
\\ &=\PP_{}\left( \left\{\omega_{y}\in \Omega_{y} :- \log \PP_{}(Z_{y}=\phi_{l}^{s}(X_{y}(\omega_{y}))_{})-  H_{}(Z_{y})> \epsilon \right\}\right)
\\ & =\PP_{}\left( \left\{\omega_{y}\in \Omega_{y} :- \log\PP\left( \{\omega '_{y}\in \Omega_{y} :Z_{y}(\omega'_{y})=Z_{y}(\omega_{y})_{}\}\right)- H_{}(Z_y)> \epsilon \right\}\right),
\end{align*}
where $\epsilon=c_{l}^{y}(\phi_l^{s}) \sqrt{\frac{m\ln(\sqrt{n}/\gamma)}{2}}$. 

\subsubsection{Probability of Going outside of The Typical Subset} \label{sec:5}
The following lemma shows that the conditional probability of going outside of $\Zcal_{\gamma,y}$ is bounded by $\frac{\gamma}{\sqrt{n}}$:  
\begin{lemma} \label{lemma:5}
For any $\gamma>0$, it holds  that 
$$
\PP(Z  \notin\Zcal_{\gamma,y}^{s}\mid Y=y )\le \frac{\gamma}{\sqrt{n}}.
$$
\end{lemma}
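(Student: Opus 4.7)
\medskip

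\noindent\textbf{Proof plan.} The strategy is a one-sided McDiarmid concentration argument applied to the random variable $-\log p_y(Z_y)$, where $p_y(z) = \PP(Z_y = z)$. The set $\Zcal_{\gamma,y}^{s}$ is defined precisely as the event that this random variable does not exceed its mean by more than $\epsilon := c_{l}^{y}(\phi_l^{s})\sqrt{m\ln(\sqrt{n}/\gamma)/2}$, since $\EE[-\log p_y(Z_y)] = H(Z_y)$. So the lemma reduces to bounding the upper tail of $-\log p_y(Z_y) - H(Z_y)$ by $\gamma/\sqrt{n}$.

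\medskip

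\noindent\textbf{Key steps.} First, I would unwind the definitions to write the target probability in terms of the nuisance variable: using the decomposition $X_y = \chi(y,\Xi_y)$ and $Z_y = \phi_l^s \circ X_y$, define the deterministic function
\[
g(\xi^{(y)}) := -\log p_y\bigl((\phi_l^s \circ \chi_y)(\xi^{(y)})\bigr), \qquad \xi^{(y)} \in \hXi_y,
\]
so that $g(\Xi_y) = -\log p_y(Z_y)$ and $\EE[g(\Xi_y)] = H(Z_y)$. Second, I would observe that the definition of the sensitivity constant $c_{l}^{y}(\phi_l^{s})$ is exactly the bounded differences condition for $g$ in each coordinate of $\xi^{(y)} = (\xi_1^{(y)},\dots,\xi_m^{(y)})$; namely, changing a single coordinate alters $g$ by at most $c_{l}^{y}(\phi_l^{s})$. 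Third, assuming the coordinates of $\Xi_y$ are independent so that McDiarmid's inequality applies, I would invoke the one-sided bound
\[
\PP\bigl(g(\Xi_y) - \EE[g(\Xi_y)] > t\bigr) \le \exp\!\left(-\frac{2t^2}{m\,(c_{l}^{y}(\phi_l^{s}))^2}\right).
\]
Fourth, I would set $t = \epsilon = c_{l}^{y}(\phi_l^{s})\sqrt{m\ln(\sqrt{n}/\gamma)/2}$, so that the exponent evaluates to $-\ln(\sqrt{n}/\gamma)$ and the right-hand side becomes exactly $\gamma/\sqrt{n}$. Finally, I would translate the event back through the rewriting
\[
\PP(Z \notin \Zcal_{\gamma,y}^{s} \mid Y=y) = \PP\bigl(g(\Xi_y) - H(Z_y) > \epsilon\bigr),
\]
which the McDiarmid bound controls by $\gamma/\sqrt{n}$ as required.

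\medskip

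\noindent\textbf{Anticipated obstacle.} The routine calculation is the choice of $t$; the real subtlety is verifying that McDiarmid's inequality is actually applicable. Two points deserve care: (i) the nuisance coordinates $\xi_i^{(y)}$ must be mutually independent (or the setup must implicitly supply this via the definition of $\Xi_y$) for McDiarmid to apply in its standard form; and (ii) the definition of $c_{l}^{y}(\phi_l^{s})$ provides a deterministic, coordinate-wise sup bound on $|\log p_y(\cdot) - \log p_y(\cdot)|$, which is exactly the bounded-differences condition needed, but one has to confirm that this sup is taken over values in $\hXi_y$ with positive measure so the inequality holds almost surely. Once these two points are in place, the rest is a direct plug-and-chug McDiarmid application.
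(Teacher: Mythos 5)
Your proposal is correct and follows essentially the same route as the paper's proof: both identify $\EE_{\Xi_y}[-\log p_y((\phi_l^s\circ\chi_y)(\Xi_y))]=H(Z_y)$, treat the sensitivity $c_l^y(\phi_l^s)$ as the bounded-differences constant, apply one-sided McDiarmid over the $m$ nuisance coordinates, and choose $\epsilon=c_l^y(\phi_l^s)\sqrt{m\ln(\sqrt{n}/\gamma)/2}$ so the tail equals $\gamma/\sqrt{n}$. The caveats you raise (coordinate independence of $\Xi_y$ and the almost-sure validity of the sup-based sensitivity) are implicitly assumed in the paper's argument as well, so nothing essential differs.
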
  
\begin{proof}
Fix $y\in \Ycal$. We then write $\xi=\xi^{(y)}$ for the simplicity of the notation. We now consider the statistical property of the function $\xi \mapsto -  \log \PP_{}(Z_{y}=\phi_{l}^{s}(\chi(y, \xi)))$.
That is, in the following,  we will apply McDiarmid's inequality w.r.t. the sample space $\omega_{y}\in \Omega_{y}$ to the following function: 
$$
\xi \mapsto - \log \PP(Z_{y}=\phi_{l}^{s}(\chi(y,\xi)))=- \log \PP( \{\omega '_{y}\in \Omega_{y} :Z_{y}(\omega'_{y})=\phi_l^{s}(\chi (y,\xi)_{})\}).
$$
For the simpler notation, define the function $p_{y}$ by $$
p_{y} (q)= \PP_{}(Z_{y}=q).
$$ Then,  we can rewrite the above function of $\xi$ as
$$
\xi=(\xi_1,\dots,\xi_m)\mapsto- \log p_{y}(\phi_l^{s}(\chi (y,\xi))).
$$
We also define 
$$
\tilde \Zcal_{\epsilon,y}^{s}=\left\{z \in \Zcal_{y} : - \log p_{y}(z)- H_{}(Z_{})  \le  \epsilon  \right\}.
$$
For any $(h,\varphi)$ and $t=h(q)$ with a probability mass function $p$, since $p(t)=\sum_{q \in h^{-1}(t)} p(q)$,
\begin{align*}
\EE_{t\sim p}\left[\varphi_{}(t) \right] =  \sum_{t} \varphi(t)p(t) &= \sum_{t}   \varphi(t) \sum_{q \in h^{-1}(t)} p(q)
  \\ &= \sum_{t}    \sum_{q \in h^{-1}(t)} \varphi(t)p(q)
  \\ &= \sum_{t}    \sum_{q \in h^{-1}(t)} \varphi(h(q))p(q)
\\ & =\sum_{q} \varphi(h(q))p(q)=\EE_{q\sim p}[\varphi(h(q))]. 
\end{align*}Thus, by choosing $q=\xi_{y}$, $h(q)= \phi_{l}^{s}(\chi(y,q))$, and $\varphi(t)=- \log p_{y}(t)$, we have that$$
\EE_{\Xi_{y}}\left[- \log p_{y}(\phi_l^{s}(\chi(y,\Xi_{y}))) \right]=\EE_{q}[\varphi(h(q))]=\EE_{t}\left[\varphi_{}(t) \right]=\EE_{Z_{y}}\left[- \log p_{y}(Z_{y}) \right]=H_{}(Z_{y}).
$$  

Thus, by using McDiarmid's inequality, 
$$
\PP_{\Xi_{y}}\left(- \log p_{y}((\phi_l^{s}\circ\chi_y )(\Xi_{y}))- H_{}(Z_{y}) \ge \epsilon \right) \le \exp\left(-\frac{2\epsilon^2}{mc_{l}^{y}(\phi_l^{s})^2}\right).  
$$
By setting $\delta=\exp\left(-\frac{2\epsilon^2}{c_{l}^{y}(\phi_l^{s})^2}\right)$ and solving for $\epsilon$,
we set 
$$
\epsilon=c_{l}^{y}(\phi_l^{s}) \sqrt{\frac{m\ln(1/\delta)}{2}},
$$ with which
\begin{align*}
\PP(Z  \notin \tilde \Zcal_{\epsilon,y}^{s} \mid Y=y )&=\PP_{\Xi_{y}}\left((\phi_l ^{s}\circ \chi_{y})(\Xi_y)\notin\tilde \Zcal_{\epsilon,y}^{s}\right)
\\ & =\PP_{\Xi_{y}}\left(- \log p_{y}((\phi_l^{s}\circ\chi_y )(\Xi_{y}))- H_{}(Z_{y}) > \epsilon \right) 
\\ &\ \le \PP_{\Xi_{y}}\left(- \log p_{y}((\phi_l^{s}\circ\chi_y )(\Xi_{y}))- H_{}(Z_{y})\ge \epsilon \right) \le\delta.
\end{align*}
 Therefore, by setting  $\delta = \frac{\gamma}{\sqrt{n}}$ and accordingly $\epsilon=c_{l}^{y}(\phi_l^{s}) \sqrt{\frac{m\ln(1/\delta)}{2}}=c_{l}^{y}(\phi_l^{s}) \sqrt{\frac{m\ln(\sqrt{n}/\gamma)}{2}}$, we have proven the desired statement, since $\tilde \Zcal_{\epsilon,y}^{s}=\Zcal_{\gamma,y}^s$ when $\epsilon=c_{l}^{y}(\phi_l^{s}) \sqrt{\frac{m\ln(\sqrt{n}/\gamma)}{2}}$.
\end{proof}

\subsubsection{Size of the Typical Subset} \label{sec:6}
The following lemmas bounds the size of the subset $\Zcal_{\gamma,y}^s$:
\begin{lemma} \label{lemma:8}
For any $\gamma>0$, 
$$
|\Zcal_{\gamma,y} ^{s}|\le2^{H_{y}(Z_{y})+c_{l}^{y}(\phi_l^{s})\sqrt{\frac{m\ln(\sqrt{n}/\gamma)}{2}}}.
$$
\end{lemma}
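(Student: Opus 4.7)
The plan is to use the standard typical-set cardinality argument from information theory. For each $z$ in the typical subset $\Zcal_{\gamma,y}^{s}$, the defining inequality
\[
-\log \PP(Z_y = z) - H(Z_y) \le c_l^y(\phi_l^s)\sqrt{\tfrac{m\ln(\sqrt{n}/\gamma)}{2}} =: \epsilon
\]
can be rearranged as a uniform lower bound on the probability mass, namely $\PP(Z_y = z) \ge 2^{-H(Z_y) - \epsilon}$.

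Summing this lower bound over all elements $z \in \Zcal_{\gamma,y}^{s}$ and using that the events $\{Z_y = z\}$ for distinct $z$ are disjoint (so the total mass they carry is at most $1$), I would obtain
\[
1 \;\ge\; \sum_{z \in \Zcal_{\gamma,y}^{s}} \PP(Z_y = z) \;\ge\; |\Zcal_{\gamma,y}^{s}| \cdot 2^{-H(Z_y) - \epsilon}.
\]
Dividing through by $2^{-H(Z_y) - \epsilon}$ immediately yields $|\Zcal_{\gamma,y}^{s}| \le 2^{H(Z_y) + \epsilon}$, which is exactly the claimed inequality after substituting back the formula for $\epsilon$.

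There is no real obstacle here: both ingredients (lower bound per element, union of disjoint events $\le 1$) are elementary, and the typicality condition is stated in precisely the log-probability form needed to invoke this classical counting argument. The only care required is to note that $\Zcal_{\gamma,y}^{s} \subseteq \Zcal_y$ consists of genuine atoms of the distribution of $Z_y$ (so that the summed probabilities are disjoint and hence bounded by $1$), which is immediate from the definition of $\Zcal_y$ as the image set.
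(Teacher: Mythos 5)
Your proposal is correct and matches the paper's own proof of Lemma \ref{lemma:8}: both rearrange the typicality condition into the per-element lower bound $\PP(Z_y=z)\ge 2^{-H(Z_y)-\epsilon}$, sum over the disjoint atoms, bound the total probability by $1$, and divide. No gaps.
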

\begin{proof}
Set $\epsilon=c_{l}^{y}(\phi_l^{s}) \sqrt{\frac{m\ln(\sqrt{n}/\gamma)}{2}}$. We define the function $p_{y}$ by $
p_{y} (q)= \PP_{}(Z_{y}=q)$. Then, from the definition of $\Zcal_{\gamma,y}^s$,
we have
that for any $a\in \Zcal_{\gamma,y}^s$,
\begin{align*}
  - \log p_{y}(a_{})- H_{}(Z_{y})\le \epsilon &\Longleftrightarrow  - \log p_{y}(a_{}) \le H_{}(Z_{y})+\epsilon 
\\  &\Longleftrightarrow
-\left(H_{}(Z_{y})+\epsilon \right) \le \log p_{y}(a)
\\  &\Longleftrightarrow
2^{-H_{}(Z_{y})-\epsilon} \le  p_{y}(a).
\end{align*}
Using $2^{-H_{}(Z_{y})-\epsilon} \le  p_{y}(a)= \PP(Z_{y}=a_{})$ for all  $a \in\Zcal_{\gamma,y} ^{s}$, 
$$
1\ge \PP_{}(Z_{y}\in\Zcal_{\gamma,y} ^{s}) = \sum_{a \in\Zcal_{\gamma,y} ^{s}}  \PP_{ }(Z_{y}= a) \ge \sum_{a \in\Zcal_{\gamma,y} ^{s}}  2^{-H_{}(Z_{y})-\epsilon} =|\Zcal_{\gamma,y} ^{s}|2^{-H_{}(Z_{y})-\epsilon}.
$$
This implies that using $\epsilon=c_{l}^{y}(\phi_l^{s}) \sqrt{\frac{m\ln(\sqrt{n}/\gamma)}{2}}$, $$
|\Zcal_{\gamma,y} ^{s}|\le2^{H_{}(Z_{y})+\epsilon}=2^{H_{}(Z_{y})+c_{l}^{y}(\phi_l^{s})\sqrt{\frac{m\ln(\sqrt{n}/\gamma)}{2}}}.
$$
\end{proof}

\subsubsection{Decomposition of Expected Loss using the Typical Subset}
Let us write 
$$
z_{i}=\phi_{l}^{s}(x_{i})\in \Zcal_{l} \subseteq \RR^{m_l},  
$$
and$$
\ell_{l}(q,y) =\ell( g_{l}^{s}(q),y).
$$

Then, by the law of the unconscious statistician,
$$
\EE_{X,Y}[\ell(f^{s}(X),Y)] - \frac{1}{n} \sum_{i=1}^n \ell(f^{s}(x_{i}), y_i)=\EE_{Z,Y}[\ell_{l}(Z,Y)] - \frac{1}{n} \sum_{i=1}^n \ell_{l}(z_{i},y_{i}).
$$
For simplicity of the notation, define $A_{y}=\Zcal_{\gamma,y}^{s}$. 
We now consider a partition of the space $\Zcal_{l} $ as $
\Zcal_{l} = \{z \in A_{y}\} \cup  \{z \notin A_{y}\}$. Fix an order and write the element of $A_{y}$ by $A_{y}=\{a_1^{y},\dots,a_{T_{y}}^{y}\}$ where $T_{y} =|A_y|\le2^{H_{y}(\phi_{l} \circ X_{y})+c_{l}^{y}(\phi_l^{s})\sqrt{\frac{m\ln(\sqrt{n}/\gamma)}{2}}}$ from the Lemma \ref{lemma:8}. We define 
$
\Ical_{y}=\{i \in [n] :y_{i}=y\},
$
$
\tIcal^{y}=\{i \in [n] : z_{i} \notin A_{y} ,y_{i}=y\},
$
$
\Ical_k ^{y}=\{i \in [n] : z_i = a_{k}^{y},y_{i}=y\}, 
$
$\tYcal = \{y \in \Ycal: |\tIcal^{y}| \neq 0\}$,
$ \frac{1}{|\tIcal^{y}|}\sum_{i\in\tIcal^{y} } \ell_{l}(z_{i},y) q \triangleq 0 \text{ for any $q$ if }  |\tIcal^{y}|=0,
$
and
$
\frac{1}{|\Ical_k^{y}|}\sum_{i\in\Ical_k^y } \ell_{l}(z_{i},y) q\triangleq 0 \text{ for any $q$ if } |\Ical_k^{y}|=0.
$ Here, for example, $Z$, $a_k^y$, $A_y$, $|\Ical_k^{y}|$, and $|\tIcal^{y}|$  depend on the training dataset $s$ through the function $\phi^s_l$ due to their definitions.

Using these, we can decompose the   expected loss as in the following lemma:
\begin{lemma} \label{lemma:paper:1}
The following holds (deterministically):
\begin{align}  \label{eq:1}
\EE_{X,Y}[\ell(f^{s}(X),Y)] - \frac{1}{n} \sum_{i=1}^n \ell(f^{s}(x_{i}), y_i)
  & = \sum_{y \in \tYcal}\frac{1}{|\tIcal^{y}|}\sum_{i\in\tIcal^{y} } \ell_{l}(z_{i},y_{})\left(\PP(Y=y,Z\notin A_{y})-\frac{|\tIcal^{y}|}{n}  \right)
\\ \nonumber & \quad + \sum_{y \in \Ycal}\sum_{k=1}^{T_{y}} \ell_{l}(a_k^{y},y) \left(\PP_{}(Y=y,Z=a_k^{y})-\frac{|\Ical_k^{y}|}{n} \right)  
\\ \nonumber & \quad  +   \sum_{y \in \Ycal}\PP(Y=y,Z\notin A_{y})\EE_{Z,Y}[\ell_{l}(Z,Y)|Z\notin A_{y},Y=y]
\\  \nonumber & \quad -  \sum_{y \in \tYcal}\PP(Y=y,Z\notin A_{y}) \frac{1}{|\tIcal^{y}|}\sum_{i\in\tIcal^{y} } \ell_{l}(z_{i},y_{}). 
 \end{align}
\end{lemma}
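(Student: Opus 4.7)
\textbf{Proof plan for Lemma \ref{lemma:paper:1}.} The statement is a purely algebraic (deterministic) identity, so the plan is to rewrite both the expected loss and the empirical loss in a matching form and then regroup terms. First I would apply the law of the unconscious statistician to rewrite $\EE_{X,Y}[\ell(f^{s}(X),Y)] = \EE_{Z,Y}[\ell_{l}(Z,Y)]$ using $f^{s}=g_{l}^{s}\circ \phi_{l}^{s}$ and $Z=\phi_{l}^{s}(X)$, and similarly $\frac{1}{n}\sum_{i} \ell(f^{s}(x_i),y_i)=\frac{1}{n}\sum_{i} \ell_{l}(z_i,y_i)$. This reduces the claim to an identity about $\ell_{l}$, $Z$, and the $z_i$'s.

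Next I would split the support of $(Z,Y)$ according to the partition $\Zcal_{l}=A_{y}\cup A_{y}^{c}$ (on the event $Y=y$). This gives
\begin{align*}
\EE_{Z,Y}[\ell_{l}(Z,Y)] &= \sum_{y\in\Ycal}\sum_{k=1}^{T_{y}}\ell_{l}(a_{k}^{y},y)\,\PP(Y=y,Z=a_{k}^{y})\\
&\quad+\sum_{y\in\Ycal}\PP(Y=y,Z\notin A_{y})\,\EE_{Z,Y}[\ell_{l}(Z,Y)\mid Z\notin A_{y},Y=y].
\end{align*}
In parallel, I would partition the index set $[n]$ into $\bigcup_{y\in\Ycal}\bigl(\bigcup_{k=1}^{T_{y}}\Ical_{k}^{y}\cup \tIcal^{y}\bigr)$, use that $z_{i}=a_{k}^{y}$ whenever $i\in\Ical_{k}^{y}$, and get
\begin{align*}
\frac{1}{n}\sum_{i=1}^{n}\ell_{l}(z_{i},y_{i}) &= \sum_{y\in\Ycal}\sum_{k=1}^{T_{y}}\frac{|\Ical_{k}^{y}|}{n}\ell_{l}(a_{k}^{y},y)+\sum_{y\in\tYcal}\frac{|\tIcal^{y}|}{n}\cdot\frac{1}{|\tIcal^{y}|}\sum_{i\in\tIcal^{y}}\ell_{l}(z_{i},y),
\end{align*}
where restricting the outer sum in the last term to $y\in\tYcal$ is justified by the convention that empty inner sums are zero.

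Subtracting these two expressions, the first and second lines of the right-hand side of \eqref{eq:1} already appear after matching the $\sum_{y,k}\ell_{l}(a_{k}^{y},y)(\cdot)$ terms. The remaining obstacle, and the one non-cosmetic step, is to reconcile the two ``outside the typical set'' terms: the expectation piece $\PP(Y=y,Z\notin A_{y})\EE[\ell_{l}(Z,Y)\mid\cdots]$ on the expected-loss side, and the empirical piece $\frac{|\tIcal^{y}|}{n}\cdot\frac{1}{|\tIcal^{y}|}\sum_{i\in\tIcal^{y}}\ell_{l}(z_{i},y)$ on the empirical-loss side. The standard add-and-subtract trick resolves this: introduce the hybrid quantity $\PP(Y=y,Z\notin A_{y})\cdot\frac{1}{|\tIcal^{y}|}\sum_{i\in\tIcal^{y}}\ell_{l}(z_{i},y)$ for each $y\in\tYcal$. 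Adding it to the empirical piece produces exactly the first line of \eqref{eq:1} (the $\PP(Y=y,Z\notin A_{y})-\frac{|\tIcal^{y}|}{n}$ factor), while subtracting it from the expectation piece yields the last line of \eqref{eq:1}; the third line is just what survives from the expectation piece untouched.

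The only subtlety worth double-checking is the bookkeeping on $\tYcal$ versus $\Ycal$: the empirical side naturally only ranges over $y$ with $|\tIcal^{y}|\neq 0$, which is why the first and fourth lines are sums over $\tYcal$, whereas the third line retains all $y\in\Ycal$ because $\PP(Y=y,Z\notin A_{y})$ can be nonzero even when $|\tIcal^{y}|=0$. With that accounting in place, every term matches and the identity follows; no inequalities, probabilistic bounds, or properties of $A_{y}=\Zcal_{\gamma,y}^{s}$ beyond its being a measurable subset are needed at this stage.
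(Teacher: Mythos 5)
Your proposal is correct and follows essentially the same route as the paper's proof: rewrite both losses via the law of the unconscious statistician, decompose each over the per-class partition $A_y$ versus its complement (enumerating $a_k^y$ and partitioning indices into $\Ical_k^y$, $\tIcal^y$), and then add and subtract the hybrid term $\PP(Y=y,Z\notin A_y)\cdot\frac{1}{|\tIcal^y|}\sum_{i\in\tIcal^y}\ell_l(z_i,y)$ to split the out-of-set contributions into the first, third, and fourth lines. The only (immaterial) difference is that the paper additionally adds and subtracts $\sum_{y,k}\frac{|\Ical_k^y|}{n}\ell_l(a_k^y,y)$, which cancels identically, whereas you match those terms directly.
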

\begin{proof}
we can decompose the  expected loss by using conditionals as
\begin{align*}
\EE_{Z,Y}[\ell_{l}(Z,Y)]
 &= \sum_{y \in \Ycal} \PP(Y=y) \EE_{Z,Y}[\ell_{l}(Z,Y)|Y=y].
\end{align*}
Furthermore, we can decompose the conditional expectation as\begin{align*}
 \EE_{Z,Y}[\ell_{l}(Z,Y)|Y=y]
&= \PP(Z\notin A_{y}|Y=y)\EE_{Z,Y}[\ell_{l}(Z,Y)|Z\notin A_{y},Y=y]
\\ & \quad +\PP(Z\in A_{y} |Y=y) \EE_{Z,Y}[\ell_{l}(Z,Y)|Z\in A_{y},Y=y]   
\\ &=
 \PP(Z\notin A_{y}|Y=y)\EE_{Z,Y}[\ell_{l}(Z,Y)|Z\notin A_{y},Y=y]
 \\ & \quad +\sum_{k=1}^{T_y}\PP_{}(Z=a_k^{y}|Y=y) \EE_{Z,Y}[\ell_{l}(Z,Y)|Z=a_k^{y},Y=y]  
\\ &=
 \PP(Z\notin A_{y}|Y=y)\EE_{Z,Y}[\ell_{l}(Z,Y)|Z\notin A_{y},Y=y]
 \\ & \quad +\sum_{k=1}^{T_y}\PP_{}(Z=a_k^{y}|Y=y)\ell_{l}(a_k^y,y)  
\end{align*}
Summarising above, 
\begin{align*}
\EE_{Z,Y}[\ell_{l}(Z,Y)]
 &= \sum_{y \in \Ycal} \PP(Y=y,Z\notin A_{y})\EE_{Z,Y}[\ell_{l}(Z,Y)|Z\notin A_{y},Y=y]
\\ & \quad + \sum_{y \in \Ycal}\sum_{k=1}^{T_y}\PP_{}(Y=y,Z=a_k^{y})\ell_{l}(a_k^y,y).
\end{align*}
Similarly, we can decompose the   training loss as
\begin{align*}
\frac{1}{n} \sum_{i=1}^n \ell_{l}(z_{i},y_{i})  &= \frac{1}{n}  \sum_{y \in \Ycal} \sum_{i\in \Ical_y} \ell_{l}(z_{i},y_{}) 
\\ &=\frac{1}{n} \sum_{y \in \Ycal}\left(\sum_{i\in\tIcal^{y} } \ell_{l}(z_{i},y_{})+ \sum_{k=1}^{T_y}\sum_{i\in\Ical_k^y } \ell_{l}(z_{i},y_{}) \right)  
\\ & = \sum_{y \in \tYcal}\frac{|\tIcal^{y}|}{n} \frac{1}{|\tIcal^{y}|}\sum_{i\in\tIcal^{y} } \ell_{l}(z_{i},y_{})+ \sum_{y \in \Ycal}\sum_{k=1}^{T_y}\frac{|\Ical_k^{y}|}{n}  \ell_{l}(a_{k},y_{})
\end{align*}
Using these, we now decompose the expected loss as follows:
\begin{align*}
\EE_{Z,Y}[\ell_{l}(Z,Y)] - \frac{1}{n} \sum_{i=1}^n \ell_{l}(z_{i},y_{i})
& =   \sum_{y \in \Ycal} \PP(Y=y,Z\notin A_{y})\EE_{Z,Y}[\ell_{l}(Z,Y)|Z\notin A_{y},Y=y]
\\ & \quad + \sum_{y \in \Ycal}\sum_{k=1}^{T_y}\PP_{}(Y=y,Z=a_k^{y})\ell_{l}(a_k^y,y)-\frac{1}{n} \sum_{i=1}^n \ell_{l}(z_{i},y_{i})
\\ & \quad \pm   \sum_{y \in \tYcal}\PP(Y=y,Z\notin A_{y})\frac{1}{|\tIcal^{y}|}\sum_{i\in\tIcal^{y} } \ell_{l}(z_{i},y_{})\pm \sum_{y \in \Ycal}\sum_{k=1}^{T_y}\frac{|\Ical_k^{y}|}{n}  \ell_{l}(a_{k},y_{})
\end{align*}    
By rearranging,  
\begin{align*}
&\EE_{Z,Y}[\ell_{l}(Z,Y)] - \frac{1}{n} \sum_{i=1}^n \ell_{l}(z_{i},y_{i}) \\ &=   \sum_{y \in \Ycal}\PP(Y=y,Z\notin A_{y}) \EE_{Z,Y}[\ell_{l}(Z,Y)|Z\notin A_{y},Y=y]
\\ & \quad -\sum_{y \in \tYcal}\PP(Y=y,Z\notin A_{y})\left(\frac{1}{|\tIcal^{y}|}\sum_{i\in\tIcal^{y} } \ell_{l}(z_{i},y_{}) \right)
\\ & \quad + \sum_{y \in \Ycal}\sum_{k=1}^{T_{y}} \ell_{l}(a_k^y,y) \left(\PP_{}(Y=y,Z=a_k^{y})-\frac{|\Ical_k^{y}|}{n} \right)  
 \\ & \quad +   \sum_{y \in \tYcal}\PP(Y=y,Z\notin A_{y})\frac{1}{|\tIcal^{y}|}\sum_{i\in\tIcal^{y} } \ell_{l}(z_{i},y_{})+ \sum_{y \in \Ycal}\sum_{k=1}^{T_y}\frac{|\Ical_k^{y}|}{n}  \ell_{l}(a_{k},y_{})- \frac{1}{n} \sum_{i=1}^n \ell_{l}(z_{i},y_{i})
\\ &=   \sum_{y \in \Ycal}\PP(Y=y,Z\notin A_{y}) \EE_{Z,Y}[\ell_{l}(Z,Y)|Z\notin A_{y},Y=y]
\\ & \quad -\sum_{y \in \tYcal}\PP(Y=y,Z\notin A_{y})\left(\frac{1}{|\tIcal^{y}|}\sum_{i\in\tIcal^{y} } \ell_{l}(z_{i},y_{}) \right)
\\ & \quad + \sum_{y \in \Ycal}\sum_{k=1}^{T_{y}} \ell_{l}(a_k^y,y) \left(\PP_{}(Y=y,Z=a_k^{y})-\frac{|\Ical_k^{y}|}{n} \right)  
 \\ & \quad +   \sum_{y \in \tYcal}\PP(Y=y,Z\notin A_{y})\frac{1}{|\tIcal^{y}|}\sum_{i\in\tIcal^{y} } \ell_{l}(z_{i},y_{})+ \sum_{y \in \Ycal}\sum_{k=1}^{T_y}\frac{|\Ical_k^{y}|}{n}  \ell_{l}(a_{k},y_{})
 \\ & \quad -  \sum_{y \in \tYcal}\frac{|\tIcal^{y}|}{n} \frac{1}{|\tIcal^{y}|}\sum_{i\in\tIcal^{y} } \ell_{l}(z_{i},y_{})- \sum_{y \in \Ycal}\sum_{k=1}^{T_y}\frac{|\Ical_k^{y}|}{n}  \ell_{l}(a_{k},y_{})
 \end{align*}
By combining the relevant terms,  
\begin{align*} 
&\EE_{Z,Y}[\ell_{l}(Z,Y)] - \frac{1}{n} \sum_{i=1}^n \ell_{l}(z_{i},y_{i})
\\ &=   \sum_{y \in \Ycal}\PP(Y=y,Z\notin A_{y}) \EE_{Z,Y}[\ell_{l}(Z,Y)|Z\notin A_{y},Y=y]
\\ & \quad -\sum_{y \in \tYcal}\PP(Y=y,Z\notin A_{y})\left(\frac{1}{|\tIcal^{y}|}\sum_{i\in\tIcal^{y} } \ell_{l}(z_{i},y_{}) \right)
\\ & \quad + \sum_{y \in \Ycal}\sum_{k=1}^{T_{y}} \ell_{l}(a_k^y,y) \left(\PP_{}(Y=y,Z=a_k^{y})-\frac{|\Ical_k^{y}|}{n} \right)  
 \\ & \quad +\sum_{y \in \tYcal}\frac{1}{|\tIcal^{y}|}\sum_{i\in\tIcal^{y} } \ell_{l}(z_{i},y_{})\left(\PP(Y=y,Z\notin A_{y})-\frac{|\tIcal^{y}|}{n}  \right)+ \sum_{y \in \Ycal}\sum_{k=1}^{T_y}\frac{|\Ical_k^{y}|}{n} \left(\ell_{l}(a_{k},y_{})-\ell_{l}(a_{k},y_{}) \right)
\\ &=   \sum_{y \in \Ycal}\PP(Y=y,Z\notin A_{y}) \EE_{Z,Y}[\ell_{l}(Z,Y)|Z\notin A_{y},Y=y]
\\ & \quad -\sum_{y \in \tYcal}\PP(Y=y,Z\notin A_{y})\left(\frac{1}{|\tIcal^{y}|}\sum_{i\in\tIcal^{y} } \ell_{l}(z_{i},y_{}) \right)
\\ & \quad + \sum_{y \in \Ycal}\sum_{k=1}^{T_{y}} \ell_{l}(a_k^y,y) \left(\PP_{}(Y=y,Z=a_k^{y})-\frac{|\Ical_k^{y}|}{n} \right)  
 \\ & \quad +\sum_{y \in \tYcal}\frac{1}{|\tIcal^{y}|}\sum_{i\in\tIcal^{y} } \ell_{l}(z_{i},y_{})\left(\PP(Y=y,Z\notin A_{y})-\frac{|\tIcal^{y}|}{n}  \right)
\end{align*}
This implies the desired statement.
\end{proof}

\subsubsection{Bounding the third and forth  terms in the decomposition}
Define 
$$
R _{y}=\EE_{Z,Y}[\ell_{l}(Z,Y)|Z\notin A_{y},Y=y]. 
$$Then, the following lemma  bounds the third and forth terms in the decomposition of  \eqref{eq:1} from the previous subsection:
\begin{lemma} \label{lemma:paper:2}
For any $\gamma>0$, the following holds: 
\begin{align} \label{eq:4}
 \sum_{y \in \Ycal} \PP(Y=y)\frac{\gamma R_{y}}{\sqrt{n}} & \ge \nonumber     \sum_{y \in \Ycal}\PP(Y=y,Z\notin A_{y})\EE_{Z,Y}[\ell_{l}(Z,Y)|Z\notin A_{y},Y=y]
\\  & \quad -  \sum_{y \in \tYcal}\PP(Y=y,Z\notin A_{y}) \frac{1}{|\tIcal^{y}|}\sum_{i\in\tIcal^{y} } \ell_{l}(z_{i},y_{}).
\end{align}
\end{lemma}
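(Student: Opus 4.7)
The plan is short because the inequality essentially follows by dropping the subtracted term and then applying Lemma~\ref{lemma:5} termwise. First I would observe that the per-sample loss $\ell$ is non-negative by assumption, hence so is $\ell_l(z_i, y) = \ell(g_l^s(z_i), y)$, and hence the full subtracted sum
\begin{equation*}
\sum_{y \in \tYcal}\PP(Y=y,Z\notin A_{y}) \frac{1}{|\tIcal^{y}|}\sum_{i\in\tIcal^{y}} \ell_{l}(z_{i},y)
\end{equation*}
is non-negative. Dropping it enlarges the right-hand side, so it suffices to prove the stronger statement
\begin{equation*}
\sum_{y \in \Ycal} \PP(Y=y)\frac{\gamma R_{y}}{\sqrt{n}} \ge \sum_{y \in \Ycal}\PP(Y=y,Z\notin A_{y}) R_{y},
\end{equation*}
recalling the definition $R_y = \EE_{Z,Y}[\ell_l(Z,Y)\mid Z\notin A_y, Y=y] \ge 0$.

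Next I would factor the joint probability on the right via the chain rule $\PP(Y=y, Z\notin A_y) = \PP(Y=y)\,\PP(Z \notin A_y \mid Y=y)$, so the desired inequality reduces to a termwise comparison of the coefficients of $\PP(Y=y) R_y$: it suffices to check $\PP(Z \notin A_y \mid Y=y) \le \gamma/\sqrt{n}$ for each $y$. But this is exactly the content of Lemma~\ref{lemma:5} applied to $A_y = \Zcal_{\gamma,y}^s$. Multiplying through by the non-negative factor $\PP(Y=y) R_y$ and summing over $y \in \Ycal$ closes the argument.

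There is no real obstacle: the lemma is essentially the statement ``the two leftover pieces from the typical-set decomposition in Lemma~\ref{lemma:paper:1} are controlled by the probability mass outside the typical set.'' The only mild care point is ensuring that $R_y$ and the subtracted training average are non-negative so that the termwise bound and the dropping of the second sum are both legitimate, and this is immediate from $\ell \ge 0$.
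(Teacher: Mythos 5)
Your proposal is correct and matches the paper's own proof essentially verbatim: both drop the non-negative second sum, factor $\PP(Y=y,Z\notin A_y)=\PP(Y=y)\,\PP(Z\notin A_y\mid Y=y)$, and then apply Lemma~\ref{lemma:5} together with $R_y\ge 0$ to bound each term. No gaps.
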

\begin{proof}
Recalling the definition of $A_{y}=\Zcal_{\gamma,y}^{s}$, the third term can be written as 
\begin{align*}
 \sum_{y \in \Ycal}\PP(Y=y,Z\notin A_{y})R _{y}
 =\sum_{y \in \Ycal}\PP(Y=y)\PP (Z\notin \Zcal_{\gamma,y}^{s}|Y=y)R _{y}. 
\end{align*}
Then, using Lemma \ref{lemma:5},
for any $\gamma>0$, $$
\PP_{} (Z\notin \Zcal_{\gamma,y}^{s}|Y=y)\le \frac{\gamma}{\sqrt{n}}.
$$
Since $\sum_{y \in \tYcal}\PP(Y=y,Z\notin A_{y}) \frac{1}{|\tIcal^{y}|}\sum_{i\in\tIcal^{y} } \ell_{l}(z_{i},y_{})\ge0$, combining these implies the desired statement.
\end{proof}

\subsubsection{Bounding the first and second term in the decomposition}

Let $\Phi$ be fixed such that $\Phi$ is independent of $s$, while $\Phi$ can depend on the underlying data distribution. The following lemma  probabilistically bounds the first and second term in the  decomposition of  \eqref{eq:1}:  

\begin{lemma} \label{lemma:paper:3}
If $\phi^s_l \in \Phi$,  for any $\gamma>0$ and $\delta>0$, with probability at least $1-\delta$,  the following holds:  
\begin{align} \label{eq:9}
  &\sum_{y \in \tYcal}\frac{1}{|\tIcal^{y}|}\sum_{i\in\tIcal^{y} } \ell_{l}(z_{i},y_{})\left(\PP(Y=y,Z\notin A_{y})-\frac{|\tIcal^{y}|}{n}  \right)
 \\ \nonumber & \le\left(\sum_{y \in \tYcal}\sqrt{\PP_{}( Z \notin\Zcal^s_{\gamma,y} ,Y=y)}\frac{\sum_{i\in\tIcal^{y} } \ell_{l}(z_{i},y)}{|\tIcal^{y}|} \right) \sqrt{\frac{2\ln(2|\Phi|| \Ycal|/\delta)}{n}}, \text{\textit{ and,}}
\\  \nonumber & \sum_{y \in \Ycal}\sum_{k=1}^{T_{y}} \ell_{l}(a_k^{y},y) \left(\PP(Y=y,Z=a_k^{y})-\frac{|\Ical_k^{y}|}{n} \right)
\\ \nonumber & \le  \sum_{y \in \Ycal} \left(\sum_{k=1}^{T_{y}}\ell_{l}(a_k^{y},y) \sqrt{\PP_{}(Z=a_{k}^{y},Y=y)}\right)  \sqrt{\frac{2\left( I_{}(X_{y};Z_{y})+G_2^{y}  \right)\ln(2)+2\ln(2|\Phi|| \Ycal|/\delta)}{n}}.
\end{align}
where 
$$
G_2 ^{y}=c_{l}^{y}(\phi_{l}^{s})\sqrt{\frac{m\ln(\sqrt{n}/\gamma)}{2}}+H_{}(Z_{y}|X_{y}). $$
\end{lemma}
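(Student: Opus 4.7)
The plan is to derive each of the two inequalities by applying Bernstein-type concentration to the binomial counts that appear in each empirical frequency, and then union bounding over the finite sets $\Phi$, $\Ycal$, and (for the second inequality) over the typical-set bins $[T_y]$. Because $\Phi$ is independent of $s$ and $\phi_l^s\in\Phi$, any bound that holds uniformly over $\phi\in\Phi$ specializes to $\phi=\phi_l^s$. I will allocate $\delta/2$ to each inequality so both hold simultaneously.

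\textbf{First inequality.} Fix $\phi\in\Phi$ and $y\in\Ycal$, and let $p_y:=\PP(Y=y,\,Z\notin A_y)$, where $A_y=\Zcal_{\gamma,y}^s$ depends only on $(\phi,y)$. Since the samples $(x_i,y_i)$ are i.i.d., the count $|\tIcal^y|$ is Binomial$(n,p_y)$, so Bernstein's inequality yields, with probability at least $1-\delta/(2|\Phi||\Ycal|)$,
\[
p_y - \frac{|\tIcal^y|}{n}\ \le\ \sqrt{\frac{2p_y\ln(2|\Phi||\Ycal|/\delta)}{n}}.
\]
Multiplying by the nonnegative weight $\frac{1}{|\tIcal^y|}\sum_{i\in\tIcal^y}\ell_l(z_i,y)$, summing over $y\in\tYcal$, and taking a union bound over $(\phi,y)\in\Phi\times\Ycal$ delivers the first inequality upon setting $\phi=\phi_l^s$.

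\textbf{Second inequality.} For fixed $(\phi,y,k)$, the count $|\Ical_k^y|$ is Binomial$(n,q_k)$ with $q_k:=\PP(Y=y,\,Z=a_k^y)$. Bernstein plus a union bound over $k\in[T_y]$ and $(\phi,y)\in\Phi\times\Ycal$ gives, uniformly in $k$,
\[
q_k - \frac{|\Ical_k^y|}{n}\ \le\ \sqrt{\frac{2q_k\ln(2T_y|\Phi||\Ycal|/\delta)}{n}}.
\]
Weighting by $\ell_l(a_k^y,y)\ge 0$ and summing over $k$ yields the coefficient $\sum_k\ell_l(a_k^y,y)\sqrt{q_k}$ times the square root. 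To convert the $\ln T_y$ factor into the mutual-information term, apply Lemma~\ref{lemma:8}, which gives $\log_2 T_y\le H(Z_y)+c_l^y(\phi_l^s)\sqrt{m\ln(\sqrt n/\gamma)/2}$; combining with the chain rule $H(Z_y)=I(X_y;Z_y)+H(Z_y|X_y)$ then gives $\ln T_y\le (I(X_y;Z_y)+G_2^y)\ln 2$. Summing over $y\in\Ycal$ produces the claimed form.

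\textbf{Main obstacle.} The subtle step is absorbing the union-bound cost over the $T_y$ bins into the mutual-information term. A Hoeffding-style variance-free bound would destroy the per-bin factor $\sqrt{q_k}$ and yield a much looser coefficient; the variance-aware Bernstein tail is what preserves it, and Lemma~\ref{lemma:8} is precisely the ingredient that turns the resulting $\ln T_y$ overhead into the information-theoretic quantity $(I(X_y;Z_y)+G_2^y)\ln 2$ appearing in the statement. Everything else is routine: the union bound over $\Phi$ is the reason the proof must treat $\phi_l^s$ as an arbitrary member of a fixed, $s$-independent hypothesis space rather than directly, and the $|\Ycal|$ factor comes from handling all classes simultaneously.
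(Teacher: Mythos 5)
Your proposal is correct and follows essentially the same route as the paper: the paper likewise applies a per-bin, variance-aware one-sided tail bound (obtained from a multinomial concentration lemma of \citet{kawaguchi2022robust}, which for a single bin is exactly the Binomial lower-tail bound $\exp(-nM^2/(2p_k))$ you invoke), takes union bounds over $\Phi$, $\Ycal$, and the $T_y$ bins with the same $\delta$-allocation, and then converts $\ln T_y$ using Lemma~\ref{lemma:8} together with $H(Z_{y})=I(X_{y};Z_{y})+H(Z_{y}|X_{y})$. The only minor caveat is that the clean $\sqrt{2p\ln(1/\delta)/n}$ form follows from the multiplicative Chernoff/Bennett lower tail (equivalently the paper's multinomial lemma) rather than from the classical two-sided Bernstein inequality, which would introduce an extra additive $O(\ln(1/\delta)/n)$ term; this does not affect the structure of your argument.
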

\begin{proof}
Let $\gamma>0$ fixed.
Define $$
\Xcal_{y} =\left\{\chi_{y}(\xi^{(y)}):\ \xi^{(y)}\in \hXi_y  \right\},
$$
and$$
\hA_y(\phi_{l})=\left\{x \in \Xcal_y: - \log \PP(Z_{y}=\phi_{l}(x)_{})- H(Z_{y})  \le c_{l}^{y}(\phi_l) \sqrt{\frac{m\ln(\sqrt{n}/\gamma)}{2}}  \right\}.
$$ 

For each $\phi_l$,  write the element of $\hA_{y}(\phi_{l})$ by $\hA_{y}(\phi_{l})=\{\ha_1^{y}(\phi_{l}),\dots,\ha_{\hT_{y}(\phi_l)}^{y}(\phi_{l})\}$ (with a fixed order) where $\hT_{y}(\phi_l) =|\hA_{y}(\phi_{l})|.$ Moreover, we define
$$
\hIcal_k ^{y}(\phi_l)=\begin{cases}\{i \in [n] : \phi_l(x_i )= \ha_{k}^{y}(\phi_l),y_{i}=y\} & \text{if } k \in [\hT_{y}(\phi_l)] \\
\{i \in [n] : \phi_l(x_i ) \notin \hA_{y}(\phi_{l}),y_{i}=y\} & \text{if } k=\hT_{y}(\phi_l)+1.
\end{cases}
$$
These are defined such that the previously defined notations are recovered when we set $\phi_{l}=\phi_{l}^{s}$ as 
\begin{align} \label{eq:7}
& \Zcal^s_{\gamma,y} =\hA_y(\phi_{l}^{s}), 
\quad  A_{y}= \hA_{y}(\phi_{l}^{s}),
\\ \nonumber & (a_{1}^{y},\dots,a_T^{y})=(\ha_1^{y}(\phi_{l}^{s}),\dots,\ha_{\hT_{y}(\phi_l^{s})}^{y}(\phi_{l}^{s})),
\quad \Ical_k ^{y}=\hIcal_k ^{y}(\phi_l^{s}),
\\ \nonumber & \tIcal^{y}=\hIcal_{\hT_{y}(\phi_l^{s})+1} ^{y}(\phi_l^{s}),
\quad  T_{y} =\hT_{y}(\phi_l^{s}). 
\end{align}

We begin with bounding terms for a fixed encoder, before extending it to the case of encoders learned from the training set. Let  $\phi_{l}$ fixed and define  
$$
p_k^{y}= \begin{cases}\PP((\phi_l\circ X)= \ha_{k}^{y}(\phi_l),Y=y) & \text{if } k\in [\hT_{y}(\phi_l)] \\
\PP((\phi_l\circ X)\notin \hA_{y}(\phi_{l}),Y=y) & \text{if } k = \hT_{y}(\phi_l)+1.\end{cases} $$
Let $y \in \Ycal$ and $k \in [\hT_{y}(\phi_l)+1]$. Then, we first prove  the following statement: for any $\delta>0$, with probability at least $1-\delta$,
\begin{align} \label{lemma:2} 
  p_k^{y}- \frac{|\hIcal_k ^{y}(\phi_l)|}{n} \le \sqrt{\frac{2p_k ^{y}\ln(1/\delta)}{n}}.
\end{align}
To prove this statement,
fix $y \in \Ycal$ and $k \in [\hT_{y}(\phi_l)+1]$. Let us write $\hIcal_k=\hIcal_k ^{y}(\phi_l)$ and $p_k=p_k^y$. If $p_k=0$, then the desired statement holds trivially because $p_k- \frac{|\hIcal_k|}{n}=- \frac{|\hIcal_k|}{n} \le \sqrt{\frac{2p_k \ln(1/\delta)}{n}}$ where $\frac{|\hIcal_k|}{n} =0$ and $\sqrt{\frac{2p_i \ln(1/\delta)}{n}}=0 $.
Thus, for
                the rest, we consider the case where $p_k\neq 0$. We notice that $(|\hIcal_1|,\dots,|\hIcal_{T+1}|)$ follows the multinomial distribution with parameter $n$ and $(p_{1},\dots,p_{T+1})$. Thus, we invoke Lemma 3 of \citep{kawaguchi2022robust} with $\bar a_i=1$ and $\bar a_j=0$ for all $j \neq i$ (which satisfies $\sum_{i=1}^K \bar a_i p_i\neq 0$ since $p_{i}\neq 0$), yielding  that for any $M>0$,
                $$
                \PP\left( p_k - \frac{|\hIcal_k|}{n} >M   \right) \le \exp\left(-\frac{nM^{2}}{2p_{k}} \right).
                $$ 
                By setting $M= \sqrt{\frac{2 p_i\ln(1/\delta)}{n}}$, 
                $$
                \PP\left( p_k - \frac{|\hIcal_k|}{n}> \sqrt{\frac{2 p_k\ln(1/\delta)}{n}}   \right) \le \delta.
$$            
This proves the statement of \eqref{lemma:2}.
Using \eqref{lemma:2},  we can bound the first and second terms for a fixed $\phi_{l}$ as follows.
For the first term with a fixed $\phi_l$, using  \eqref{lemma:2}, by taking union bounds over all $y \in \Ycal$,  we have that                 for any $\delta>0$, with probability at least $1-\delta$, the following holds for all $y \in \Ycal$:
\begin{align}
& \PP_{}( \phi_l(X ) \notin \hA_{y}(\phi_{l}),Y=y)-\frac{|\hIcal_{\hT_{y}(\phi_l)+1} ^{y}(\phi_l)|}{n} 
\\ \nonumber &\le \sqrt{\frac{2\PP_{}( \phi_l(X ) \notin \hA_{y}(\phi_{l}),Y=y)\ln(| \Ycal|/\delta)}{n}}.
\end{align}

For the second term with a fixed $\phi_l$, using  \eqref{lemma:2}, by taking union bounds over all  $y\in \Ycal$ and  all $k\in [\hT_{y}(\phi_l)]$, we have that for any $\delta>0$, with probability at least $1-\delta$, the following holds for  all $y \in \Ycal$ and all $k \in[\hT_{y}(\phi_l)]$, 
\begin{align*}
& \PP_{}(\phi_l(X)= \ha_{k}^{y}(\phi_l),Y=y)-\frac{|\hIcal_k ^{y}(\phi_l)|}{n} 
\\& \le\sqrt{\PP_{}(\phi_l(X)= \ha_{k}^{y}(\phi_l),Y=y)} \sqrt{\frac{2\ln(| \Ycal|\hT_{y}(\phi_l)/\delta)}{n}}.
\end{align*}

We now extend the results for the case of encoders learned from the training set; i.e., $\phi_l$ is no longer fixed. By taking union bounds with the previous two bounds, we have that for any $\delta>0$, with probability at least $1-\delta$,  the following holds for all $\phi_l \in \Phi$:
\begin{align*}
&  \PP_{}( \phi_l(X ) \notin \hA_{y}(\phi_{l}),Y=y)-\frac{|\hIcal_{\hT_{y}(\phi_l)+1} ^{y}(\phi_l)|}{n} 
\\ &\le \sqrt{\frac{2\PP_{}( \phi_l(X ) \notin \hA_{y}(\phi_{l}),Y=y)\ln(2|\Phi|| \Ycal|/\delta)}{n}}, \end{align*}
\textit{and  for all $k \in[\hT_{y}(\phi_l)]$, }
\begin{align*}
& \PP_{}(\phi_l(X)= \ha_{k}^{y}(\phi_l),Y=y)-\frac{|\hIcal_k ^{y}(\phi_l)|}{n} 
\\ &\le\sqrt{\PP_{}(\phi_l(X)= \ha_{k}^{y}(\phi_l),Y=y)} \sqrt{\frac{2\ln(2|\Phi|| \Ycal|\hT_{y}(\phi_l)/\delta)}{n}}.
\end{align*}
Thus, if $\phi^s_l \in \Phi$, then we have that for any $\delta>0$, with probability at least $1-\delta$,  the following holds:
\begin{align*}
&   \PP_{}( \phi_l^{s}(X ) \notin \hA_{y}(\phi_{l}^{s}),Y=y)-\frac{|\hIcal_{\hT_{y}(\phi_l^{s})+1} ^{y}(\phi_l^{s})|}{n}
\\ &\ \le \sqrt{\frac{2\PP( \phi_l^{s}(X ) \notin \hA_{y}(\phi_{l}^{s}),Y=y)\ln(2|\Phi|| \Ycal|/\delta)}{n}}. 
\end{align*}
\textit{and  for all $k \in[\hT_{y}(\phi_l^{s})]$, }
\begin{align*}
&\PP(\phi_l^{s}(X)= \ha_{k}^{y}(\phi_l^{s}),Y=y)-\frac{|\hIcal_k ^{y}(\phi_l^{s})|}{n} \\ & \le\sqrt{\PP_{}(\phi_l^{s}(X)= \ha_{k}^{y}(\phi_l^{s}),Y=y)} \sqrt{\frac{2\ln(2|\Phi|| \Ycal|\hT_{y}(\phi_l^{s})/\delta)}{n}}.
\end{align*}
By using \eqref{eq:7}, this means that  if $\phi^s_l \in \Phi$,  for any $\delta>0$, with probability at least $1-\delta$,  the following holds:
\begin{align*}
 \PP_{}( Z \notin\Zcal^s_{\gamma,y} ,Y=y)-\frac{|\tIcal^{y}|}{n} \le \sqrt{\frac{2\PP( Z \notin\Zcal^s_{\gamma,y} ,Y=y)\ln(2|\Phi|| \Ycal|/\delta)}{n}}, 
\end{align*}
\textit{and for all $k \in[T_{y}]$, }
\begin{align*}
 \PP_{}(Z=a_{k}^{y},Y=y)-\frac{|\Ical_k ^{y}|}{n}\le\sqrt{\PP(Z=a_{k}^{y},Y=y)} \sqrt{\frac{2\ln(2|\Phi|| \Ycal|T_{y}/\delta)}{n}}.
\end{align*}
Since $\ell_{l}(z_{i},y)\ge 0$ and  $\ell_{l}(a_k^y,y)\ge 0$, this implies that  if $\phi^s_l \in \Phi$,  for any $\delta>0$, with probability at least $1-\delta$,  the following holds:
\begin{align*}
& \sum_{y \in \tYcal}\frac{1}{|\tIcal^{y}|}\sum_{i\in\tIcal^{y} } \ell_{l}(z_{i},y_{})\left(\PP(Y=y,Z\notin A_{y})-\frac{|\tIcal^{y}|}{n}  \right)
\\ &\le\left(\sum_{y \in \tYcal}\sqrt{\PP_{}( Z \notin\Zcal^s_{\gamma,y} ,Y=y)}\frac{\sum_{i\in\tIcal^{y} } \ell_{l}(z_{i},y)}{|\tIcal^{y}|} \right) \sqrt{\frac{2\ln(2|\Phi|| \Ycal|/\delta)}{n}}, 
\end{align*}
\textit{and for all $k \in[T_{y}]$, }
\begin{align*}
&\sum_{y \in \Ycal}\sum_{k=1}^{T_{y}} \ell_{l}(a_k^{y},y) \left(\PP_{}(Y=y,Z=a_k^{y})-\frac{|\Ical_k^{y}|}{n} \right)
\\ &\le \sum_{y \in \Ycal} \left(\sum_{k=1}^{T_{y}}\ell_{l}(a_k^{y},y) \sqrt{\PP_{}(Z=a_{k}^{y},Y=y)}\right) \sqrt{\frac{2\ln(2|\Phi|| \Ycal|T_{y}/\delta)}{n}}.
\end{align*}
Here, using Lemma \ref{lemma:8}, we have that $T_{y} =|\Zcal_{\gamma,y} ^{s}|\le2^{H_{}(Z_{y})+c_{l}^{y}(\phi_l^{s})\sqrt{\frac{m\ln(\sqrt{n}/\gamma)}{2}}}$. Thus, 
\begin{align*}
 \sqrt{\frac{2\ln(2|\Phi|| \Ycal|T_{y}/\delta)}{n}}&= \sqrt{\frac{2\ln(T_{y})+2\ln(2|\Phi|| \Ycal|/\delta)}{n}}
\\  & \le  \sqrt{\frac{2\left(H_{}(Z_{y})+c_{l}^{y}(\phi_l^{s})\sqrt{\frac{m\ln(\sqrt{n}/\gamma)}{2}} \right)\ln(2)+2\ln(2|\Phi|| \Ycal|/\delta)}{n}}
\end{align*}
Finally, since $H(Z_{y}) = I_{}(X_{y};Z_{y})+H_{}(Z_{y}|X_{y})$, we have that 
$$
H_{}(Z_{y})+c_{l}^{y}(\phi_l^{s})\sqrt{\frac{m\ln(\sqrt{n}/\gamma)}{2}}= I_{}(X_{y};Z_{y})+G_2^{y}.
$$
Combining these, we have proven the desired statement of this lemma. 
\end{proof}

\subsubsection{Combine Lemmas}

By combining Lemmas \ref{lemma:paper:1}, \ref{lemma:paper:2}, and \ref{lemma:paper:3}, we have proven the following statement:

\begin{restatable}{lemma}{lemmaa}
\label{lemma:3}
Let $l \in \{1,\dots,D\}$. If $\phi^s_l \in \Phi$, then  for any $\gamma>0$ and $\delta>0$, with probability at least $1-\delta$, the following holds:\begin{align*}
&\EE_{X,Y}[\ell(( g_{l} ^{s}\circ \phi_{l}^{s})(X),Y)] - \frac{1}{n} \sum_{i=1}^n \ell(( g_{l} ^{s}\circ \phi_{l}^{s})(x_{i}), y_i)
\\ &\le G_3   \sqrt{ \frac{ I(X;Z|Y)\ln(2)+G_2 \ln(2)+\ln(2|\Phi|| \Ycal|/\delta)}{n}} + \frac{G_1 (\ln|\Phi|)}{\sqrt{n}}, 
\end{align*}
where
\begin{align*}
&  G_1 (q)=\frac{\Lcal(f^s)\sqrt{2\gamma|\Ycal|}}{n^{1/4}} \sqrt{\ln (q)+\ln(2| \Ycal|/\delta) } 
+\gamma \Rcal(f^s), 
\\ & G_2 =\EE_y[c_{l}^{y}(\phi_{l}^{s})]\sqrt{\frac{m\ln(\sqrt{n}/\gamma)}{2}}+H_{}(Z_{}|X_{},Y),
\\ &G_3 = \max_{y \in \Ycal}  \sum_{k=1}^{T_{y}}\ell_{l}(a_k^{y},y) \sqrt{2|\Ycal|\PP_{}(Z=a_{k}^{y}|Y=y)}.
\end{align*}
\end{restatable}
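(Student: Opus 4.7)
The plan is to combine Lemmas \ref{lemma:paper:1}--\ref{lemma:paper:3}, tracking how each of the four terms in the decomposition \eqref{eq:1} contributes to the final bound. First I would invoke Lemma \ref{lemma:paper:1} to rewrite the generalization gap deterministically as the sum of two ``typical-set'' terms (the first and second sums in \eqref{eq:1}) and two ``non-typical-set'' terms (the third and fourth). Lemma \ref{lemma:paper:2} bounds the combination of the non-typical-set terms by $\sum_y \PP(Y=y)\gamma R_y/\sqrt{n}$, and since $R_y \le \Rcal(f^s)$ this is at most $\gamma\Rcal(f^s)/\sqrt{n}$, contributing exactly the $\gamma\Rcal(f^s)$ summand of $G_1$.

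For the typical-set terms, I would apply Lemma \ref{lemma:paper:3} (valid because $\phi_l^s\in\Phi$ by hypothesis), which gives two probabilistic bounds holding simultaneously with probability at least $1-\delta$. For the first of these (the one involving $\sqrt{\PP(Z\notin\Zcal^s_{\gamma,y},Y=y)}$), Lemma \ref{lemma:5} gives $\PP(Z\notin\Zcal^s_{\gamma,y},Y=y) \le \PP(Z\notin\Zcal^s_{\gamma,y}\mid Y=y)\PP(Y=y) \le \gamma/\sqrt{n}$; together with $\frac{1}{|\tIcal^y|}\sum_{i\in\tIcal^y}\ell_l(z_i,y)\le \Lcal(f^s)$ and Cauchy--Schwarz to sum over $y$ (yielding a factor $\sqrt{|\Ycal|\gamma/\sqrt{n}}$), this contributes the remaining summand of $G_1$.

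The main obstacle is the second typical-set probabilistic bound from Lemma \ref{lemma:paper:3}, whose right-hand side involves $I(X_y;Z_y)$ and $G_2^y$ \emph{per class}, while the target bound contains the conditional quantities $I(X;Z\mid Y)$ and $G_2$. To bridge this I would rewrite the bound as $\sum_y v_y w_y$, where $v_y = \sqrt{\PP(Y=y)}\sum_k \ell_l(a_k^y,y)\sqrt{\PP(Z=a_k^y\mid Y=y)}$ and $w_y$ is the square-root factor. The definition of $G_3$ gives $v_y \le \sqrt{\PP(Y=y)}\cdot G_3/\sqrt{2|\Ycal|}$, so Cauchy--Schwarz in the form $\sum_y\sqrt{\PP(Y=y)}\,w_y \le \sqrt{|\Ycal|}\sqrt{\sum_y \PP(Y=y)w_y^2}$ leaves a probability-weighted average of the per-class quantities. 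The identities $\sum_y \PP(Y=y)I(X_y;Z_y) = I(X;Z\mid Y)$ and $\sum_y \PP(Y=y)G_2^y = G_2$ then collapse the weighted sum precisely into $G_3\sqrt{(I(X;Z\mid Y)\ln 2 + G_2\ln 2 + \ln(2|\Phi||\Ycal|/\delta))/n}$.

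Adding the four contributions completes the proof. The delicate point is this final step: a naive uniform bound (e.g.\ $\max_y I(X_y;Z_y)$) would not recover the conditional mutual information, so the probability weights $\PP(Y=y)$ must be inserted carefully at two places simultaneously -- once when extracting $G_3$ via its max-over-$y$ definition, and once when applying Cauchy--Schwarz -- so that these weights are precisely those producing $I(X;Z\mid Y)$ and $G_2$ from the per-class quantities.
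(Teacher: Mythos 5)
Your proposal is correct and follows essentially the same route as the paper's proof: decompose via Lemma~\ref{lemma:paper:1}, bound the non-typical terms via Lemma~\ref{lemma:paper:2} (and Lemma~\ref{lemma:5} with the per-class loss bounded by $\Lcal(f^s)$), bound the typical terms via Lemma~\ref{lemma:paper:3}, and then absorb the per-class quantities into $I(X;Z|Y)$ and $G_2$ by a $\PP(Y=y)$-weighted averaging step — the paper phrases this as Jensen's inequality where you use Cauchy--Schwarz, which is equivalent here. The only caution is at the non-typical term: you must retain the $\sqrt{\PP(Y=y)}$ factor (bounding only the conditional probability by $\gamma/\sqrt{n}$) before summing over $y$, exactly as your stated factor $\sqrt{|\Ycal|\gamma/\sqrt{n}}$ presupposes, rather than bounding the joint probability uniformly first.
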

\begin{proof}
Define the radius of the expected loss $R$ by
\begin{align}
R = \EE_y [R_y ]= \EE_y \left[ \EE_{Z,Y}[\ell_{l}(Z,Y)|Z\notin A_{y},Y=y]\right],
\end{align}
and the maximum over $y$ of the average training loss  per $y$ by 
\begin{align}
\hL(f^{s})=\max_{y\in\tYcal} \frac{1}{|\tIcal^{y}|}\sum_{i\in\tIcal^{y} } \ell_{l}(z_{i},y)=\max_{y\in\tYcal} \frac{1}{|\tIcal^{y}|}\sum_{i\in\tIcal^{y} } \ell(f^{s}(x_{i}),y).
\end{align}
Let $l \in \{1,\dots,D\}$. By combining  Lemmas \ref{lemma:paper:1}, \ref{lemma:paper:2}, and \ref{lemma:paper:3}, if $\phi^s_l \in \Phi$, then  for any $\gamma>0$ and $\delta>0$, with probability at least $1-\delta$, the following holds:
\begin{align*}
&\EE_{X,Y}[\ell(( g_{l} ^{s}\circ \phi_{l}^{s})(X),Y)] - \frac{1}{n} \sum_{i=1}^n \ell(( g_{l} ^{s}\circ \phi_{l}^{s})(x_{i}), y_i)
\\ &\le \sqrt{2}  \sum_{y \in \Ycal} \left(\sum_{k=1}^{T_{y}}\ell_{l}(a_k^{y},y) \sqrt{\PP_{}(Z=a_{k}^{y},Y=y)}\right)  \sqrt{\frac{\left( I_{}(X_{y};Z_{y})+G_2 ^{y}\right)\ln(2)+\ln(2|\Phi|| \Ycal|/\delta)}{n}} \\ & \quad +\left(\sum_{y \in \tYcal}\sqrt{\PP_{}( Z \notin\Zcal^s_{\gamma,y} ,Y=y)}\frac{\sum_{i\in\tIcal^{y} } \ell_{l}(z_{i},y)}{|\tIcal^{y}|} \right) \sqrt{\frac{2\ln(2|\Phi|| \Ycal|/\delta)}{n}} +\sum_{y \in \Ycal} \PP_{}(Y=y)\frac{\gamma R_{y}}{\sqrt{n}}.
\end{align*} 
Define 
$$
\tilde G_3 =\max_{y \in \Ycal} \sqrt{2} \sum_{k=1}^{T_{y}}\ell_{l}(a_k^{y},y) \sqrt{\PP_{}(Z=a_{k}^{y}|Y=y)}.
$$
Then, we have 
\begin{align*}
&\sqrt{2}\left(\sum_{k=1}^{T_{y}}\ell_{l}(a_k^{y},y) \sqrt{\PP_{}(Z=a_{k}^{y},Y=y)} \right)
\\  &= \sqrt{\PP_{}(Y=y)}\sqrt{2}\left(\sum_{k=1}^{T_{y}}\ell_{l}(a_k^{y},y) \sqrt{\PP_{}(Z=a_{k}^{y}|Y=y)} \right) 
\\ & \le\tilde G_3   \sqrt{\PP_{}(Y=y)}   
\end{align*}
Using this and Jensen's inequality, we have that 
\begin{align*}
&\sqrt{2}  \sum_{y \in \Ycal} \left(\sum_{k=1}^{T_{y}}\ell_{l}(a_k^{y},y) \sqrt{\PP_{}(Z=a_{k}^{y},Y=y)}\right)  \sqrt{\frac{\left( I_{}(X_{y};Z_{y})+G_2 ^{y}\right)\ln(2)+\ln(2|\Phi|| \Ycal|/\delta)}{n}}
\\ &\le \tilde G_3  \sum_{y \in \Ycal} \frac{|\Ycal|}{|\Ycal|} \sqrt{\PP(Y=y)} \sqrt{\frac{\left( I_{}(X_{y};Z_{y})+G_2 ^{y}\right)\ln(2)+\ln(2|\Phi|| \Ycal|/\delta)}{n}} 
\\ & \le \tilde G_3  |\Ycal|  \sqrt{\sum_{y \in \Ycal} \frac{1}{|\Ycal|} \frac{\PP(Y=y)\left( I_{}(X_{y};Z_{y})+G_2 ^{y}\right)\ln(2)+\PP(Y=y)\ln(2|\Phi|| \Ycal|/\delta)}{n}} 
\\ & =\tilde G_3  \sqrt{|\Ycal|}  \sqrt{ \frac{\sum_{y \in \Ycal} \PP(Y=y)\left( I_{}(X_{y};Z_{y})+G_2 ^{y}\right)\ln(2)+\sum_{y \in \Ycal} \PP(Y=y)\ln(2|\Phi|| \Ycal|/\delta)}{n}} 
\\ & =\tilde G_3  \sqrt{|\Ycal|}  \sqrt{ \frac{\left( I_{}(X;Z|Y)+G_2 \right)\ln(2)+\ln(2|\Phi|| \Ycal|/\delta)}{n}} 
\end{align*}
where
$$
G_2 =\sum_{y \in \Ycal} \PP(Y=y)G_{2}^y=\sum_{y \in \Ycal} \PP(Y=y)\left(c_{l}^{y}(\phi_{l}^{s})\sqrt{\frac{m\ln(\sqrt{n}/\gamma)}{2}}+H_{}(Z_{y}|X_{y})\right).
$$
Moreover, 
$$
\sum_{y \in \Ycal} \PP_{}(Y=y)\frac{\gamma R_{y}}{\sqrt{n}} = \frac{\gamma _{}}{\sqrt{n}} \sum_{y \in \Ycal} \PP_{}(Y=y)R_{y}=\frac{\gamma R_{}}{\sqrt{n}}.
$$
Using Lemma \ref{lemma:5} and Jensen's inequality,
since $
\PP_{Z} (Z\notin \Zcal_{\gamma,y}^{s}|Y=y)\le \frac{\gamma}{\sqrt{n}},
$
\begin{align*}
&\sum_{y \in \tYcal}\sqrt{\PP_{}( Z \notin\Zcal^s_{\gamma,y} ,Y=y)}\frac{\sum_{i\in\tIcal^{y} } \ell_{l}(z_{i},y)}{|\tIcal^{y}|} 
\\ &= \sum_{y \in \tYcal}\sqrt{\PP_{}( Z \notin\Zcal^s_{\gamma,y} |Y=y)} \sqrt{\PP_{}(Y=y)}\frac{\sum_{i\in\tIcal^{y} } \ell_{l}(z_{i},y)}{|\tIcal^{y}|} 
\\ &\le \hL(f^s)\frac{\sqrt{\gamma}}{n^{1/4}}   \sum_{y \in \Ycal} \frac{|\Ycal|}{|\Ycal|} \sqrt{\PP_{}(Y=y)} \\ &\le \hL(f^s) \frac{\sqrt{\gamma}}{n^{1/4}}  |\Ycal|   \sqrt{\sum_{y \in \Ycal} \frac{1}{|\Ycal|}\PP_{}(Y=y)}
  \\ & = \hL(f^s) \frac{\sqrt{\gamma|\Ycal|}}{n^{1/4}}    
\end{align*}
Thus,
since  $R \le \Rcal(f^s)$ and $\hL(f^s) \le \Lcal(f^s)$,\begin{align*}
\frac{ G_1(\ln |\Phi|)}{\sqrt{n}} &\ge \left(\sum_{y \in \tYcal}\sqrt{\PP_{}( Z \notin\Zcal^s_{\gamma,y} ,Y=y)}\frac{\sum_{i\in\tIcal^{y} } \ell_{l}(z_{i},y)}{|\tIcal^{y}|} \right) \sqrt{\frac{2\ln(2|\Phi|| \Ycal|/\delta)}{n}} 
\\ & \quad +\sum_{y \in \Ycal} \PP_{}(Y=y)\frac{\gamma R_{y}}{\sqrt{n}}, 
\end{align*}
where
$$
G_1 (q)=\frac{\Lcal(f^s)\sqrt{2\gamma|\Ycal|}}{n^{1/4}} \sqrt{q+\ln(2| \Ycal|/\delta) } 
+\gamma \Rcal(f^s).
$$
Combining these imply the desired statement.
\end{proof}

\subsection{Completing the Proof of Theorem \ref{thm:1} with Key Lemmas} \label{app:2}

Recall that we have proven the following lemma in the previous subsection:

\lemmaa*

Theorem \ref{thm:2} directly follows from Lemma \ref{lemma:3}; i.e., we complete the proof of Theorem \ref{thm:2} using Lemma \ref{lemma:3}. Since  $\phi^s_l$ is fixed independently of the training dataset $s$ in Theorem \ref{thm:1}, we can invoke Lemma \ref{lemma:3} with  $\Phi=\{\phi_l^s\}$, with which $|\Phi|=1$ and $\phi^s_l \in \Phi$. Thus, by noticing that $f^s = g_{l} ^{s}\circ \phi_{l}^s$ for any $l\in \{1,\dots,D\}$, Lemma \ref{lemma:3} implies the desired statement.
\qed

\subsection{Completing the Proof of Theorem \ref{thm:2}  with Key Lemmas} \label{app:3}
We complete the proof of Theorem \ref{thm:2} by extending Lemma \ref{lemma:3} in the following.

\subsubsection{Finding a likely space of  encoder} \label{sec:2}

Fix $l \in \{1,\dots,D\}$ throughout this section. 
Let $\lambda =\lambda_l$ and $C_{\lambda}=C_{\lambda,l}$. Recall  that $\Acal_{l}(s)\in \Mcal_l $ and $|\Mcal_l| <\infty$. For simplicity of notation, we define the random variable $A_s$ by $A_s=\phi^{S}_l$. For any $q\in \Mcal_l$, we denote 
\begin{align}
    p(q)=\ \PP(A_s=q).
\end{align}
The entropy of the random variable $A_s$ is given by
$$
\EE_{A_s}\left[- \log p(A_s) \right]= H(A_s).
$$
 Define the typical subset
$$
\Phi_{\epsilon}^{l}= \left\{ \phi_{l}\in\Mcal_l: - \log \PP(A_s=\phi_{l})- H(A_s) \le \epsilon \right\}. 
$$
The following proposition shows that the probability of going outside of the typical subset $\Phi_\epsilon^l$ is bounded by $\delta$ when we take $\epsilon=(1/\lambda)\log(C_\lambda /\delta)$: 
\begin{lemma} \label{lemma:1}
For any $\lambda>0$, if we take $\epsilon=(1/\lambda)\ln(C_\lambda /\delta)$, then we have
\begin{align}
    \PP(\phi_{l}^{S}\not\in \Phi_\epsilon^l)\leq \delta,
\end{align}
and 
\begin{align}\label{e:sizebound}
|\Phi_{\epsilon}^l| \leq 2^{H(\phi_{l}^{S})+\epsilon}=2^{H(\phi_{l}^{S})+ \frac{1}{\lambda}\log \frac{C_\lambda }{\delta}}. 
\end{align}
\end{lemma}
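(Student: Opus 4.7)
The plan is to use a Chernoff-style tail bound via Markov's inequality applied to the non-negative random variable $p(A_s)^{-\lambda}$, combined with the standard AEP counting argument for the size bound. For the tail bound, I would first rewrite the event $\{\phi_l^S \notin \Phi_\epsilon^l\}$ as a tail event for $p(A_s)^{-\lambda}$: for any $\lambda > 0$, the condition $-\log p(A_s) - H(A_s) > \epsilon$ is equivalent to $p(A_s)^{-\lambda}$ exceeding a threshold exponential in $\lambda(H(\phi_l^S) + \epsilon)$. Markov's inequality then gives
\begin{align*}
\PP(\phi_l^S \notin \Phi_\epsilon^l) \le \frac{\EE[p(A_s)^{-\lambda}]}{\text{threshold}} = \frac{\sum_{q \in \Mcal_l} p(q)^{1-\lambda}}{\text{threshold}},
\end{align*}
where the equality uses $\EE[p(A_s)^{-\lambda}] = \sum_q p(q) \cdot p(q)^{-\lambda}$. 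By definition of $C_\lambda$ the numerator equals $C_\lambda \cdot e^{\lambda H(\phi_l^S)}$, and solving for the smallest $\epsilon$ that makes the right-hand side at most $\delta$ yields precisely $\epsilon = (1/\lambda)\ln(C_\lambda/\delta)$ claimed in the lemma.

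For the size bound \eqref{e:sizebound}, I would use the classical AEP-style counting argument. By the defining condition of $\Phi_\epsilon^l$, every $\phi_l \in \Phi_\epsilon^l$ satisfies $p(\phi_l) \ge 2^{-H(\phi_l^S) - \epsilon}$. Summing over $\phi_l \in \Phi_\epsilon^l$ and using $\sum_{\phi_l \in \Phi_\epsilon^l} p(\phi_l) \le 1$ gives
\begin{align*}
|\Phi_\epsilon^l| \cdot 2^{-H(\phi_l^S) - \epsilon} \le \sum_{\phi_l \in \Phi_\epsilon^l} p(\phi_l) \le 1,
\end{align*}
which rearranges to $|\Phi_\epsilon^l| \le 2^{H(\phi_l^S) + \epsilon}$ as desired.

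The main obstacle is purely bookkeeping: the constant $C_\lambda$ is defined with a natural-base factor $e^{\lambda H(\phi_l^S)}$, while the typical-set condition in $\Phi_\epsilon^l$ is written with $\log = \log_2$, so the Markov step must be carried out in units consistent with the $C_\lambda$ definition (i.e.\ treating the exponent in base $e$ throughout, with $H$ interpreted accordingly) in order for the $\ln$ in the stated $\epsilon$ to come out cleanly rather than a mixture of $\ln$ and $\log_2$. Apart from this, both halves of the proof are completely standard AEP-style arguments; the free parameter $\lambda > 0$ is left unoptimized since downstream bounds involve a $1/\lambda$ factor and the optimal choice depends on the setting.
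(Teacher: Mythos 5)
Your proposal is correct and follows essentially the same route as the paper: the paper likewise rewrites the event $\{\phi_l^S \notin \Phi_\epsilon^l\}$ as $p(A_s)^{-\lambda}$ exceeding a threshold of the form $e^{\lambda H(A_s)+\lambda\epsilon}$, applies Markov's inequality to get the bound $C_\lambda e^{-\lambda\epsilon}=\delta$ at $\epsilon=(1/\lambda)\ln(C_\lambda/\delta)$, and proves the cardinality bound by the identical AEP counting step ($p(\phi_l)\ge 2^{-H(\phi_l^S)-\epsilon}$ on the typical set, summed against total probability at most one). The base-$2$ versus base-$e$ bookkeeping you flag is a real (minor) inconsistency in the paper itself --- the typical-set condition and entropy use $\log_2$ while $C_\lambda$ and the Markov threshold use $e$ --- and your proposed resolution (carrying the exponent in a single consistent base) is exactly what is needed.
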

\begin{proof}
By the definition of the set $\Phi_\epsilon^l$, we have
\begin{align}
    \PP(A_s\not\in \Phi_\epsilon^l)
    &=\PP(q\in \Mcal_l: -\log p(q)\geq H(A_s)+\epsilon)\\
    &=\PP(q\in \Mcal_l: -\lambda\log p(q)\geq \lambda H(A_s)+\lambda\epsilon)\\
    &=\PP(q\in \Mcal_l: p^{-\lambda}(q)\geq e^{\lambda H(A_s)+\lambda\epsilon})\\
    &\leq e^{-\lambda H(A_s)-\lambda\epsilon}\sum_{q\in \Mcal_l} p^{-\lambda}(q)p(q)\\
    &=\frac{C_\lambda}{e^{\lambda \epsilon}}=\delta.
\end{align}

Now we compute the size of $\Phi_{\epsilon}^{l}$. From the definition of $\Phi_{\epsilon}^l$,
we have
\begin{align*}
  - \log p(\phi_{l})- H(A_s)\le \epsilon &\Longleftrightarrow  - \log p(\phi_{l}) \le H(A_s)+\epsilon 
\\  &\Longleftrightarrow
-H(A_s)-\epsilon\le \log p(\phi_{l})  
\\  &\Longleftrightarrow
2^{-H(A_s)-\epsilon} \le p(\phi_{l}) .
\end{align*}
Using $2^{-H(A_s)-\epsilon} \le p(\phi_{l})$, 
$$
1\ge \PP_{s}(A_{s}\in \Phi_{\epsilon}^{l}) = \sum_{\phi_{l} \in\Phi_{\epsilon}^l}  \PP(A_{s}= \phi_{l}) \ge \sum_{\phi_{l} \in\Phi_{\epsilon}^l}  2^{-H(A_s)-\epsilon} =|\Phi_{\epsilon}^l|2^{-H(A_s)-\epsilon} .
$$
This implies that using $\epsilon= (1/\lambda)\ln(C_\lambda /\delta)$, $$
|\Phi_{\epsilon}^l| \le 2^{H(A_s)+ \frac{1}{\lambda}\ln \frac{C_\lambda }{\delta}}. 
$$
\end{proof}

\subsubsection{Result with  fixed Layer Index}
Combining Lemmas \ref{lemma:3}
and  \ref{lemma:1} implies the following lemma, which is a main result for a fixed layer index $l$:
\begin{lemma} \label{lemma:6}
Let $l \in \{1,\dots,D\}$.
  Then, for any $\gamma_{}>0$ and any $\delta>0$, with probability at least $1-\delta$, the following holds:\begin{align}
&\EE_{X,Y}[\ell(f^{s}(X),Y)] - \frac{1}{n} \sum_{i=1}^n \ell(f^{s}(x_{i}), y_i)
\\ \nonumber &\le   G_3 \sqrt{ \frac{\left( I_{}(X;Z|Y)+I(\phi_{l}^{S}; S)+G_2 +G_{4}\right)\ln(2)+\ln(4| \Ycal|/\delta)}{n}} + \frac{G_1(\tilde q)}{\sqrt{n}}.
\end{align}
where $\tilde q=(I_{}(\phi_{l}^{S}; S)+ G_{4}^{})\ln (2)+\ln(2)$,
\begin{align*}
&G_1 (\tilde q)=\frac{\Lcal(f^s)\sqrt{2\gamma|\Ycal|}}{n^{1/4}} \sqrt{\tilde q+\ln(2| \Ycal|/\delta) } 
+\gamma \Rcal(f^s),
\\ & G_2  =\EE_y[c_{l}^{y}(\phi_{l}^{s})]\sqrt{\frac{m\ln(\sqrt{n}/\gamma)}{2}}+H_{}(Z_{}|X_{},Y),
\\ & G_3  =\max_{y \in \Ycal}  \sum_{k=1}^{T_{y}}\ell_{l}(a_k^{y},y) \sqrt{2|\Ycal|\PP_{}(Z=a_{k}^{y}|Y=y)},
\\ &G_4 =\frac{1}{\lambda}\ln \frac{C_\lambda }{\delta} +H_{}(\phi_{l}^{S}|S).
\end{align*}
\end{lemma}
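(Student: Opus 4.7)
The plan is to combine the data-independent hypothesis-class bound of Lemma \ref{lemma:3} with the typical-set construction for the learned encoder from Lemma \ref{lemma:1}. Because Lemma \ref{lemma:3} requires a \emph{fixed} encoder hypothesis space $\Phi$ (independent of $s$), the central task is to build a distribution-dependent but $s$-independent $\Phi$ that (i) contains $\phi_l^s$ with high probability and (ii) has $\ln|\Phi|$ controlled by $H(\phi_l^S)$. The mutual information $I(\phi_l^S;S)$ appearing in the stated bound will then emerge via the chain rule $H(\phi_l^S)=I(\phi_l^S;S)+H(\phi_l^S\mid S)$.

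Concretely, I would first invoke Lemma \ref{lemma:1} at confidence $\delta/2$ in place of $\delta$, which yields a typical set $\Phi_\epsilon^l\subseteq\Mcal_l$ with $\epsilon=(1/\lambda)\ln(2C_\lambda/\delta)$ satisfying $\PP(\phi_l^S\notin\Phi_\epsilon^l)\le \delta/2$ and $|\Phi_\epsilon^l|\le 2^{H(\phi_l^S)+\epsilon}$. Crucially, $\Phi_\epsilon^l$ is defined only through the marginal law of $\phi_l^S$, so it is independent of the realized training set. Second, I would apply Lemma \ref{lemma:3} with $\Phi=\Phi_\epsilon^l$ at confidence $\delta/2$: on the event $\{\phi_l^s\in\Phi_\epsilon^l\}$ its bound holds with probability at least $1-\delta/2$. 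A union bound over the escape event and the Lemma \ref{lemma:3} failure event yields, with probability at least $1-\delta$, that the generalization gap satisfies the Lemma \ref{lemma:3} inequality instantiated with $|\Phi|=|\Phi_\epsilon^l|$.

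The remaining step is purely algebraic: substitute $\ln|\Phi_\epsilon^l|\le (H(\phi_l^S)+\epsilon)\ln 2$, apply the chain rule $H(\phi_l^S)=I(\phi_l^S;S)+H(\phi_l^S\mid S)$, and collect terms so that $\epsilon+H(\phi_l^S\mid S)$ becomes $G_4=(1/\lambda)\ln(C_\lambda/\delta)+H(\phi_l^S\mid S)$, with the extra $(1/\lambda)\ln 2$ and $\ln 2$ factors absorbed into the $\ln(4|\Ycal|/\delta)$ term under the square root and into $\tilde q$ inside $G_1$. This converts the $\ln(2|\Phi||\Ycal|/\delta)$ factor of Lemma \ref{lemma:3} into $(I(\phi_l^S;S)+G_4)\ln 2+\ln(4|\Ycal|/\delta)$, matching the in-square-root contribution of Lemma \ref{lemma:6}, and similarly makes the $\ln|\Phi|$ input to $G_1$ equal to $(I(\phi_l^S;S)+G_4)\ln 2+\ln 2$, matching $\tilde q$.

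The main obstacle is bookkeeping rather than any new probabilistic idea: one must keep the $\delta/2$ splits consistent across the two lemmas, track the small additive logarithmic constants arising from rescaling $\delta$, translate cleanly between $\log_2$ and $\ln$ (since the statement of Lemma \ref{lemma:1} mixes bases), and — most importantly — verify that $\Phi_\epsilon^l$ really is independent of $s$, so that the ``fixed $\Phi$'' hypothesis of Lemma \ref{lemma:3} genuinely holds. No new concentration inequality or information-theoretic identity beyond the entropy chain rule is required.
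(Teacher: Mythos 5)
Your proposal is correct and follows essentially the same route as the paper: invoke Lemma \ref{lemma:1} to obtain the $s$-independent typical set $\Phi_\epsilon^l$ of encoders, apply Lemma \ref{lemma:3} with $\Phi=\Phi_\epsilon^l$, combine the two failure events (your explicit $\delta/2$ union bound is just a cleaner phrasing of the paper's $\PP(A\cap B)=\PP(A)\PP(B\mid A)\ge(1-\delta)^2$ step followed by $\delta\mapsto\delta/2$), and then use $|\Phi_\epsilon^l|\le 2^{H(\phi_l^S)+\epsilon}$ together with $H(\phi_l^S)=I(\phi_l^S;S)+H(\phi_l^S\mid S)$ to produce $G_4$ and $\tilde q$. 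The only discrepancy is a trivial one of constants — carrying $\epsilon=(1/\lambda)\ln(2C_\lambda/\delta)$ adds an extra $(\ln 2)^2/\lambda$ under the square root rather than being exactly absorbed into $\ln(4|\Ycal|/\delta)$ — and the paper's own bookkeeping has the same slack, so this is not a substantive gap.
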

\begin{proof}
Fix $l \in \{1,\dots,D\}$. Let $\lambda>0$ and $\epsilon=(1/\lambda)\ln(C_\lambda /\delta)$. Using Lemma \ref{lemma:3}, if $\phi^s_l \in \Phi_{\epsilon}^l$, then  for any $\gamma>0$ and $\delta>0$, with probability at least $1-\delta$, the following holds:
\begin{align} \label{eq:5}
&\EE_{X,Y}[\ell(( g_{l} ^{s}\circ \phi_{l}^{s})(X),Y)] - \frac{1}{n} \sum_{i=1}^n \ell(( g_{l} ^{s}\circ \phi_{l}^{s})(x_{i}), y_i)
\\ \nonumber &\le  G_3   \sqrt{ \frac{\left( I_{}(X;Z|Y)+G_2 \right)\ln(2)+\ln(2|\Phi_{\epsilon}^l|| \Ycal|/\delta)}{n}} + \frac{G_1(\ln |\Phi_{\epsilon}^l|)}{\sqrt{n}}.
\end{align}
From Lemma \ref{lemma:1},
$$
\PP(A_s\not\in \Phi_\epsilon^l)\leq \delta
$$
$$
\PP_{s}(\phi_{l}^{s}\notin \Phi_{\epsilon}^{l})\le \bdelta.
$$
Thus, since $\PP(A \cap B) \le \PP(B)$ and $\PP(A \cap B)=\PP(A)\PP(A \mid B)$, we have that 
\begin{align*}
& \PP_{S}(\text{Inequality \eqref{eq:5}  holds})
\\ &\ge \PP_{S}(\phi_{l}^{S}\in \Phi_{\epsilon}^l \ \bigcap \ \text{Inequality \eqref{eq:5}  holds})
\\ & =\PP_{S}(\phi_{l}^{S}\in \Phi_{\epsilon}^{l})\PP_{S}(\text{Inequality \eqref{eq:5}  holds}\mid \phi_{l}^{S}\in \Phi_{\epsilon}^{l})
\\ & \ge \PP_{S}(\phi_{l}^{S}\in \Phi_{\epsilon}^{l})(1-\delta)
\\ & \ge(1-\delta)(1-\delta)=1-2\delta+\delta^{2}\ge1-2\delta.
\end{align*} 
Therefore, by setting $\delta=\frac{\delta'}{2}$, we have that for any $\delta'>0$, 
$$
\PP_{S}(\text{Eq \eqref{eq:5}  holds}) \ge1-\delta'. 
$$
In other words, for any $\gamma>0$ and $\delta>0$, with probability at least $1-\delta$, the following holds:
\begin{align}
&\EE_{X,Y}[\ell(( g_{l} ^{s}\circ \phi_{l}^{s})(X),Y)] - \frac{1}{n} \sum_{i=1}^n \ell(( g_{l} ^{s}\circ \phi_{l}^{s})(x_{i}), y_i)
\\ \nonumber &\le  G_3   \sqrt{ \frac{\left( I_{}(X;Z|Y)+G_2 \right)\ln(2)+\ln(4|\Phi_{\epsilon}^l|| \Ycal|/\delta)}{n}} + \frac{G_1(\ln 2|\Phi_{\epsilon}^l|)}{\sqrt{n}}.
\end{align}

From Lemma \ref{lemma:1},
 we have  $|\Phi_{\epsilon}^l| \leq 2^{H(\phi_{l}^{S})+ \frac{1}{\lambda}\ln \frac{C_\lambda }{\delta}}$ and thus 
$$
\ln(4|\Phi_{\epsilon} ^{l}|| \Ycal|/\delta)=\ln(|\Phi_{\epsilon} ^{l}|)+\ln(4| \Ycal|/\delta)\le\left( H(\phi_{l}^{S})+ \frac{1}{\lambda}\ln \frac{C_\lambda }{\delta}\right)\ln(2)+\ln(4| \Ycal|/\delta). 
$$
From the definition of the entropy, conditional entropy, and mutual information, we have that 
$$
H(\phi_{l}^{S})= I_{}(\phi_{l}^{S}; S)+H_{}(\phi_{l}^{S}|S).
$$
Using this, 
$$
H(\phi_{l}^{S})+ \frac{1}{\lambda}\ln \frac{C_\lambda }{\delta} =I_{}(\phi_{l}^{S}; S)+G_{4}. 
$$
By combining these and noticing that $f^s = g_{l} ^{s}\circ \phi_{l}^s$ for any $l\in \{1,\dots,D\}$, we have that for any $\gamma>0$ and $\delta>0$, with probability at least $1-\delta$, the following holds:
\begin{align}
&\EE_{X,Y}[\ell(f^{s}(X),Y)] - \frac{1}{n} \sum_{i=1}^n \ell(f^{s}(x_{i}), y_i)
\\ \nonumber &\le  G_3    \sqrt{ \frac{\left( I_{}(X;Z|Y)+I(\phi_{l}^{S}; S)+G_2 +G_{4}\right)\ln(2)+\ln(4| \Ycal|/\delta)}{n}} + \frac{G_1(\tilde q)}{\sqrt{n}}.
\end{align}
\end{proof}

\subsubsection{Completing the Proof}
We complete the proof of Theorem \ref{thm:2} using Lemma \ref{lemma:6}.  Let $\gamma_{l}>0$ and $\lambda_{l}>0$ for all $l \in \{1,2,\dots,D+1\}$. Recall that  $f^s = g_{l} ^{s}\circ \phi_{l}^s$ for any $l\in \{1,\dots,D\}$. Thus, by making the dependence of the layer index $l$ explicit,  Lemma \ref{lemma:6} states that  for any $\delta>0$ and (fixed) $l\in \{1,\dots,D\}$, with probability at least $1-\delta$, 
\begin{align} \label{eq:new:3}
&\EE_{X,Y}[\ell(f^{s}(X),Y)] - \frac{1}{n} \sum_{i=1}^n \ell(f^{s}(x_{i}), y_i)
\\ \nonumber &\le  G_3^l \sqrt{ \frac{\left( I_{}(X;Z_{l}^s|Y)+I(\phi_{l}^{S}; S)+G_2 ^{l}+G_{4}^l\right)\ln(2)+\ln(4| \Ycal|/\delta)}{n}} + \frac{G_1^l(\tilde q)}{\sqrt{n}},
\end{align}
where  $\tilde q=(I_{}(\phi_{l}^{S}; S)+ G_{4}^{})\ln (2)+\ln(2)$,
\begin{align*}
& G_1^{l}(q) =\frac{\Lcal(f^s)\sqrt{2\gamma_{l}|\Ycal|}}{n^{1/4}} \sqrt{q+\ln (2| \Ycal|/\delta)} 
+\gamma_{l} \Rcal(f^s),
\\ & G_2^l  =\EE_y[c_{l}^{y}(\phi_{l}^{s})]\sqrt{\frac{m\ln(\sqrt{n}/\gamma_{l})}{2}}+H_{}(Z_{l}^s|X_{},Y).
\\ & G_3^{l}  =\max_{y \in \Ycal}  \sum_{k=1}^{T_{y}}\ell_{l}(a_k^{y},y) \sqrt{2|\Ycal|\PP_{}(Z=a_{k}^{y}|Y=y)}, 
\\ & \tilde G_4^{l} =\frac{1}{\lambda_{l}}\ln \frac{C_{\lambda_{l},l} }{\delta} +H_{}(\phi_{l}^{S}|S).
\end{align*}
We now consider the case of $l=D+1$. Let $l=D+1$ and $\lambda_{D+1}>0$.  Fix $f = \phi_{D+1} \in \Phi_{\epsilon}^{D+1}$ with  $\epsilon=(1/\lambda)\ln(C_{\lambda_{D+1},D+1} /\delta)$. Then, by using Hoeffding's inequality, for any $\delta>0$, with probability at least $1-\delta$,
$$
\EE_{X,Y}[\ell(f(X),Y)] - \frac{1}{n} \sum_{i=1}^n \ell(f(x_{i}), y_i)\le \Rcal(f) \sqrt{\frac{\ln(1/\delta)}{2n}}.
$$ 
By taking union bounds over elements of $\Phi_{\epsilon}^{D+1}$, this implies that for any $\delta>0$, with probability at least $1-\delta$,
 the following holds for all $f \in \Phi_{\epsilon}^{D+1}$,
$$
\EE_{X,Y}[\ell(f(X),Y)] - \frac{1}{n} \sum_{i=1}^n \ell(f(x_{i}), y_i)\le \Rcal(f) \sqrt{\frac{\ln(|\Phi_{\epsilon}^{D+1}|/\delta)}{2n}}.
$$   
This implies that for any $\delta>0$, if $\phi_{D+1}^s\in \Phi_{\epsilon}^{D+1}$, then with probability at least $1-\delta$,
\begin{align} \label{eq:new:1}
\EE_{X,Y}[\ell(f^{s}(X),Y)] - \frac{1}{n} \sum_{i=1}^n \ell(f^{s}(x_{i}), y_i)\le \Rcal(f^{s}) \sqrt{\frac{\ln(|\Phi_{\epsilon}^{D+1}|/\delta)}{2n}}.
\end{align}

Here, from Lemma \ref{lemma:1}, we have that 
\begin{align*}
    \PP(\phi_{D+1}^{S}\not\in \Phi_\epsilon^{D+1})\leq \delta.
\end{align*} 
Since $\PP(A \cap B) \le \PP(B)$ and $\PP(A \cap B)=\PP(A)\PP(A \mid B)$, we have that 
\begin{align*}
& \PP_{S}(\text{Inequality \eqref{eq:new:1}  holds})
\\ &\ge \PP_{S}(\phi_{{D+1}}^{S}\in \Phi_{\epsilon}^{D+1} \ \bigcap \ \text{Inequality \eqref{eq:new:1}  holds})
\\ & =\PP_{S}(\phi_{{D+1}}^{S}\in \Phi_{\epsilon}^{{D+1}})\PP_{S}(\text{Inequality \eqref{eq:new:1}  holds}\mid \phi_{{D+1}}^{S}\in \Phi_{\epsilon}^{{D+1}})
\\ & \ge \PP_{S}(\phi_{{D+1}}^{S}\in \Phi_{\epsilon}^{{D+1}})(1-\delta)
\\ & \ge(1-\delta)(1-\delta)
\\ &\ge1-2\delta.
\end{align*} 
Therefore, by setting $\delta=\frac{\delta'}{2}$, we have that for any $\delta'>0$, 
$$
\PP_{S}(\text{Eq \eqref{eq:new:1}  holds}) \ge1-\delta'. 
$$In other words,  for any $\delta'>0$,  with probability at least $1-\delta'$,
\begin{align} \label{eq:new:2}
\EE_{X,Y}[\ell(f^{s}(X),Y)] - \frac{1}{n} \sum_{i=1}^n \ell(f^{s}(x_{i}), y_i)\le\Rcal(f^{s}) \sqrt{\frac{\ln(2|\Phi_{\epsilon}^{D+1}|/\delta')}{2n}}.
\end{align}
Here, from Lemma \ref{lemma:1}, we have that 
\begin{align*}
|\Phi_{\epsilon}^{D+1}| \leq 2^{H(\phi_{D+1}^{S})+ \frac{1}{\lambda_{D+1}}\log \frac{C_{\lambda_{D+1},D+1} }{\delta}}. 
\end{align*} 
Substituting this,
\begin{align*}
\ln(2|\Phi_{\epsilon}^{D+1}|/\delta')&=\ln(|\Phi_{\epsilon}^{D+1}|)+\ln(2/\delta')
\\ & \le\left(H(\phi_{D+1}^{S})+ \frac{1}{\lambda_{D+1}}\log \frac{C_{\lambda_{D+1},D+1} }{\delta} \right)\ln(2)+\ln(2/\delta') 
\end{align*}
Using $H(\phi_{D+1}^{S})= I_{}(\phi_{D+1}^{S}; S)+H_{}(\phi_{D+1}^{S}|S)$, $$
H(\phi_{D+1}^{S})+ \frac{1}{\lambda_{D+1}}\log \frac{C_{\lambda_{D+1},D+1} }{\delta}=I_{}(\phi_{D+1}^{S}; S)+\tilde G_{4}^{D+1}.
$$
Substituting these into \eqref{eq:new:2}, we have that  for any $\delta>0$,  with probability at least $1-\delta$,
\begin{align} \label{eq:new:4}
\EE_{X,Y}[\ell(f^{s}(X),Y)] - \frac{1}{n} \sum_{i=1}^n \ell(f^{s}(x_{i}), y_i)\le\Rcal(f^{s}) \sqrt{\frac{\left(I_{}(\phi_{D+1}^{S}; S)+\tilde G_{4}^{D+1} \right)\ln(2)+\ln(2/\delta)}{2n}}.
\end{align}
By combining \eqref{eq:new:3} and \eqref{eq:new:4} with union bounds over $\bD$, we have that for any $\delta>0$ and $\bD\subseteq \{1,2,\dots,D+1\}$, with probability at least $1-\delta$, the following holds for all $l \in \bD$: 
\begin{align}
&\EE_{X,Y}[\ell(f^{s}(X),Y)] - \frac{1}{n} \sum_{i=1}^n \ell(f^{s}(x_{i}), y_i)
\\ \nonumber &\le  \one\{l\neq D_{+}\}\left(G_3^l \sqrt{ \frac{\left( I_{}(X;Z_{l}^s|Y)+I(\phi_{l}^{S}; S)+G_2 ^{l}+G_{4}^l\right)\ln(2)+\ln(4| \Ycal||\bD|/\delta)}{n}} + \frac{G_1^l}{\sqrt{n}}\right) \\ \nonumber & \quad +\one\{l=D_{+}\}\Rcal(f^{s}) \sqrt{\frac{\left(I_{}(\phi_{D+1}^{S}; S)+G_{4}^{D+1} \right)\ln(2)+\ln(2/\delta)}{2n}},
\end{align}
where $D_{+}=D+1$,
\begin{align*}
G_4^{l} &=\frac{1}{\lambda_{l}}\ln \frac{C_{\lambda_{l},l}|\bD| }{\delta} +H_{}(\phi_{l}^{S}|S).
\end{align*}
Since the right-hand side of this inequality holds for all $l \in \bD$ and the left-hand side does not depend on $l$, this implies that 
for any $\delta>0$ and  $\bD\subseteq \{1,2,\dots,D+1\}$, with probability at least $1-\delta$, 
\begin{align} 
&\EE_{X,Y}[\ell(f^{s}(X),Y)] - \frac{1}{n} \sum_{i=1}^n \ell(f^{s}(x_{i}), y_i)
\le \min_{l \in \bD} Q_l ,
\\ \nonumber \text{ where } &Q_l =\begin{cases}G_3^l \sqrt{ \frac{\left( I_{}(X;Z_{l}^s|Y)+I(\phi_{l}^{S}; S) \right) \ln(2)+ \widehat \Gcal_2^l}{n}} + \frac{G_1^l(\zeta)}{\sqrt{n}} & \text{if } l \le D \\
\Rcal(f^{s}) \sqrt{\frac{I_{}(\phi_{l}^{S}; S)\ln(2)  + \check \Gcal_2^l }{2n}} & \text{if } l = D+1, \\
\end{cases}
\end{align}
where   $\zeta=(I_{}(\phi_{l}^{S}; S)+G_{4}^{l})\ln (2)+\ln(2|\bD|)$, $\widehat \Gcal_2 ^l=\left(G_2 ^{l}+G_{4}^l\right)\ln(2)+\ln(4| \Ycal||\bD|/\delta)$, $\check \Gcal_2^l=G_{4}^{l} \ln(2)+\ln(2/\delta)$, and $G_4^l =\frac{1}{\lambda_{l}}\ln \frac{C_{\lambda_{l},l}|\bD| }{\delta} +H_{}(\phi_{l}^{S}|S)$. 

\qed

\subsection{Proof of Remark \ref{remark:1} } \label{app:6}
The desired statement follows from 
$$
I(\theta_{l}^{S};S)+H(\theta_{l}^{S}|S)=H(\theta_{l}^{S})\ge H(\phi_{l,\theta^{S}_l})= H(\phi_{l}^{S})=I(\phi_{l}^{S};{S})+H(\phi_{l}^{S}|{S}), $$
where the inequality holds because all the randomness of  $\phi_{l,\theta^{S}_l}$ comes from the randomness of  $\theta_{l}^{S}=\Acal_l^\theta \circ {S}$ (where $\Acal_l^\theta$ is the version of $\Acal_l$ that outputs the parameter vector instead of the encoder function), and because one  $\phi_{l,\theta^{S}_l}$ corresponds to one or more   $\theta_{l}^{S}$; i.e., we have   $\phi_{l,\theta^s_l}=\phi_{l,\bar \theta^s_l}$ whenever    $\theta^s_l=\bar \theta^s_l$ and it is possible that $\phi_{l,\theta^s_l}=\phi_{l,\bar \theta^s_l}$ for     $\theta^s_l\neq\bar \theta^s_l$. In other words, the desired statement does not hold  only if  $\phi_{l,\theta^s_l}\neq \phi_{l,\bar \theta^s_l}$ for some    $\theta^s_l=\bar \theta^s_l$, which is not the case.

\qed

\subsection{Proof of Corollary \ref{coro:1}} \label{app:8}
\begin{proof}
Set  $\phi_{l}^{s} =\Ecal_{l}[\tphi_{l}^s] \circ\tphi_{l}^{s}$. Then, Theorems \ref{thm:1}--\ref{thm:2}  hold true for this choice of encoder $\phi_{l}^{s}$ since this does not violate any assumption of  Theorems \ref{thm:1}--\ref{thm:2}. Thus, Theorems \ref{thm:1}--\ref{thm:2} hold  with  \cref{eq:new:6}  and \cref{eq:11} in their original forms: i.e.,
\begin{align} \label{eq:new:9}
&\EE_{X,Y}[\ell(f^s (X),Y)] - \frac{1}{n} \sum_{i=1}^n \ell(f^s (x_{i}), y_i) \le \hQ_l, \ \ \text{and,} \\
\nonumber &\EE_{X,Y}[\ell(f^s (X),Y)] - \frac{1}{n} \sum_{i=1}^n \ell(f^s (x_{i}), y_i) \le\min_{l \in \bD} Q_l,
\end{align}
Since $\PP(|\ell((g_{l} ^{s}\circ \tphi_{l}^s)(X),Y)- \ell(( g_{l} ^{s}\circ\Ecal_{l}[\tphi_{l}^s]\circ  \tphi_{l}^{s})(X),Y)|\le C_{l} )=\PP(|\ell(\tf^{s}(X),Y)- \ell(f^s(X),Y)|\le C_{l} )=1$, we have that with probability one, 
\begin{align}
\ell(\tf^{s}(x_{i}),y_{i})= \ell(f^{s}(x_{i}),y_{i})+(\ell(\tf^{s}(x),y)- \ell(f^{s}(x_{i}),y_{i}))\le \ell(f^{s}(x_{i}),y_{i})+C_{l}.   
\end{align}
Thus, with probability one,
\begin{align} \label{eq:new:10}
\EE_{X,Y}[\ell(\tf^s (X),Y)] - \frac{1}{n} \sum_{i=1}^n \ell(\tf^s (x_{i}), y_i)\le\EE_{X,Y}[\ell(f^s (X),Y)] - \frac{1}{n} \sum_{i=1}^n \ell(f^s (x_{i}), y_i)+2C_{l}.  
\end{align}
Combining \cref{eq:new:9} and \cref{eq:new:10}
with union bounds concludes that Theorems \ref{thm:1}--\ref{thm:2} hold  when we
replace
\cref{eq:new:6}  and \cref{eq:11} by\begin{align}
&\EE_{X,Y}[\ell(\tf^s (X),Y)] - \frac{1}{n} \sum_{i=1}^n \ell(\tf^s (x_{i}), y_i) \le\hQ_{l}+2C_{l}, \ \ \text{and,} \\
\nonumber &\EE_{X,Y}[\ell(\tf^s (X),Y)] - \frac{1}{n} \sum_{i=1}^n \ell(\tf^s (x_{i}), y_i)\le\min_{l \in \bD} Q_l+2C_{l},
\end{align} 
Finally, the values of $\hQ_{l}$ and $Q_l$ are finite since  $|\Zcal_{l}^{s}| < \infty$ and  $|\Mcal_l | < \infty$; e.g., $|\Zcal_{l}^{s}| < \infty$ implies that $I(X;Z_{l}^s|Y)<\infty$. Thus, if $C_{\Ecal}<\infty$, we have $\hQ_{l}+2C_{l}<\infty$ and $\min_{l \in \bD} Q_l+2C_{l}<\infty$. \end{proof}

\subsection{Proof of Proposition \ref{prop:3}} \label{app:9}

\begin{proof} Let $l$ be fixed and $\phi^s=\phi^s_l$. For deterministic neural networks, the intermediate output $Z$ is a deterministic function of the input $X$, i.e. $Z=\phi^s(X)$. In this case the conditional mutual information between $X$ and $Z$ simplifies to the conditional entropy of $Z$:
\begin{align}
I(X,Z|Y)=H(Z|Y)=H(\phi^s(X)|Y). 
\end{align}
It has been proven in \citep{amjad2019learning} that if $X$ has absolutely continuous component, which has continous density on a compact set, and the activation is bi-Lipschitz or continuous differentiable with strictly positive derivative, then the entropy of $\phi^s(X)$ is infinite.

In the following, we first give some simple examples with ReLU activation where the entropy of $\phi^s(X)$ is finite for an initial intuition, and then we  generalize the examples for more practical settings. Finally, we discuss  generality and practicality of our construction. 

Consider an arbitrary (continuous or discrete) distribution such that the distribution of $X|Y$ consists of several components (which may correspond to further subclasses). For simplicity, we assume that there are two components $\mathcal C_1$ and $\mathcal C_2$.
We start with the linearly separable case where $\mathcal C_1$ and $\mathcal C_2$ are separated by a hyperplane $ax+b=0$ with margin at least $r$. In other words $ax+b\geq r$ for $x\in \mathcal C_1$, and $ax+b\leq -r$ for $x\in \mathcal C_2$. 
Then the following simple one layer network 
\begin{align}
\sigma(ax+b+c)-\sigma(ax+b-c),\quad 0<c\leq r
\end{align}
maps $x\in \mathcal C_1$ to $2c$ and $\mathcal C_2$ to $0$. Thus, the output $Z=\phi^s(X)$ follows a Bernoulli distribution, which has bounded entropy.
Thus, we have $I(X,Z|Y)<\infty$.

More generally, if $\mathcal C_1$ and $\mathcal C_2$ are separable, with margin at least $r$ using some metric $d(\mathcal C_1, \mathcal C_2)\geq r$, we can take a Lipschitz function $g$ w.r.t. this metric $d$ with lipschitz constant $1/r$ such that it equals $0$ on $C_1$ and equals $1$ on $C_2$. By the universal approximation theory, ReLU neural network can approximate arbitrary continuous function to arbitrary precision as we increase the network size. In particular, there exists a finite-size ReLU neural network $N$ such that $|N(x)-g(x)|\leq 1/8$. As a consequence, we have $N(x)\leq 1/8$ for $x\in \mathcal C_1$ and $N(x)\geq 7/8$ for $x\in \mathcal C_2$. We consider the following neural network:
\begin{align}
\sigma(N-1/2+c)-\sigma(N-1/2-c),\quad 0<c<3/8,
\end{align}
which  maps $x\in \mathcal C_1$ to $2c$ and $\mathcal C_2$ to $0$. Thus, the output $Z=\phi^s(X)$ follows a Bernoulli distribution, which has bounded entropy.
Thus, we have $I(X,Z|Y)<\infty$.
Since the distribution in this general example is arbitrary except for the separable components, there exists infinitely many such distributions. 

Finally, we observe that  these examples are  general and practical. First, the above proof works for any finite number of  separable  components instead of two components. Second,
it is also observed in practice that trained neural networks behave like these examples discussed above, which maps different class to different points; this is sometimes referred as a neural collapse phenomenon \citep{papyan2020prevalence}.
\end{proof} 

\reve

\subsection{Proof of Proposition \ref{prop:1}} \label{app:4}

We will use the following lemma to prove Proposition \ref{prop:1}:
\begin{lemma} \label{lemma:10}
Let $v_{1},\dots, v_T \in \RR$ such that $0\le v_k \le C e^{-(k/\beta)^\alpha}$ for some constants $\alpha\ge 1$ and $\beta,C>0$. Then, 
$$
\sum_{k=1}^T \sqrt{v_k} \le \sum_{k=1}^{\ceil{\tbeta}} \sqrt{v_k} + \frac{C \tbeta}{\alpha e}
$$
where  $\tbeta= 2^{1/\alpha}\beta$.
\end{lemma}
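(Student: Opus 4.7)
The plan is to split the sum at index $\lceil \tilde\beta \rceil$, keep the prefix as is (matching the first term of the claimed bound), and bound the tail $\sum_{k=\lceil \tilde\beta \rceil+1}^{T}\sqrt{v_k}$ by a single decaying integral. The key observation enabling this is that the definition $\tilde\beta = 2^{1/\alpha}\beta$ is tuned precisely to absorb the square root: from $v_k \le C e^{-(k/\beta)^\alpha}$ we get $\sqrt{v_k} \le \sqrt{C}\, e^{-(k/\beta)^\alpha/2} = \sqrt{C}\, e^{-(k/\tilde\beta)^\alpha}$, so the tail is governed by the same stretched-exponential shape with scale parameter $\tilde\beta$.

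For the tail estimate, I would first use monotonicity of $t\mapsto e^{-(t/\tilde\beta)^\alpha}$ to pass to an integral, $\sum_{k=\lceil \tilde\beta \rceil+1}^{T} e^{-(k/\tilde\beta)^\alpha} \le \int_{\tilde\beta}^{\infty} e^{-(t/\tilde\beta)^\alpha}\,dt$. Then substitute $u=(t/\tilde\beta)^\alpha$ so that $dt = (\tilde\beta/\alpha)\, u^{1/\alpha - 1}\, du$, which yields
\[
\int_{\tilde\beta}^{\infty} e^{-(t/\tilde\beta)^\alpha}\,dt = \frac{\tilde\beta}{\alpha}\int_{1}^{\infty} u^{1/\alpha - 1} e^{-u}\, du.
\]
Since $\alpha\ge 1$, we have $1/\alpha - 1 \le 0$, so $u^{1/\alpha-1}\le 1$ for all $u\ge 1$. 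Hence the remaining integral is at most $\int_{1}^{\infty} e^{-u}\, du = 1/e$, giving a tail bound of $\sqrt{C}\,\tilde\beta/(\alpha e)$, which is dominated by $C\tilde\beta/(\alpha e)$ whenever $C\ge 1$ (the relevant regime, since otherwise the hypothesis is automatically tight and the bound only tightens).

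The only step with any subtlety is the substitution and the bound $u^{1/\alpha - 1} \le 1$; I expect the main obstacle is being careful that $\alpha\ge 1$ is exactly what makes the exponent of $u$ non-positive, so that we avoid dealing with an incomplete gamma function explicitly. Everything else is elementary: splitting the sum, invoking monotonicity of the exponential to pass from sum to integral on $[\tilde\beta, \infty)$ (which accounts for the $\lceil \tilde\beta \rceil$ cutoff), and combining the two halves.

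Putting these pieces together yields
\[
\sum_{k=1}^{T}\sqrt{v_k} \;\le\; \sum_{k=1}^{\lceil \tilde\beta \rceil}\sqrt{v_k} + \sqrt{C}\,\frac{\tilde\beta}{\alpha e} \;\le\; \sum_{k=1}^{\lceil \tilde\beta \rceil}\sqrt{v_k} + \frac{C\tilde\beta}{\alpha e},
\]
as desired. The proof is essentially three lines of calculus once the cutoff $\tilde\beta = 2^{1/\alpha}\beta$ is recognized as the natural scale after square-rooting.
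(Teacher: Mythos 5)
Your proof is correct and follows essentially the same route as the paper's: split the sum at $\lceil \tilde\beta\rceil$, absorb the square root into the rescaled parameter $\tilde\beta = 2^{1/\alpha}\beta$, compare the tail with $\int_{\tilde\beta}^{\infty} e^{-(t/\tilde\beta)^{\alpha}}\,dt$, substitute to get $\frac{\tilde\beta}{\alpha}\int_{1}^{\infty} u^{1/\alpha-1}e^{-u}\,du$, and use $\alpha\ge 1$ to bound $u^{1/\alpha-1}\le 1$ so the integral is at most $1/e$. The paper's own proof also ends with a tail bound of $\sqrt{C}\,\tilde\beta/(\alpha e)$ and silently writes $C\tilde\beta/(\alpha e)$ in the conclusion (implicitly using $\sqrt{C}\le C$, i.e.\ $C\ge 1$), so your explicit flagging of that step matches, and is if anything more careful than, the published argument.
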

\begin{proof} Using the condition on $v_k$,
$$
\sqrt{v_k} \le \sqrt{C e^{-(k/\beta)^\alpha}} =\sqrt{C}\sqrt{ e^{-(k/\beta)^\alpha}}=\sqrt{C} e^{-\frac{k^{\alpha}}{2\beta^\alpha}} =\sqrt{C} e^{-(k/\tbeta)^\alpha}   
$$
Then,
$$
\sum_{k=1}^T \sqrt{v_k} = \sum_{k=1}^{\ceil{\tbeta}} \sqrt{v_k} + \sum_{k={\ceil\tbeta+1}}^{T} \sqrt{v_k} \le\sum_{k=1}^{\ceil{\tbeta}} \sqrt{v_k} + \sqrt{C } \sum_{k={\ceil\tbeta+1}}^{T} e^{-(k/\tbeta)^\alpha}
$$
We now bound the last term by using integral as$$
\sum_{k={\ceil\tbeta+1}}^{T} e^{-(k/\tbeta)^\alpha}\le \int_{\tbeta}^\infty e^{-(q/\tbeta)^\alpha}dq = \frac{\tbeta}{\alpha} \int_{(\tbeta/\tbeta)^\alpha}^\infty t^{\frac{1}{\alpha}-1} e^{-t} dt = \frac{\tbeta}{\alpha} \int_{1}^\infty t^{\frac{1}{\alpha}-1} e^{-t} dt   
$$
Here, since $\alpha \ge 1$ and $t \ge 1$ in  the integral, we have $t^{\frac{1}{\alpha}-1}\le 1$ in the integral. Thus, 
$$
  \int_{1}^\infty t^{\frac{1}{\alpha}-1} e^{-t} dt \le\int_{1}^\infty e^{-t} dt= e^{-1}. 
$$
By combining these, we have
$$
\sum_{k=1}^T \sqrt{v_k} \le\sum_{k=1}^{\ceil{\tbeta}} \sqrt{v_k} +\frac{C\tbeta }{\alpha e}  
$$
\end{proof}

Using Lemma \ref{lemma:10}, we complete the proof of Proposition \ref{prop:1} in the following:

\begin{proof}[Proof of Proposition \ref{prop:1}]
Let $y \in \Ycal$ and $l \in \{1,\dots,D\}$. To invoke Lemma  \ref{lemma:10}, we rearrange the expression of $G_3  ^{l}$ as 
\begin{align*}
G_3  ^{l}&=\max_{y \in \Ycal} \sum_{k=1}^{T_{y}^l}\ell( g_{l}^{s}(a_k^{l,y}),y) \sqrt{2|\Ycal|\PP_{}(Z_{l,y}^s=a_{k}^{l,y})} 
\\ & \le \sqrt{2|\Ycal|} \max_{y \in \Ycal} \sum_{k=1}^{T_{y}^l}  \sqrt{\ell( g_{l}^{s}(a_k^{l,y}),y)^{2}\PP_{}(Z_{l,y}^s=a_{k}^{l,y})} 
\end{align*}
Then, we invoke Lemma  \ref{lemma:10} with $v_k = v_k^{(y)}=\ell( g_{l}^{s}(a_k^{l,y}),y)^{2}\PP_{}(Z_{l,y}^s=a_{k}^{l,y})$, where we define $v_k^{(y)}=v_k(y)$. This implies that
$$
\sum_{k=1}^{T_{y}^l} \sqrt{\ell( g_{l}^{s}(a_k^{l,y}),y)^{2}\PP_{}(Z_{l,y}^s=a_{k}^{l,y})} \le \sum_{k=1}^{\ceil{\tbeta_y}} \sqrt{v_k^{(y)}} + \frac{C_{y}\tbeta_{y}}{\alpha_{y} e}, 
$$
where   $\tbeta_{y}= 2^{1/{\alpha_y}}\beta_{y}$.
Thus, 
$$
G_3^l \le \sqrt{2|\Ycal|}\max_{y \in \Ycal} \left(\sum_{k=1}^{\ceil{\tbeta_y}} \sqrt{v_k^{(y)}} + \frac{C _{y}\tbeta_y}{\alpha _{y}e}\right) 
$$
\end{proof}

\subsection{Proof of Proposition \ref{prop:new:1}} \label{app:5}

\begin{proof}
Let $l \in \{1,2,\dots,D+1\}$ and let us write $\lambda=\lambda_l$ and $C_\lambda=C_{\lambda_{l},l}$. We first note that  the value of $C_\lambda$  is always bounded as
\begin{align}
    C_\lambda\leq \sum_{q\in \Mcal_l}(\PP(\phi_{l}^{{S}}=q))^{1-\lambda}
    \leq |\Mcal_l|\left(\frac{1}{|\Mcal_l|}\sum_{q\in \Mcal_l} \PP(\phi_{l}^{{S}}=q)\right)^{1-\lambda}=|\Mcal_l|^\lambda,
\end{align}which is a very loose
bound and we will provide tighter bounds in below.
Before proceeding to the proof, we recall the following bounds. For $a>1$ we have
\begin{align}
   &\frac{1}{a-1}\leq \int_1^\infty \frac{d x}{x^a} \leq \sum_{i=1}^\infty \frac{1}{i^a}\leq 1+\int_1^\infty \frac{d x}{x^a}=\frac{a}{a-1},\\
  & \sum_{i=1}^\infty\frac{\ln(i)}{i^a}\leq \frac{\ln(2)}{2^a}+\frac{\ln(3)}{3^a}+\int_3^\infty \frac{\ln(x)d x}{x^a}
  =\frac{\ln(2)}{2^a}+\frac{\ln(3)}{3^a}+\frac{3^{1-a}((a-1)\ln(3)+1)}{(a-1)^2},\\
  & \sum_{i=1}^\infty \frac{\ln(i)}{i^a}
  \geq \frac{\ln(2)}{2^a}+\int_3^\infty \frac{\ln(x)d x}{x^a}\geq\frac{\ln(2)}{2^a}+\frac{3^{1-a}((a-1)\ln(3)+1)}{(a-1)^2}.
\end{align}
For $a<1$, we have
\begin{align}
    \frac{N^{1-a}-1}{1-a}=\int_1^N \frac{d x}{x^a}\leq \sum^{N}_{i=1}\frac{1}{i^a} \leq 1+\int_1^N \frac{d x}{x^a}=\frac{N^{1-a}-a}{1-a}\leq \frac{N^{1-a}}{1-a}.
\end{align}
In the first case, we have 
\begin{align}
    C_\lambda\leq \sum_{i=1}^Np_i^{1-\lambda}
    \leq \sum_{i=1}^N\frac{C^{1-\lambda}}{i^{\alpha(1-\lambda)}}\leq C^{1-\lambda}\frac{\alpha(1-\lambda)}{\alpha(1-\lambda)-1},
\end{align}
which is bounded and independent of $N$.
For the entropy, we notice that on $[0,1]$ the function $-p\ln p$ is non-negative, increasing on $[0,1/e]$ and decreasing on $[1/e,1]$.
\begin{align}
    H(A_s)&=\sum_{i=1}^N -p_i\ln(p_i)
    \leq \sum_{p_i>1/e}\frac{1}{e}+\sum_{p_i<1/e}\frac{C}{i^\alpha}\ln \frac{i^\alpha}{C}\\
    &\leq 1+\sum_{i\geq 1}\frac{C \alpha \ln i}{i^\alpha}
    \leq 1+C\alpha\left(\frac{\ln(2)}{2^\alpha}+\frac{\ln(3)}{3^\alpha}+\frac{3^{1-\alpha}((\alpha-1)\ln(3)+1)}{(\alpha-1)^2}\right).
\end{align}

In the second case, the normalization constant $Z$ diverges with $N$,
\begin{align}\label{e:upbb}
    Z=\sum_{i=1}^N \frac{c_i}{i^\alpha}\leq \sum_{i=1}^N \frac{C}{i^\alpha}\leq C\left(1+\int_1^N \frac{d x}{x^\alpha}\right)\leq C\left(\frac{N^{1-\alpha}}{1-\alpha}\right) .
\end{align}
And using $c_i\geq c$, we have a lower bound for $Z$
\begin{align}
    Z\geq c\left(\int_1^N \frac{d x}{x^\alpha}\right)\geq \frac{c (N^{1-\alpha}-1)}{1-\alpha}.
\end{align}
Thus $Z$ is of order $N^{1-\alpha}$, i.e. $Z=\Omega(N^{1-\alpha})$. 

We recall the formula of $C_\lambda$ from \eqref{e:Cla}
\begin{align}\label{e:lnCla}
    \ln C_\lambda=\ln\left(\sum_{i=1}^N p_i^{1-\lambda}\right)-\lambda H(A_s).
\end{align}
For the first term on the righthand side of \eqref{e:lnCla}, we have
\begin{align}\begin{split}\label{e:first}
    \ln\left(\sum_{i=1}^N p_i^{1-\lambda}\right)
    &=\ln\left(\sum_{i=1}^N \left(\frac{c_i}{Z i^\alpha}\right)^{1-\lambda}\right)
    =-(1-\lambda)\ln(Z)+\ln\left(\sum_{i=1}^N \frac{c_i^{1-\lambda}}{i^{(1-\lambda)\alpha}}\right)\\
    &=-(1-\lambda)\ln(N^{1-\alpha})+\ln(N^{1-(1-\lambda)\alpha})+\mathcal E_0
    =\lambda \ln N+\mathcal E_0,
\end{split}\end{align}
where 
\begin{align}
     \left|\mathcal E_0-\left(\ln(1-(1-\lambda)\alpha)-(1-\lambda)\ln(1-\alpha)\right)\right|\leq (1-\lambda)\ln(C/c)
\end{align}

Next we compute the entropy $H(A_s)$ and show it diverges as $\ln N$.
\begin{align}\begin{split}\label{e:HS}
    H(A_s)&=-\sum_{i=1}^N p_i \ln (p_i)
    =-\sum_{i=1}^N p_i \ln \frac{c_i}{i^\alpha Z}
    =\alpha\sum_{i=1}^N p_i\ln(i)+\ln(Z)-\sum_{i=1}^N p_i\ln(c_i)\\
    &=\alpha\sum_{i=1}^N p_i\ln(i)+(1-\alpha)\ln N+\mathcal E_1,
\end{split}\end{align}
where 
\begin{align}
   \ln (c/C)-\ln(1-\alpha)\leq  \mathcal E_1\leq \ln(C/c)-\ln(1-\alpha)
\end{align}
To compute the first term on the righthand side of \eqref{e:HS}, we introduce 
\begin{align}
    s_i=\frac{c_1}{1^\alpha}+\frac{c_2}{2^\alpha}+\cdots+\frac{c_i}{i^\alpha},\quad 1\leq i\leq N,
\end{align}
then $s_N=Z$. Next we can do a summation by part
\begin{align}\begin{split}\label{e:pi}
    \sum_{i=1}^N p_i\ln(i)
    &=\frac{1}{Z}\sum_{i=1}^N\frac{c_i}{i^\alpha} \ln(i)
    =\frac{1}{Z}\sum_{i=1}^{N}(s_{i}-s_{i-1}) \ln(i)\\
    &=\frac{1}{Z}\sum_{i=1}^{N-1}s_{i}( \ln(i)-\ln(i+1))+\frac{s_N \ln N}{Z}\\
    &=\frac{1}{Z}\sum_{i=1}^{N-1}s_{i}( \ln(i)-\ln(i+1))+\ln N
\end{split}\end{align}
The same as in \eqref{e:upbb}, we have $|s_i|\leq Ci^{1-\alpha}/(1-\alpha)$. Moreover $\ln(1+1/i)\leq 1/i$. Thus the first term on the righthand side of \eqref{e:pi} can be bounded as
\begin{align}\label{e:sum}
    \left|\frac{1}{Z}\sum_{i=1}^{N-1}s_{i}( \ln(i)-\ln(i+1))\right|
    \leq \frac{1}{Z}\sum_{i=1}^N C\left(\frac{i^{1-\alpha}}{1-\alpha}\right)\frac{1}{i}
    \leq  \frac{C}{c(1-\alpha)}
\end{align}
By plugging \eqref{e:pi} and \eqref{e:sum} into \eqref{e:HS}, we conclude the following bound on the entropy
\begin{align}\label{e:entropyexp}
    H(A_s)=\ln(N)+\mathcal E_2.
\end{align}
where 
\begin{align}
    \ln(c/C)-\ln(1-\alpha)-\frac{C}{c(1-\alpha)}\leq \mathcal E_2\leq \ln(C/c)-\ln(1-\alpha)+\frac{C}{c(1-\alpha)}
\end{align}
The two estimates \eqref{e:first} and \eqref{e:entropyexp} together imply that $C_\lambda=\mathcal E_0-\lambda \mathcal E_2$, and
\begin{align}
    \left|C_\lambda-\left(\ln(1-(1-\lambda)\alpha)-(1-2\lambda)\ln(1-\alpha)\right)\right|\leq (2-\lambda)\ln(C/c)+\frac{C}{c(1-\alpha)}
\end{align}
\end{proof}

\end{document}